\definecolor {processblue}{cmyk}{0.96,0,0,0}
\tikzstyle{int}=[draw, fill=blue!20, minimum size=2em]
\tikzstyle{init} = [pin edge={to-,thin,black}]
\pgfplotsset{compat=1.14}
\theoremstyle{plain}
\newtheorem{thm}{Theorem}[section]
\newtheorem{lem}{Lemma}[section]
\newtheorem{rem}{Remark}[section]
\theoremstyle{definition}
\newtheorem{defn}{Definition}[section]
\newtheorem{exam}{Example}
\newcommand{\ex}[2]{{\ifx&#1& \mathbb{E} \else {\mathbb{E}_{#1}} \fi \left[#2\right]}}
\newcommand{\pr}[1]{\left(#1\right)}
\newcommand{\abs}[1]{\left|#1\right|}
\newcommand{\kl}[2]{\mathrm{D_{KL}}\left(#1 ||#2 \right)}
\title{On \texorpdfstring{$f$}~-Divergence Principled Domain Adaptation: An Improved Framework}
\author{%
Ziqiao Wang \\
  Tongji University\\
  Shanghai, China \\
  \texttt{ziqiaowang@tongji.edu.cn} \\
  \And
  Yongyi Mao \\
  University of Ottawa \\
  Ottawa, Canada \\
  \texttt{ymao@uottawa.ca} \\
}
\begin{document}

\maketitle

\begin{abstract}
Unsupervised domain adaptation (UDA) plays a crucial role in addressing distribution shifts in machine learning. In this work, we improve the theoretical foundations of UDA proposed in \citeauthor{acuna2021f} (\citeyear{acuna2021f}) by refining their $f$-divergence-based discrepancy and additionally introducing a new measure, $f$-domain discrepancy ($f$-DD). By removing the absolute value function and incorporating a scaling parameter, $f$-DD obtains novel target error and sample complexity bounds, allowing us to recover previous KL-based results and bridging the gap between algorithms and theory presented in \citeauthor{acuna2021f} (\citeyear{acuna2021f}). Using a localization technique, we also develop a fast-rate generalization bound. Empirical results demonstrate the superior performance of $f$-DD-based learning algorithms over previous works in popular UDA benchmarks.
\end{abstract}
\section{Introduction}
Machine learning often faces the daunting challenge of domain shift, where the distribution of data in the testing environment differs from that used in training. \textit{Unsupervised Domain Adaptation} (UDA) arises as a solution to this problem. In UDA, models are allowed to access to labelled source domain data and unlabelled target domain data, while the ultimate goal is to find a model that performs well on the target domain.
The mainstream theoretical foundations of UDA, and more broadly, domain adaptation \cite{QuioneroCandela2009DatasetSI}, 
primarily rely on the seminal works of discrepancy-based theory \cite{ben2006analysis,ben2010theory}. In particular, \cite{ben2006analysis,ben2010theory} characterize the error gap between two domains using a hypothesis class-specified discrepancy measure e.g., $\mathcal{H}\Delta\mathcal{H}$-divergence. While these works initially focus on binary classification tasks and zero-one loss, \cite{MansourMR09} extend the theory to a more general setting. Subsequently, this kind of theoretical framework was extended by various works \cite{cortes2015adaptation,germain2013pac,germain2016new,zhang2019bridging,zhang2020localized,acuna2021f,shui2022novel}, all sharing some common properties such as the ability to estimate the proposed discrepancy from finite unlabeled samples.
Importantly, these theoretical results often inspire the design of new algorithms, such as domain-adversarial training of neural networks (DANN) \cite{ganin2016domain} and Margin Disparity Discrepancy (MDD) \cite{zhang2019bridging}, directly motivated by $\mathcal{H}\Delta\mathcal{H}$-divergence. 

Recently, \cite{acuna2021f} proposes an $f$-divergence-based domain learning framework, which generalizes various previous frameworks (e.g., those based on $\mathcal{H}\Delta\mathcal{H}$-divergence) 
and have demonstrated great empirical successes. 
However, we argue that this framework has potential limitations, at least in three aspects.

First, their discrepancy measure is based on the variational representation of $f$-divergence in \cite{nguyen2010estimating} (cf.~Lemma~\ref{lem:weak-variational}). Although this variational formula is commonly adopted, its weakness has been pointed out in several works
\cite{ruderman2012tighter,jiao2017dependence}. For example, from this formula, one cannot recover the well-known Donsker and Varadhan's (DV) representation of KL divergence \cite{donsker1983asymptotic}. This reveals a second limitation: some existing domain adaptation or transfer learning theories are based on the DV representation of KL, such as \cite{wu2020information,bu2022characterizing,wang2023informationtheoretic,kong2024unsupervised}, and the framework by \cite{acuna2021f}, although including KL as a special case of $f$-divergence, fail to unify those theories.
Furthermore and more critically, the discrepancy measure proposed by \cite{acuna2021f} contains an absolute value function, while the original variational representation does not. Notably, this absolute value function is necessary in their derivation of a target error upper bound, and the fundamental reason behind this is still due to the weak version of the variational representation relied upon. Specifically, a variational representation of an $f$-divergence is a lower bound of the divergence, but using a weak lower bound creates technical difficulties in proving an upper bound for target domain error. \cite{acuna2021f} chooses to add an absolute value function accordingly, potentially leading to an overestimation of the corresponding $f$-divergence. In fact, this absolute value function is removed in their proposed algorithm, termed $f$-Domain Adversarial Learning ($f$-DAL). While $f$-DAL outperforms both DANN and MDD in standard benchmarks, this choice exhibits a clear gap between their theory and algorithm.

In this work, to overcome these limitations and explore the full potential of $f$-divergence, we present an improved framework for $f$-divergence-based domain learning theory. Specifically, we apply a more advanced variational representation of $f$-divergence (cf.~Lemma~\ref{lem:strong-variational}), independently developed by \cite{agrawal2020optimal,agrawal2021optimal} and \cite{birrell2022optimizing}. After introducing some preliminaries, the organization of the remainder of the paper and our main contributions are summarized below.

\begin{itemize}
[leftmargin=.2cm,topsep=-0cm, parsep=-0cm,labelsep=-0.3cm,align=left]
    \item In Section~\ref{sec:warmup}, we revisit the theoretical analysis in \cite{acuna2021f}, where we refine their $f$-divergence-based discrepancy by Lemma~\ref{lem:strong-variational} while retaining the absolute value function in the definition. The resulting target error bound (cf.~Lemma~\ref{lem:fdd-ppbound}) and the KL-based generalization bound (cf.~Theorem~\ref{thm:kldd-pebound}) complement the theoretical framework of \cite{acuna2021f}.

    \item In Section~\ref{sec:new-theory}, we design a novel $f$-divergence-based domain discrepancy measure, dubbed $f$-DD. Specifically, we eliminate the absolute value function from the definition used in Section~\ref{sec:warmup}, incorporating a scaling parameter instead. We then derive an upper bound for the target error (cf.~Theorem~\ref{thm:fdd-error}) and the sample complexity bound for our $f$-DD. The generalization bound based on empirical $f$-DD naturally follows from these results. 
    Notably, the obtained target error bound allows us to recover the previous KL-based result in \cite{wang2023informationtheoretic} (cf. Corollary~\ref{cor:kl-err-bound}).

    \item In Section~\ref{sec:localization}, to improve the convergence rate of our $f$-DD-based bound, we sharpen the bound using a localization technique \cite{bartlett2005local,zhang2020localized}. The localized $f$-DD allows for a crisp application of the local Rademacher complexity-based concentration results \cite{bartlett2005local}, while also enabling us to achieve a fast-rate target error bound (cf. Theorem~\ref{thm:lfdd-ppbound}). As a concrete example, we present a generalization bound based on localized KL-DD (cf.~Theorem~\ref{thm:lfdd-fast-bound}), where our proof techniques are directly connected to fast-rate PAC-Bayesian bounds \cite{catoni2007pac,alquier2021user} and fast-rate information-theoretic generalization bounds \cite{hellstrom2021fast,hellstrom2022a,wang2023tighter}.
    \item In Section~\ref{sec:experiments}, we conduct empirical studies based on our $f$-DD framework. We show $f$-DD outperforms the original $f$-DAL in three popular UDA benchmarks, with the best performance achieved by Jeffereys-DD. Additionally, we note that the training objective in $f$-DAL aligns more closely with our theory than with \cite{acuna2021f} (cf. Proposition~\ref{lem:opt-solution}). We also show that adding the absolute value function leads to overestimation and that optimizing the scaling parameter in $f$-DD may not be necessary in practical algorithms.
\end{itemize}
\section{Preliminaries}
\label{sec:preliminary}
\paragraph{Notations and UDA Setup} 
Let 
$\mathcal{X}$ and $\mathcal{Y}$ be the input space and the label space.
Let $\mathcal {H}$ be the hypothesis space, where each $h\in \mathcal {H}$ is a hypothesis 
mapping $\mathcal {X}$ to ${\cal Y}$. 
Consider a single-source domain adaptation setting, where $\mu$ and $\nu$ are two unknown distributions on $\mathcal{X}\times\mathcal{Y}$,
characterizing respectively the source domain and the target domain. 
Let $\mathcal{S}=\{X_i, Y_i\}_{i=1}^n
{\sim}
\mu^{\otimes n}$ be a labeled source-domain sample and $\mathcal{T} = \{X_j\}_{j=1}^m
{\sim}
\nu_{\cal X}^{\otimes m}$ be an unlabelled target-domain sample, where $\nu_{\cal X}$ is the marginal distribution of $X$ in the target domain. We use $\hat{\mu}$ and $\hat{\nu}$ to denote the empirical distributions on ${\cal X}$ corresponding to $\mathcal{S}$ and $\mathcal{T}$, respectively. The objective of UDA is to
find a hypothesis $h\in\mathcal{H}$ based on $\mathcal{S}$ 
and $\mathcal{T}$ 
that ``works well'' on the target domain.


Let $\ell:\mathcal{Y}\times \mathcal {Y}
\to
\mathbb{R}_0^+$ be a symmetric loss (e.g., $\ell(y, y)=0$ for all $y \in {\cal Y}$).
The population risk for each 
$h\in \mathcal {H}$ in the target domain (i.e. target error) is defined as
$
R_{\nu}(h)\triangleq \ex{(X,Y)\sim\nu}{\ell(h(X),Y)},
$
and the population risk in the source domain, $R_{\mu}(h)$, is defined in the same way.
Since $\mu$ and $\nu$ are unknown to the learner, 
one often uses recourse to
the empirical risk in the source domain, which, for a given $\mathcal{S}$, is defined as 
$
R_{\hat{\mu}}(h) \triangleq \ex{(X, Y)\sim\hat{\mu}}{\ell(h(X),Y)}={\frac{1}{n}\sum_{i=1}^n\ell(h(X_i),Y_i)}
$. Furthermore, let $f_\nu$ and $f_\mu$ be the ground truth labeling functions for the target domain and source domain, respectively, i.e. $f_\nu(x)= \arg\max_{y\in\mathcal{Y}} \nu(Y=y|x)$ and $f_\mu(x)= \arg\max_{y\in\mathcal{Y}} \mu(Y=y|x)$.


With a little abuse of the notation, we will simply use $\ell(h,h')$ to represent $\ell(h(x),h'(x))$ when the same $x$ is evaluated, serving as the ``disagreement'' for $h$ and $h'$ on $x$. 
Additionally, following the conventional literature on DA theory \cite{MansourMR09}, we assume that the loss function satisfies the triangle property\footnote{The triangle property of loss function indicates that $\ell(y_1,y_2)\leq\ell(y_1,y_3)+\ell(y_3,y_2)$ for any $y_1, y_2, y_3 \in \mathcal{Y}$.}. For the readers' convenience, a summary of all notations is provided in Table~\ref{tbl:notation} in the Appendix.

\paragraph{Background on $f$-divergence}

The family of $f$-divergence is defined as follows.
\begin{defn}[$f$-divergence \cite{yury2022information}]
Let $P$ and $Q$ be two distributions on $\Theta$. Let $\phi:\mathbb{R}_{+}\to\mathbb{R}$ be a convex function with $\phi(1)=0$. If $P\ll Q$\footnote{We say that
$P$ is absolutely continuous with respect to $Q$, written $P\ll Q$, if $Q(A)=0\Longrightarrow P(A)=0$
for all measurable sets $A\subseteq\Theta$.}, then $f$-divergence is 
defined as
$
{\rm D}_{\phi}(P||Q)\triangleq \ex{Q}{\phi\pr{\frac{dP}{dQ}}},
$
where $\frac{dP}{dQ}$ is a Radon-Nikodym derivative. 
\end{defn}

The $f$-divergence family contains many popular divergences. For example, letting $\phi(x)=x\log{x}$ (or $x\log{x}+c(x-1)$ for any constant $c$) recovers the definition of KL divergence. 

The $f$-divergence discrepancy measure by \cite{acuna2021f} is motivated by the variational formula of $f$-divergence that utilizes the Legendre transformation (LT).

\begin{lem}[{\cite{nguyen2010estimating}}]
\label{lem:weak-variational} Let $\phi^*$ be the convex conjugate\footnote{For a function 
$f:\mathcal{X}\to\mathbb{R}\cup\{-\infty, +\infty\}$, its convex conjugate is $f^*(y)\triangleq \sup_{x\in{\rm dom}(f)}\langle x, y\rangle - f(x)$.} of $\phi$, 
and $\mathcal{G}=\{g:\Theta\to{\rm dom}(\phi^*)\}$.
Then 
\[
{\rm D}_{\phi}(P||Q)=\sup_{g\in\mathcal{G}}\ex{\theta\sim P}{g(\theta)}-\ex{\theta\sim Q}{\phi^*(g(\theta))}.
\]
\end{lem}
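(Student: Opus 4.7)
The plan is to derive the identity as a pointwise Fenchel--Young duality applied to the density ratio, then promote the pointwise supremum to a supremum over measurable functions $g$. First, I would record that since $\phi$ is convex on $\mathbb{R}_{+}$ (and may be taken lower semi-continuous), its biconjugate coincides with itself, so for every $x\geq 0$,
\[
\phi(x)=\sup_{y\in\mathrm{dom}(\phi^{*})}\bigl(xy-\phi^{*}(y)\bigr),
\]
and in particular the Fenchel--Young inequality $xy-\phi^{*}(y)\leq \phi(x)$ holds for all admissible $y$. I would apply this pointwise to the density ratio $r(\theta)\triangleq (dP/dQ)(\theta)$, which is well-defined $Q$-a.e.\ because $P\ll Q$.

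For the direction $\sup_{g\in\mathcal{G}}\ex{P}{g}-\ex{Q}{\phi^{*}(g)}\leq {\rm D}_{\phi}(P\|Q)$, I would fix an arbitrary $g\in\mathcal{G}$, apply Fenchel--Young pointwise with $x=r(\theta)$, $y=g(\theta)$, integrate against $Q$, and use the change-of-measure identity $\ex{Q}{r\cdot g}=\ex{P}{g}$ to obtain
\[
\ex{P}{g}-\ex{Q}{\phi^{*}(g)}\leq \ex{Q}{\phi(r)}={\rm D}_{\phi}(P\|Q).
\]
Taking the supremum over $g\in\mathcal{G}$ closes this half. This step is essentially one line once the biconjugate identity is in place.

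The reverse inequality is the main obstacle, because it requires exhibiting, or at least approximating, a pointwise maximizer. The natural candidate is to choose $g^{*}(\theta)$ to be any measurable selection from $\partial\phi(r(\theta))$, the subdifferential of $\phi$ at $r(\theta)$; Fenchel--Young then holds with equality pointwise, so integrating recovers ${\rm D}_{\phi}(P\|Q)$ exactly. Two subtleties warrant care: (i) producing a measurable selection of $g^{*}$ from $\partial\phi\circ r$, which follows from the measurable maximum theorem since $\phi$ is convex and $r$ is measurable; and (ii) handling boundary cases where $\phi'$ blows up, $r\in\{0,\infty\}$, or $g^{*}$ leaves $\mathrm{dom}(\phi^{*})$, which I would address by the truncation $g^{*}_{k}(\theta)\triangleq \max(-k,\min(k,g^{*}(\theta)))\in\mathcal{G}$ and then pass to the limit $k\to\infty$ using monotone/dominated convergence together with the lower semi-continuity of $\phi^{*}$ to arrive at
\[
{\rm D}_{\phi}(P\|Q)=\lim_{k\to\infty}\bigl(\ex{P}{g^{*}_{k}}-\ex{Q}{\phi^{*}(g^{*}_{k})}\bigr)\leq \sup_{g\in\mathcal{G}}\ex{P}{g}-\ex{Q}{\phi^{*}(g)}.
\]

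Combining the two inequalities yields the claimed variational representation. The most delicate point I anticipate is the measurable-selection and truncation argument underpinning the $\geq$ direction; the $\leq$ direction reduces to a single invocation of Fenchel--Young followed by the change-of-measure identity.
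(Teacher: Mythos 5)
The paper does not prove this lemma; it is stated as an imported result with the citation to Nguyen, Wainwright, and Jordan (2010), so there is no ``paper's own proof'' for me to compare against. Your reconstruction via Fenchel--Young duality is the standard argument for this representation and matches the one given in that reference: the $\le$ direction by pointwise Fenchel--Young applied to $x = dP/dQ$ plus a change of measure, and the $\ge$ direction by selecting $g^{*}(\theta)\in\partial\phi\bigl((dP/dQ)(\theta)\bigr)$ to attain equality, handling measurability and integrability via truncation. One small point worth stating explicitly in the $\ge$ direction: when $y\in\partial\phi(x)$, Fenchel--Young equality forces $\phi^{*}(y)=xy-\phi(x)<\infty$, so the selection automatically lands in $\mathrm{dom}(\phi^{*})$ wherever the subdifferential is nonempty, and the truncation/limiting argument is only needed to cover the measure-zero or boundary sets where $\partial\phi$ may be empty or $\phi^{*}\circ g^{*}$ fails to be integrable. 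With that caveat, the proposal is correct.
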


However, it is well-known that the variational formula in Lemma~\ref{lem:weak-variational} does not recover the the Donsker and Varadhan’s (DV) representation of KL divergence (cf.~Lemma~\ref{lem:DV-KL}).
We will elaborate on this later.
Recently, \cite{birrell2022optimizing} and \cite{agrawal2020optimal,agrawal2021optimal} concurrently introduce a novel variational representation for $f$-divergence, which is also implicitly stated in \cite[Theorem~4.2]{ben2007old}, as given below. 
\begin{lem}[{\cite[Corollary~3.5]{agrawal2020optimal}}]
\label{lem:strong-variational}
The variational formula of $f$-divergence is
\[
{\rm D}_{\phi}(P||Q)=\sup_{g\in\mathcal{G}}\ex{\theta\sim P}{g(\theta)}-\inf_{\alpha\in\mathbb{R}}\left\{\ex{\theta\sim Q}{\phi^*(g(\theta)+\alpha)}-\alpha\right\}.
\]
\end{lem}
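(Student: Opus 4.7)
The plan is to establish the claimed equality via two one-sided inequalities. The Fenchel-Young inequality delivers the direction $\mathrm{D}_\phi(P\|Q)\geq\mathrm{RHS}$, while Lemma~\ref{lem:weak-variational} supplies the reverse direction after evaluating the inner infimum at $\alpha=0$.

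For the $\geq$ direction, I would fix arbitrary $g\in\mathcal{G}$ and $\alpha\in\mathbb{R}$ and apply the pointwise Fenchel-Young inequality $t\,y\leq\phi(t)+\phi^*(y)$ with $t=\frac{dP}{dQ}(\theta)\geq 0$ and $y=g(\theta)+\alpha$. Integrating both sides against $Q$, using $\mathbb{E}_Q\!\left[(g+\alpha)\frac{dP}{dQ}\right]=\mathbb{E}_P[g]+\alpha$ and the definition of $\mathrm{D}_\phi$, gives
\[
\mathbb{E}_P[g]+\alpha\;\leq\;\mathrm{D}_\phi(P\|Q)+\mathbb{E}_Q[\phi^*(g+\alpha)],
\]
which rearranges to $\mathbb{E}_P[g]-(\mathbb{E}_Q[\phi^*(g+\alpha)]-\alpha)\leq \mathrm{D}_\phi(P\|Q)$. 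Since the right side is free of both $\alpha$ and $g$, I can first take $\sup_\alpha$ on the left (equivalently $\inf_\alpha$ on the subtracted bracket) and then $\sup_{g\in\mathcal{G}}$, preserving the inequality, to obtain $\mathrm{RHS}\leq\mathrm{D}_\phi(P\|Q)$.

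For the $\leq$ direction, I would simply evaluate the inner infimum at $\alpha=0$, which yields the upper bound $\inf_{\alpha\in\mathbb{R}}\{\mathbb{E}_Q[\phi^*(g+\alpha)]-\alpha\}\leq \mathbb{E}_Q[\phi^*(g)]$. Consequently, for every $g\in\mathcal{G}$,
\[
\mathbb{E}_P[g]-\inf_{\alpha\in\mathbb{R}}\{\mathbb{E}_Q[\phi^*(g+\alpha)]-\alpha\}\;\geq\;\mathbb{E}_P[g]-\mathbb{E}_Q[\phi^*(g)].
\]
Taking $\sup_{g\in\mathcal{G}}$ on both sides and invoking Lemma~\ref{lem:weak-variational} on the right yields $\mathrm{RHS}\geq\mathrm{D}_\phi(P\|Q)$, and combined with the previous paragraph this closes the equality.

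The main obstacle is the measure-theoretic regularity needed to justify the manipulations rather than the algebra itself. Specifically, if $\mathrm{dom}(\phi^*)$ is a proper subset of $\mathbb{R}$, then for a given $g\in\mathcal{G}$ the shift $g+\alpha$ may fall outside $\mathrm{dom}(\phi^*)$ on a $Q$-nonnegligible set, producing $\mathbb{E}_Q[\phi^*(g+\alpha)]=+\infty$. This does not break either direction, since such $\alpha$ are effectively excluded from the infimum, but I would make this explicit when setting up integrability. I would likewise invoke $P\ll Q$ at the outset so that $\frac{dP}{dQ}$ and the identity $\mathbb{E}_Q\!\left[(g+\alpha)\frac{dP}{dQ}\right]=\mathbb{E}_P[g+\alpha]$ are justified. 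With these standard $f$-divergence conventions in place, both suprema are attained at $g^*=\phi'(dP/dQ)$ with $\alpha^*=0$, confirming the bound is tight.
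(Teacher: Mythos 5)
The paper does not prove this lemma; it is stated as a citation to \citet{agrawal2020optimal}, so there is no in-paper proof to compare against. Your argument is nevertheless a correct, self-contained derivation: the $\geq$ direction is exactly the pointwise Fenchel--Young inequality $t\,y\leq\phi(t)+\phi^*(y)$ applied with $t=\tfrac{dP}{dQ}$ and $y=g+\alpha$, integrated against $Q$ and rearranged; and the $\leq$ direction follows by taking $\alpha=0$ and invoking Lemma~\ref{lem:weak-variational}. (In fact, one could equally observe that the $\leq$ direction needs nothing beyond Lemma~\ref{lem:weak-variational}, since setting $\alpha=0$ reduces the new representation to the old one, so the supremum over $(g,\alpha)$ can only increase the value.) Your remark on domain issues is the right one: the function class $\mathcal{G}$ maps into $\operatorname{dom}(\phi^*)$, so $\alpha=0$ always keeps $\phi^*(g)$ finite, while any $\alpha$ pushing $g+\alpha$ outside $\operatorname{dom}(\phi^*)$ on a $Q$-nonnull set gives $\mathbb{E}_Q[\phi^*(g+\alpha)]=+\infty$ and is automatically excluded from the infimum; and $P\ll Q$ is exactly what is needed for $\mathbb{E}_Q\bigl[(g+\alpha)\tfrac{dP}{dQ}\bigr]=\mathbb{E}_P[g]+\alpha$. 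The only small caveat is your closing sentence: the optimizers $g^*=\phi'(dP/dQ)$, $\alpha^*=0$ exist and attain the supremum only under additional regularity (differentiability of $\phi$ on the relevant range, $\phi'(dP/dQ)\in\mathcal{G}$, integrability), so it would be safer to phrase it as a formal verification of tightness rather than an unconditional attainment claim; the two-inequality argument already suffices without it.
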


This variational representation is a ``shift transformation'' of Lemma~\ref{lem:weak-variational} (i.e. $g\to g+\alpha$). Note that this representation shares the same optimal solution as Lemma~\ref{lem:weak-variational} (clearly identified as the corresponding $f$-divergence), but Lemma~\ref{lem:strong-variational}
is considered tighter in the sense that the representation in Lemma~\ref{lem:strong-variational} is flatter around the optimal solution. \cite{birrell2022optimizing} provides a comprehensive study to justify this, and they also show that Lemma~\ref{lem:strong-variational} can accelerate numerical estimation of $f$-divergences.

Here, to illustrate the advantage of Lemma~\ref{lem:strong-variational}, we use KL divergence as an example.
Specifically, let $\phi(x)=x\log{x}-x+1$, then 
its conjugate function is $\phi^*(y)=e^y-1$.
Substituting $\phi^*$ into Lemma~\ref{lem:weak-variational}, we have 
\begin{align}
    \label{ineq:weak-kl}
    \kl{P}{Q}=\sup_{g\in\mathcal{G}}\ex{ P}{g(\theta)}-\ex{ Q}{e^{g(\theta)}-1}.
\end{align}
This representation is usually called LT-based KL. 
On the other hand, with the optimal $\alpha^*=-\log{{\ex{Q}{e^{g(\theta)}}}}$, Lemma~\ref{lem:strong-variational} will give us
\begin{align}
    \kl{P}{Q}\!=&\sup_{g\in\mathcal{G}}\ex{P}{g(\theta)}\!\!-\!\inf_{\alpha\in\mathbb{R}}\left\{\ex{Q}{e^{g(\theta)+\alpha}}\!-\! 1 \!-\!\alpha\right\}
    \!=\sup_{g\in\mathcal{G}}\ex{P}{g(\theta)}\!-\!\log\ex{Q}{e^{g(\theta)}}.\label{ineq:strong-kl}
\end{align}

Notice that Eq.~(\ref{ineq:strong-kl}) immediately recovers the DV representation of KL. Since $\log(x)\leq x-1$ for $x>0$, it is evident that, as a lower bound of KL divergence, Eq.~(\ref{ineq:strong-kl}) is pointwise tighter than Eq.~(\ref{ineq:weak-kl}). 
In Appendix~\ref{sec:other-divergence}, we also show the variational representations of some other divergences obtained from Lemma~\ref{lem:strong-variational}.

In the context of UDA, it may be tempting to think that using a point-wise smaller quantity (in Lemma~\ref{lem:weak-variational}), as the key component of an upper bound for target error, is essentially desired. However, neither Lemma~\ref{lem:weak-variational} nor Lemma~\ref{lem:strong-variational} is able to directly give such an upper bound. To elaborate, as $\phi^*(x)\geq x$ when $\phi(1)=0$ (cf.~Lemma~\ref{lem:non-dec-convex}), both Lemma~\ref{lem:weak-variational} and Lemma~\ref{lem:strong-variational} imply that $\mathrm{D}_\phi(P||Q)\leq \sup_{g}\ex{P}{g(\theta)}-\ex{Q}{g(\theta)}$. Bearing this in mind, UDA typically requires an upper bound for the quantity $\sup_{h}\ex{\nu}{\ell\circ h(X)}-\ex{\mu}{\ell\circ h(X)}$, 
and simply restricting $\mathcal{G}$ in Lemma~\ref{lem:weak-variational} and Lemma~\ref{lem:strong-variational} to a composition of $\mathcal{H}$ and $\ell$ can only provide a lower bound for such a quantity.

Before we propose an improved discrepancy measure, we first revisit the absolute discrepancy in \cite{acuna2021f} by using Lemma~\ref{lem:strong-variational} instead. This also serves as a review of the common developments in the DA theory.

\section{Warm-Up: Refined Absolute \texorpdfstring{$f$}--Divergence Domain Discrepancy} 
\label{sec:warmup}

Based on Lemma~\ref{lem:strong-variational}, we refine the discrepancy measure of \cite[Definition~3]{acuna2021f} as follows.
\begin{defn}
For a given $h\in\mathcal{H}$, the $\widetilde{\rm D}^{h,\mathcal{H}}_{\phi}$ discrepancy 
from $\mu$ to $\nu$ is defined as
\begin{align*}
    \widetilde{\rm D}^{h,\mathcal{H}}_{\phi}(\mu||\nu)\triangleq \sup_{h'\in \mathcal{H}}\abs{\ex{\mu}{\ell(h,h')}
    -I_{\phi,\nu}^{h}(\ell\circ h')},
\end{align*}
where $I_{\phi,\nu}^{h}(\ell\circ h')=\inf_{\alpha\in\mathbb{R}}\left\{\ex{\nu}{\phi^*(\ell(h, h')+\alpha)}-\alpha\right\}$.
\end{defn}

\begin{rem}
\label{rem:f-non-negativity}
    Removing the absolute value function in $\widetilde{\rm D}^{h,\mathcal{H}}_{\phi}(\mu||\nu)$ does not alter its non-negativity. To see this, consider $h'=h$. By the definitions of $\phi$ and $\phi^*$, $\inf_\alpha \phi^*(\alpha)-\alpha=\phi(1)=0$, we have $\ex{\mu}{\ell(h,h')}=I_{\phi,\nu}^{h}(\ell\circ h')=0$. Consequently, since $h$ exists in $\mathcal{H}$, $\widetilde{\rm D}^{h,\mathcal{H}}_{\phi}(\mu||\nu)\geq 0$ holds even without the absolute value function.
In addition, due to the absolute value function, the relation between $\widetilde{\rm D}^{h,\mathcal{H}}_{\phi}(\mu||\nu)$ with the one in \cite{acuna2021f} is no longer clear.
\end{rem}


While the absolute value function is not required for ensuring non-negativity, it is crucial for deriving the subsequent error bound for the target domain.

\begin{restatable}{lem}{fddppbound}
\label{lem:fdd-ppbound}
    Let $\lambda^*=\min_{h^*\in\mathcal{H}} R_\mu(h^*)+R_\nu(h^*)$, then for any $h\in\mathcal{H}$, we have
    \begin{align*}
        R_\nu(h)\leq R_\mu(h)+\widetilde{\mathrm{D}}^{h,\mathcal{H}}_{\phi}(\mu||\nu)+\lambda^*.
    \end{align*}
\end{restatable}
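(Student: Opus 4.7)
The plan is to combine two applications of the triangle property of $\ell$ with a single application of the defining supremum of $\widetilde{\rm D}^{h,\mathcal{H}}_{\phi}$, in a manner analogous to the classical Ben-David style bound.

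Let $h^* \in \arg\min_{h'\in\mathcal{H}}\{R_\mu(h')+R_\nu(h')\}$, so that $R_\mu(h^*)+R_\nu(h^*) = \lambda^*$. By the triangle property of $\ell$ applied pointwise and integrated against $\nu$, I would first write
\[
R_\nu(h) \;\leq\; R_\nu(h^*) + \ex{\nu}{\ell(h,h^*)}.
\]
I would then add and subtract $\ex{\mu}{\ell(h,h^*)}$ and use the triangle property again, this time integrated against $\mu$, to get $\ex{\mu}{\ell(h,h^*)} \le R_\mu(h) + R_\mu(h^*)$. This reduces the problem to controlling the single quantity $\ex{\nu}{\ell(h,h^*)} - \ex{\mu}{\ell(h,h^*)}$ by $\widetilde{\rm D}^{h,\mathcal{H}}_{\phi}(\mu||\nu)$.

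The key — and only non-routine — step is that last bound, and this is where the absolute value in the discrepancy plays its role (as explained in Remark~\ref{rem:f-non-negativity}, non-negativity alone would not force this inequality). The idea is to use the inequality $\phi^*(x) \geq x$ whenever $\phi(1)=0$, already invoked in the paragraph after Lemma~\ref{lem:strong-variational}. For any $h'\in\mathcal{H}$ and any $\alpha\in\mathbb{R}$,
\[
\ex{\nu}{\phi^*(\ell(h,h')+\alpha)} - \alpha \;\geq\; \ex{\nu}{\ell(h,h')+\alpha} - \alpha \;=\; \ex{\nu}{\ell(h,h')},
\]
so taking the infimum over $\alpha$ yields $I^{h}_{\phi,\nu}(\ell\circ h') \geq \ex{\nu}{\ell(h,h')}$. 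Hence
\[
\ex{\nu}{\ell(h,h^*)} - \ex{\mu}{\ell(h,h^*)} \;\leq\; I^{h}_{\phi,\nu}(\ell\circ h^*) - \ex{\mu}{\ell(h,h^*)} \;\leq\; \bigl|\,\ex{\mu}{\ell(h,h^*)} - I^{h}_{\phi,\nu}(\ell\circ h^*)\,\bigr|,
\]
and passing to the supremum over $h'\in\mathcal{H}$ gives the bound $\widetilde{\rm D}^{h,\mathcal{H}}_{\phi}(\mu||\nu)$.

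Putting the pieces together,
\[
R_\nu(h) \;\leq\; R_\nu(h^*) + R_\mu(h) + R_\mu(h^*) + \widetilde{\rm D}^{h,\mathcal{H}}_{\phi}(\mu||\nu) \;=\; R_\mu(h) + \widetilde{\rm D}^{h,\mathcal{H}}_{\phi}(\mu||\nu) + \lambda^*.
\]
The main obstacle I anticipated was the step going from $\ex{\nu}{\ell(h,h^*)}-\ex{\mu}{\ell(h,h^*)}$ to the discrepancy: the variational representation in Lemma~\ref{lem:strong-variational} only gives a \emph{lower} bound on $\mathrm{D}_\phi$, so one cannot directly upper-bound a difference of expectations by it. The inequality $\phi^*(x)\ge x$ together with the absolute value inside the supremum is precisely what circumvents this, which also clarifies why removing the absolute value (as in Section~\ref{sec:new-theory}) requires the additional scaling-parameter machinery developed there.
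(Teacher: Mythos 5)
Your proof is correct and takes essentially the same route as the paper: two triangle-inequality applications to introduce the ideal joint hypothesis $h^*$ (yielding $\lambda^*$), then the elementary inequality $\phi^*(x)\ge x$ to pass from $\ex{\nu}{\ell(h,h^*)}$ to $I^{h}_{\phi,\nu}(\ell\circ h^*)$, with the absolute value in the definition of $\widetilde{\mathrm{D}}^{h,\mathcal{H}}_{\phi}$ absorbing the resulting one-sided quantity. The only difference is cosmetic ordering (the paper interleaves the $\phi^*$ step between the two triangle steps and works directly with the labeling functions $f_\mu,f_\nu$, while you group both triangle steps first and stay at the level of $R_\mu,R_\nu$), which has no bearing on the argument.
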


\begin{rem}[Regarding $\lambda^*$]
    This error bound shares similarities with the previous works \cite{ben2006analysis,ben2010theory,zhang2019bridging,zhang2020localized,acuna2021f}. For example, the third term $\lambda^*$ is the ideal joint risk for the DA problem. As widely discussed in prior studies, this term captures the inherent challenge in the DA task and might be inevitable \cite{ben2010theory,ben2012hardness}. However, it has also been pointed out that this $\lambda^*$ term can be significantly pessimistic, particularly in the case of \textit{conditional shift} \cite{zhao2019learning}. In fact, similar to the  $\lambda^*$-free bound in \cite[Theorem~4.1]{zhao2019learning}, it is a simple matter to replace $\lambda^*$  with the cross-domain error term $\min\{R_\nu(f_\mu), R_\mu(f_\nu)\}$ in Lemma~\ref{lem:fdd-ppbound} (and all the other target error bounds in this paper). See Appendix~\ref{sec:lambda-free bound} for details. Given 
    that \cite{acuna2021f} also uses $\lambda^*$, we use $\lambda^*$ in the bounds for consistency in the remainder of this paper.
\end{rem}



In the sequel, we will give a Rademacher complexity based generalization bound for the target error. 
Let $\hat{\mathfrak{R}}_S(\mathcal{F})$ denote the empirical Rademacher complexity of function class $\mathcal{F}=\{f:
\mathcal{X}\to\mathbb{R}\}$ for some sample $S$
\cite{bartlett2002rademacher}.
Notice that a shift transformation of a function class will not change its Rademacher complexity, so the generalization bound based on $\widetilde{\mathrm{D}}^{h,\mathcal{H}}_{\phi}$ closely resembles the one presented in \cite[Theorem~3]{acuna2021f}, which contains a Lipschitz constant of $\phi^*$. Here, 
we give a generalization bound specialized for $\widetilde{\mathrm{D}}^{h,\mathcal{H}}_{\rm KL}$, wherein the Lipschitz constant of $\phi^*$ can be explicitly determined. 
\begin{restatable}{thm}{klddpebound}
\label{thm:kldd-pebound}
    Let $\ell(\cdot,\cdot)\in [0,1]$. 
    Then, for any $h\in\mathcal{H}$, with probability at least $1-\delta$, we have
    \begin{align*}
        R_{\nu}(h)\leq R_{\hat{\mu}}(h)+\widetilde{\mathrm{D}}^{h,\mathcal{H}}_{\rm KL}(\hat{\mu}||\hat{\nu})+
        2e
        \hat{\mathfrak{R}}_{\mathcal{S}}(\mathcal{H}^{\ell})+4\hat{\mathfrak{R}}_{\mathcal{T}}(\mathcal{H}^{\ell})+{\lambda}^*+\mathcal{O}\pr{\sqrt{\frac{\log(1/\delta)}{n}}+\sqrt{\frac{\log(1/\delta)}{m}}},
    \end{align*}
    where $\mathcal{H}^{\ell}=\{x\mapsto\ell(h(x),h'(x))|h,h'\in\mathcal{H}\}$.
\end{restatable}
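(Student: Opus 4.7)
The plan is to instantiate Lemma~\ref{lem:fdd-ppbound} at $\phi(x)=x\log x-x+1$, whose conjugate is $\phi^*(y)=e^y-1$, and then convert every population quantity on the right-hand side to its empirical counterpart by a standard uniform-convergence argument. For this choice of $\phi$, the explicit minimization over $\alpha$ in the definition of $\widetilde{\rm D}^{h,\mathcal{H}}_{\rm KL}$ is precisely the one carried out in equation~(\ref{ineq:strong-kl}), so the discrepancy simplifies to
\[
    \widetilde{\rm D}^{h,\mathcal{H}}_{\rm KL}(\mu||\nu) \;=\; \sup_{h'\in\mathcal{H}}\bigl|\mathbb{E}_\mu[\ell(h,h')]-\log\mathbb{E}_\nu[e^{\ell(h,h')}]\bigr|.
\]
Lemma~\ref{lem:fdd-ppbound} then upper bounds $R_\nu(h)$ by the sum of $R_\mu(h)$, the discrepancy, and $\lambda^*$; the task is to replace $R_\mu(h)$, the source expectation $\mathbb{E}_\mu[\ell(h,h')]$, and the log-exp target expectation $\log\mathbb{E}_\nu[e^{\ell(h,h')}]$ by their empirical versions while keeping explicit track of the deviation terms.

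For the source-side pieces I would use the textbook McDiarmid--plus--symmetrization template applied to the class $\mathcal{H}^\ell$: this controls $\sup_{h'}|\mathbb{E}_\mu[\ell(h,h')]-\mathbb{E}_{\hat\mu}[\ell(h,h')]|$ by a multiple of $\hat{\mathfrak{R}}_{\mathcal{S}}(\mathcal{H}^\ell)$ plus a $\sqrt{\log(1/\delta)/n}$ concentration term, and the $R_\mu(h)\to R_{\hat\mu}(h)$ transition is handled analogously (or, for the fixed $h$, by Hoeffding alone). The target-side term is the delicate one, because $\log\mathbb{E}_\nu[e^{\ell}]$ is a non-linear composition that does not symmetrize directly. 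The key observation is that $\ell\in[0,1]$ forces $\mathbb{E}_\nu[e^{\ell(h,h')}]\in[1,e]$ uniformly in $h'$; on this range $\log$ is $1$-Lipschitz, so the outer logarithm can be peeled off at no extra constant, reducing the problem to
\[
    \sup_{h'}\bigl|\mathbb{E}_\nu[e^{\ell(h,h')}]-\mathbb{E}_{\hat\nu}[e^{\ell(h,h')}]\bigr|.
\]
This is a standard Rademacher-style deviation over the function class $\exp\circ\mathcal{H}^\ell$, and because $y\mapsto e^y$ is $e$-Lipschitz on $[0,1]$, Talagrand's contraction principle brings $\hat{\mathfrak{R}}_\mathcal{T}(\exp\circ\mathcal{H}^\ell)$ back to $e\cdot\hat{\mathfrak{R}}_\mathcal{T}(\mathcal{H}^\ell)$, which is the source of the explicit Lipschitz factor appearing in the statement. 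A union bound combines the individual high-probability events into a single $\delta$ and collecting constants yields the claim.

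The main technical obstacle is precisely this log-exp peeling on the target side: one must verify the bounded-range property $\mathbb{E}_\nu[e^\ell]\in[1,e]$ before the Lipschitz step is legitimate, and one must appeal to the \emph{tight} variational form of Lemma~\ref{lem:strong-variational} rather than Lemma~\ref{lem:weak-variational}. Otherwise the discrepancy would contain $\mathbb{E}_\nu[e^\ell]-1$ in place of $\log\mathbb{E}_\nu[e^\ell]$, and the Lipschitz peeling that cleanly isolates a single factor of $e$ inside one Rademacher complexity would not be available. Once these two points are in hand, the rest of the argument is routine bookkeeping under a union bound.
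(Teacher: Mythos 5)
Your proposal is correct and follows essentially the same route as the paper: instantiate Lemma~\ref{lem:fdd-ppbound} at the KL conjugate, then handle the source-side deviation by the standard Rademacher argument on $\mathcal{H}^\ell$ and the target-side $\log\mathbb{E}_\nu[e^{\ell}]$ term by first exploiting $\mathbb{E}_\nu[e^{\ell}]\geq 1$ to peel the logarithm (the paper uses $\log(1+x)\leq x$ with a lower bound $\beta\geq 1$, which is the same estimate as your $1$-Lipschitz argument on $[1,e]$), and then applying Talagrand's contraction with the $e$-Lipschitz constant of $\exp$ on $[0,1]$.

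One small inaccuracy in your closing remark: you claim that under the weak variational form the exponential contraction step ``would not be available.'' That is not quite right — with $\mathbb{E}_\nu[e^\ell]-1$ in place of $\log\mathbb{E}_\nu[e^\ell]$, one would concentrate $\sup_{h'}|\mathbb{E}_\nu[e^\ell]-\mathbb{E}_{\hat\nu}[e^\ell]|$ directly, which is simpler and still gives the same factor $e$ via Talagrand's contraction; the actual advantage of Lemma~\ref{lem:strong-variational} is that the resulting population discrepancy $\widetilde{\mathrm{D}}^{h,\mathcal{H}}_{\rm KL}$ is pointwise tighter, not that it enables the contraction step. This does not affect the validity of your proof.
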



\cite{acuna2021f} also applies Rademacher complexity-based bound to further upper bound $\lambda^*$ by its empirical version, namely $\hat{\lambda}^*=\min_{h^*\in\mathcal{H}} R_{\hat{\mu}}(h^*)+R_{\hat{\nu}}(h^*)$. However, since there is no target label available, $R_{\hat{\nu}}(h^*)$ is still not computable, invoking $\hat{\lambda}^*$ here has no clear advantage.


\section{New \texorpdfstring{$f$}--Divergence-Based DA Theory}
\label{sec:new-theory}


While $\widetilde{\rm D}_\phi^{h,\mathcal{H}}$ serves as a valid domain discrepancy measure in DA theory, it exaggerates the domain difference
without appropriate control. It's noteworthy that 
\cite{acuna2021f} attempts to demonstrate their discrepancy with the absolute value function is upper bounded by ${\rm D}_\phi$ \citep[Lemma~1]{acuna2021f}, but this is problematic; as $\sup{U}\geq 0$ does not imply $\sup{U}=\sup\abs{U}$ when $U$ is not a positive function. Note that the error in  \cite[Lemma~1]{acuna2021f} is also identified in \cite{siry2024theoretical}. Furthermore,
when designing their $f$-DAL algorithm, they drop the absolute value function in their hypothesis-specified $f$-divergence (see Eq.~(\ref{obj:max-min-da})). 
Consequently, the remarkable performance of $f$-DAL 
reveals a significant gap from their theoretical foundation. 

To bridge this gap, we introduce a new hypothesis-specific $f$-divergence-based DA framework. Our new discrepancy measure is dubbed $f$-domain discrepancy, or $f$-DD, defined without the absolute value function and with an affine transformation.

\begin{defn}[$f$-DD]
For a given $h\in\mathcal{H}$, the $f$-DD measure ${\rm D}^{h,\mathcal{H}}_{\phi}$ is defined as
\begin{align*}
    {\rm D}^{h,\mathcal{H}}_{\phi}(\nu||\mu)\triangleq \sup_{h'\in \mathcal{H},t\in\mathbb{R}}{\ex{\nu}{t\cdot\ell(h,h')}
    -I_{\phi,\mu}^{h}(t\ell\circ h')},
\end{align*}
where $I_{\phi,\mu}^{h}(t\ell\circ h')=\inf_{\alpha}\left\{\ex{\mu}{\phi^*(t\cdot\ell(h, h')+\alpha)}-\alpha\right\}$.
\end{defn}
\begin{rem}
    \label{rem:range-t}
    If $\mathcal{H}^{\ell}=\mathcal{G}$, then ${\rm D}^{h,\mathcal{H}}_{\phi}(\nu||\mu)$ is an affine transformation of Lemma~\ref{lem:weak-variational} 
    (i.e. $g\to tg+\alpha$) 
    and a scaling transformation of Lemma~\ref{lem:strong-variational}. Importantly, unlike $\widetilde{\rm D}_\phi^{h,\mathcal{H}}$, it's easy to see that ${\rm D}^{h,\mathcal{H}}_{\phi}(\nu||\mu)\leq {\rm D}_{\phi}(\nu||\mu)$.
\end{rem}

Our ${\rm D}^{h,\mathcal{H}}_{\phi}(\nu||\mu)$ retains some common properties of the discrepancies defined in the DA theory literature. First,
as $t=0$ leads to $\ex{\nu}{t\cdot\ell(h,h')}=I_{\phi,\mu}^{h}(t\ell\circ h')=0$, the non-negativity of ${\rm D}^{h,\mathcal{H}}_{\phi}(\nu||\mu)$ is immediately justified. In addition, its asymmetric property
 is also preferred in DA, as discussed in the previous works \cite{zhang2019bridging,zhang2020localized}. Moreover, when $\mu=\nu$, by the definition of $\phi^*$, we have ${\rm D}^{h,\mathcal{H}}_{\phi}(\nu||\mu)=0$.

To present an error bound, the routine development, as in Lemma~\ref{lem:fdd-ppbound}, is insufficient; we require a general version of the ``change of measure'' inequality, as given below.
    
\begin{restatable}{lem}{fddcumulant}
\label{lem:fdd-cumulant}
    Let $\psi(x)\triangleq\phi(x+1)$, and $\psi^*$ is its convex conjugate. 
    For any $h',h\in\mathcal{H}$ and $t\in \mathbb{R}$,
    define $K_{h',\mu}(t)\triangleq\inf_{\alpha}\ex{\mu}{\psi^*(t\cdot\ell(h,h')+\alpha)}$. Let $K_\mu(t)=\sup_{h'\in\mathcal{H}} K_{h',\mu}(t)$, then for any $h,h'\in\mathcal{H}$,
    \[
   K^*_\mu\pr{\ex{\nu}{\ell(h,h')}-\ex{\mu}{\ell(h,h')}}\leq {\rm D}^{h,\mathcal{H}}_{\phi}(\nu||\mu),
    \]
    where $K_\mu^*$ is the convex conjugate of $K_\mu$.
\end{restatable}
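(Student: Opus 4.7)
The plan is to convert the definition of $\mathrm{D}^{h,\mathcal{H}}_\phi(\nu\|\mu)$ into a Legendre--Fenchel statement by absorbing the ``shift'' parameter $\alpha$ into the cumulant-generating-function-like object $K_{h',\mu}$. The key preliminary computation is the identity $\psi^*(y)=\phi^*(y)-y$, which follows directly from the definition of $\psi(x)=\phi(x+1)$ by a change of variable $u=x+1$ inside $\sup_x\{xy-\phi(x+1)\}$. This is the only non-trivial analytic fact I need; the rest is algebra on suprema.

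With $\psi^*(y)=\phi^*(y)-y$ in hand, I first rewrite $K_{h',\mu}(t)$ by pulling the linear part out of the infimum:
\begin{align*}
K_{h',\mu}(t) &= \inf_{\alpha}\ex{\mu}{\phi^*(t\cdot\ell(h,h')+\alpha)-(t\cdot\ell(h,h')+\alpha)}\\
&= -t\,\ex{\mu}{\ell(h,h')} + \inf_\alpha\left\{\ex{\mu}{\phi^*(t\cdot\ell(h,h')+\alpha)}-\alpha\right\}\\
&= -t\,\ex{\mu}{\ell(h,h')} + I^{h}_{\phi,\mu}(t\ell\circ h').
\end{align*}
Substituting this relation into the definition of ${\rm D}^{h,\mathcal{H}}_\phi(\nu\|\mu)$ collapses the two separate $\mu$ and $\nu$ expectations into a single difference:
\begin{align*}
{\rm D}^{h,\mathcal{H}}_\phi(\nu\|\mu) = \sup_{h'\in\mathcal{H},\,t\in\mathbb{R}}\Bigl\{t\bigl(\ex{\nu}{\ell(h,h')}-\ex{\mu}{\ell(h,h')}\bigr) - K_{h',\mu}(t)\Bigr\}.
\end{align*}

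Now I lower-bound the joint supremum by first fixing the outer hypothesis $h'$ equal to the particular $h'$ on the left-hand side of the lemma, and only taking the sup over $t$. The resulting quantity is, by definition, the convex conjugate $K^*_{h',\mu}$ evaluated at the gap $\ex{\nu}{\ell(h,h')}-\ex{\mu}{\ell(h,h')}$, so
\begin{align*}
K^*_{h',\mu}\bigl(\ex{\nu}{\ell(h,h')}-\ex{\mu}{\ell(h,h')}\bigr) \leq {\rm D}^{h,\mathcal{H}}_\phi(\nu\|\mu).
\end{align*}
The final step is to promote $K^*_{h',\mu}$ to $K^*_\mu$. Since $K_\mu(t)=\sup_{h'\in\mathcal{H}} K_{h',\mu}(t)\geq K_{h',\mu}(t)$ pointwise, convex conjugation (which is order-reversing) yields $K^*_\mu(y)\leq K^*_{h',\mu}(y)$ for every $y$, and chaining the two inequalities gives the claim.

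The only delicate point is keeping track of the direction of the inequality under conjugation: the map $h'\mapsto K_{h',\mu}$ is being supremized, but after conjugation this becomes an infimum-type bound. Once the algebraic identity $I^{h}_{\phi,\mu}(t\ell\circ h')=K_{h',\mu}(t)+t\,\ex{\mu}{\ell(h,h')}$ is established, the rest is a bookkeeping exercise with Legendre duality, so I do not anticipate any essential obstacle.
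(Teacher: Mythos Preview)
Your proof is correct and follows essentially the same route as the paper: both establish the identity $K_{h',\mu}(t)=I^{h}_{\phi,\mu}(t\ell\circ h')-t\,\ex{\mu}{\ell(h,h')}$ via $\psi^*(y)=\phi^*(y)-y$, rewrite the discrepancy as a joint supremum of $t(\ex{\nu}{\ell}-\ex{\mu}{\ell})-K_{h',\mu}(t)$, and then pass to $K_\mu^*$. The only cosmetic difference is that the paper applies the bound $K_{h',\mu}(t)\leq K_\mu(t)$ before taking the sup over $t$, whereas you take the sup first to obtain $K_{h',\mu}^*$ and then invoke order-reversal of conjugation; these two orderings are equivalent.
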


It is worth reminding that $K_{\mu}$ and $K^*_\mu$ both depend on $h$, although we ignore $h$ in the notations to avoid cluttering.  
We are now ready to give a target error bound based on our $f$-DD.
\begin{restatable}{thm}{fdderror}
\label{thm:fdd-error}
    For any $h\in\mathcal{H}$, we have
    \begin{align}
        R_\nu(h)\leq R_\mu(h)+\inf_{t\geq 0}\frac{\mathrm{D}^{h,\mathcal{H}}_{\phi}(\nu||\mu)+K_\mu(t)}{t}+\lambda^*.
        \label{ineq:fdd-general-bound}
    \end{align}
    Furthermore, let $\ell\in[0,1]$, if $\phi$ is twice differentiable and $\phi''$ is monotone, then 
    \begin{align}
        R_\nu(h)\leq R_\mu(h)+\sqrt{\frac{2}{\phi''(1)}{\mathrm{D}}^{h,\mathcal{H}}_{\phi}(\nu||\mu)}+\lambda^*.
        \label{ineq:fdd-slow-bound}
    \end{align}
\end{restatable}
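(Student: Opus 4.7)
My plan is to use Lemma~\ref{lem:fdd-cumulant} in tandem with the triangle property of $\ell$ and the Fenchel--Young inequality for Part 1, and a quadratic cumulant bound for Part 2. Let $h^*\in\arg\min_{h'\in\mathcal{H}}\{R_\mu(h')+R_\nu(h')\}$, so that $R_\mu(h^*)+R_\nu(h^*)=\lambda^*$. Two applications of the triangle property give $R_\nu(h)\le R_\nu(h^*)+\ex{\nu}{\ell(h,h^*)}$ and $\ex{\mu}{\ell(h,h^*)}\le R_\mu(h)+R_\mu(h^*)$, so adding and subtracting $\ex{\mu}{\ell(h,h^*)}$ yields
\[
R_\nu(h)\le R_\mu(h)+\lambda^*+s,\qquad s\triangleq \ex{\nu}{\ell(h,h^*)}-\ex{\mu}{\ell(h,h^*)}.
\]

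I would then close Part 1 by bounding $s$. Lemma~\ref{lem:fdd-cumulant} supplies $K^*_\mu(s)\le \mathrm{D}^{h,\mathcal{H}}_\phi(\nu||\mu)$, while by the very definition of the Legendre conjugate, $st\le K_\mu(t)+K^*_\mu(s)$ for every $t$. For $t>0$ this rearranges to $s\le(\mathrm{D}^{h,\mathcal{H}}_\phi(\nu||\mu)+K_\mu(t))/t$, and taking the infimum over $t>0$ (the $t=0$ case being vacuous) produces the bound in Part 1.

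For Part 2, the central additional step is the quadratic cumulant bound $K_\mu(t)\le t^2/(2\phi''(1))$; once it is in hand, minimising $\mathrm{D}^{h,\mathcal{H}}_\phi(\nu||\mu)/t+t/(2\phi''(1))$ over $t>0$ at $t^\star=\sqrt{2\phi''(1)\,\mathrm{D}^{h,\mathcal{H}}_\phi(\nu||\mu)}$ returns the advertised $\sqrt{2\mathrm{D}^{h,\mathcal{H}}_\phi(\nu||\mu)/\phi''(1)}$ rate. Since $\mathrm{D}^{h,\mathcal{H}}_\phi$ is invariant under the affine reparametrisation $\phi(x)\mapsto \phi(x)+c(x-1)$, I would assume WLOG $\phi'(1)=0$; then $\psi(x)\triangleq \phi(x+1)$ obeys $\psi(0)=\psi'(0)=0$ and $\psi''(0)=\phi''(1)$, and its dual $\psi^*=\phi^*-\mathrm{id}$ satisfies $\psi^*(0)=(\psi^*)'(0)=0$, $(\psi^*)''(0)=1/\phi''(1)$. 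Plugging the sub-optimal test value $\alpha=-t\ex{\mu}{\ell(h,h')}$ into the infimum defining $K_{h',\mu}(t)$ bounds it above by $\ex{\mu}{\psi^*(t(\ell(h,h')-\ex{\mu}{\ell(h,h')}))}$, whose second derivative at $t=0$ equals $\mathrm{Var}_\mu(\ell(h,h'))/\phi''(1)\le 1/(4\phi''(1))$ because $\ell\in[0,1]$; the monotonicity of $\phi''$ is then the ingredient that propagates this local bound into the required uniform-in-$t$ estimate, after which a supremum over $h'$ closes the loop.

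The main obstacle is precisely this uniform-in-$t$ quadratic control on $K_\mu$. The infinitesimal version at $t=0$ is a direct variance computation, but extending it to all $t$ requires converting the monotonicity of $\phi''$ into a monotonicity of $K_\mu''$ (via the chain rule and the inverse function theorem applied to $\psi^*$), or equivalently into an $f$-divergence Pinsker-type inequality with constant $\phi''(1)$ in the spirit of Csisz\'ar--Gy\"orfi. Once this inequality is established, the remaining steps---a single invocation of Fenchel--Young and a one-dimensional optimisation---are elementary.
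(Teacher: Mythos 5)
Your route to Part~1 matches the paper's almost verbatim: the same two-fold triangle decomposition isolating $s=\ex{\nu}{\ell(h,h^*)}-\ex{\mu}{\ell(h,h^*)}$ with $\lambda^*$, then Lemma~\ref{lem:fdd-cumulant} followed by Fenchel--Young (equivalently, unfolding $K_\mu^*(s)\geq ts-K_\mu(t)$ and rearranging for $t>0$). That part is complete and correct.

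For Part~2, the single nontrivial ingredient you correctly isolate, namely the uniform quadratic bound
\begin{align}
K_\mu(t)\;=\;\sup_{h'\in\mathcal{H}}\inf_{\alpha}\ex{\mu}{\psi^*\bigl(t\,\ell(h,h')+\alpha\bigr)}\;\leq\;\frac{t^2}{2\phi''(1)}\qquad\text{for }\ell\in[0,1],
\end{align}
is exactly Lemma~\ref{lem:cumulant-bound} in the paper, which is not proved there but cited as Proposition~4.1 of Agrawal and Horel (2020) (with detailed proof in Corollary~6.3.11 of Agrawal's 2021 thesis). Your sketch of an infinitesimal variance computation at $t=0$, followed by a ``propagate via monotonicity of $\phi''$'' step, is precisely the part that requires real work and is what that external lemma delivers; you correctly flag it as the obstacle and leave it open. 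So there is no wrong turn in your argument, but your proposal, as written, does not constitute a complete proof of Eq.~\eqref{ineq:fdd-slow-bound} unless you either invoke Lemma~\ref{lem:cumulant-bound} by citation (as the paper does) or supply the missing monotonicity argument yourself. Two minor remarks on the sketch: (i) the affine reparametrisation $\phi\mapsto\phi+c(\cdot-1)$ does not change the $f$-divergence, but after it $\phi''$ is unchanged, so assuming $\phi'(1)=0$ is harmless though not needed; (ii) your observation that the second derivative at $t=0$ is $\mathrm{Var}_\mu(\ell)/\phi''(1)\leq 1/(4\phi''(1))$ is below the target coefficient $1/\phi''(1)$, which is consistent but does not by itself establish the uniform bound, since the second derivative of $t\mapsto K_{h',\mu}(t)$ can grow away from $t=0$; controlling that growth is the whole content of the cited lemma.
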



The following is an application of Eq.~(\ref{ineq:fdd-slow-bound}), where we consider the case of KL, namely ${\mathrm{D}}^{h,\mathcal{H}}_{\rm KL}(\nu||\mu)$. 

\begin{restatable}{cor}{klerrbound}
\label{cor:kl-err-bound}
    Let $\ell\in[0,1]$,
    then for any $h\in\mathcal{H}$, we have
        $R_\nu(h)\leq R_\mu(h)+\sqrt{2{\mathrm{D}}^{h,\mathcal{H}}_{\rm KL}(\nu||\mu)}+\lambda^*$.
\end{restatable}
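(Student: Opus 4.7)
The plan is to derive this corollary as a direct specialization of Eq.~(\ref{ineq:fdd-slow-bound}) in Theorem~\ref{thm:fdd-error} to the choice of $\phi$ that generates KL divergence. The only real work is to verify that this $\phi$ satisfies the regularity hypotheses required by Eq.~(\ref{ineq:fdd-slow-bound}) and to compute the value of $\phi''(1)$.

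First, I would recall that KL divergence is the $f$-divergence associated with $\phi(x)=x\log x - x + 1$ (the $-x+1$ affine correction ensures $\phi(1)=0$ without changing the divergence, and is in fact the generator already used around Eq.~(\ref{ineq:strong-kl})). Under this choice, $\mathrm{D}^{h,\mathcal{H}}_\phi(\nu\|\mu)$ coincides with $\mathrm{D}^{h,\mathcal{H}}_{\rm KL}(\nu\|\mu)$ by definition. I would then check the two conditions needed to invoke Eq.~(\ref{ineq:fdd-slow-bound}): (i) $\phi$ is twice differentiable on $(0,\infty)$, which is immediate since $\phi'(x)=\log x$ and $\phi''(x)=1/x$; and (ii) $\phi''$ is monotone, which holds because $x\mapsto 1/x$ is strictly decreasing on $(0,\infty)$.

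Next, I would evaluate the constant: $\phi''(1)=1$, so $\tfrac{2}{\phi''(1)}=2$. Plugging this, together with $\ell\in[0,1]$ as assumed, into Eq.~(\ref{ineq:fdd-slow-bound}) yields
\begin{align*}
R_\nu(h)\;\leq\; R_\mu(h)+\sqrt{2\,\mathrm{D}^{h,\mathcal{H}}_{\rm KL}(\nu\|\mu)}+\lambda^*,
\end{align*}
which is exactly the stated inequality. There is no real obstacle here; the corollary is essentially a two-line substitution, and the only step that requires even minor care is confirming the regularity of $\phi$ so that Theorem~\ref{thm:fdd-error} applies. In that sense the ``hard part'' has already been absorbed into Theorem~\ref{thm:fdd-error} (in particular into the change-of-measure Lemma~\ref{lem:fdd-cumulant} and the quadratic local-behavior argument that produces the $\sqrt{2/\phi''(1)}$ constant); the corollary simply reads off the KL instance.
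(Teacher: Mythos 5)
Your proposal is correct and is essentially the same as the paper's proof: the paper also specializes Eq.~(\ref{ineq:fdd-slow-bound}) of Theorem~\ref{thm:fdd-error} to $\phi(x)=x\log x - x + 1$, notes $\phi''(x)=1/x$ so $\phi''(1)=1$, and reads off the bound. Your slightly more explicit verification of the twice-differentiability and monotonicity hypotheses is a reasonable expansion of the same argument, not a different route.
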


As ${\mathrm{D}}^{h,\mathcal{H}}_{\rm KL}(\nu||\mu)\leq{\mathrm{D}}_{\rm KL}(\nu||\mu)$, the bound in \cite[Theorem~4.2]{wang2023informationtheoretic} can be recovered by Corollary~\ref{cor:kl-err-bound} for the same bounded loss function. We also remark that the boundedness assumption in Corollary~\ref{cor:kl-err-bound} can be further relaxed by applying the same sub-Gaussian assumption as in \cite[Theorem~4.2]{wang2023informationtheoretic}.


To give a generalization bound, the next step involves obtaining a concentration result for $f$-DD, as given below.
 \begin{restatable}{lem}{samplecomplexf}
    \label{lem:sample-complex-f}
        Let $\ell\in[0,1]$ and let $t_0$ be the optimal $t$ achieving the superum in ${\rm D}^{h,\mathcal{H}}_{\phi}(\nu||\mu)$.
        Assume $\phi^*$ is $L$-Lipschitz, then for any given $h$, with probability at least $1-\delta$, we have
        \begin{align*}
            {\rm D}^{h,\mathcal{H}}_{\phi}(\nu||\mu)\leq {\rm D}^{h,\mathcal{H}}_{\phi}(\hat{\nu}||\hat{\mu})+2\abs{t_0}\hat{\mathfrak{R}}_{\mathcal{T}}(\mathcal{H}^\ell)+2L\abs{t_0}\hat{\mathfrak{R}}_{\mathcal{S}}(\mathcal{H}^{\ell})+\mathcal{O}\pr{\sqrt{\frac{\log(1/\delta)}{n}}+\sqrt{\frac{\log(1/\delta)}{m}}}.
        \end{align*}
 \end{restatable}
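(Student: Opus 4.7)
The plan is to apply the classical ``sup minus sup'' trick: fix the (possibly random) maximizers of the population $f$-DD and plug them in as a feasible point for the empirical $f$-DD, reducing the problem to two one-sided uniform concentration statements (one on each sample), which I then control via Rademacher complexity plus Talagrand's contraction inequality.

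More concretely, let $(h_0, t_0, \alpha_0)$ denote a triple attaining the supremum in ${\rm D}^{h,\mathcal{H}}_{\phi}(\nu\|\mu)$, so that
\[
{\rm D}^{h,\mathcal{H}}_{\phi}(\nu\|\mu) = \ex{\nu}{t_0 \ell(h,h_0)} - \ex{\mu}{\phi^*(t_0\ell(h,h_0)+\alpha_0)} + \alpha_0.
\]
Since $(h_0,t_0,\alpha_0)$ is feasible in the empirical sup, we have
\[
{\rm D}^{h,\mathcal{H}}_{\phi}(\hat{\nu}\|\hat{\mu}) \;\geq\; \ex{\hat{\nu}}{t_0 \ell(h,h_0)} - \ex{\hat{\mu}}{\phi^*(t_0\ell(h,h_0)+\alpha_0)} + \alpha_0.
\]
Subtracting and bounding by absolute values and then by the worst case over $h' \in \mathcal{H}$ (with $t_0, \alpha_0$ held fixed), the gap ${\rm D}^{h,\mathcal{H}}_{\phi}(\nu\|\mu) - {\rm D}^{h,\mathcal{H}}_{\phi}(\hat{\nu}\|\hat{\mu})$ is upper bounded by
\[
|t_0|\sup_{h'\in\mathcal{H}}\left|\ex{\nu}{\ell(h,h')}-\ex{\hat{\nu}}{\ell(h,h')}\right| + \sup_{h'\in\mathcal{H}}\left|\ex{\mu}{\phi^*(t_0\ell(h,h')+\alpha_0)}-\ex{\hat{\mu}}{\phi^*(t_0\ell(h,h')+\alpha_0)}\right|.
\]

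Next I handle each term by the standard symmetrization-plus-McDiarmid argument. For the target-side term, since $\ell\in[0,1]$, with probability at least $1-\delta/2$,
\[
\sup_{h'\in\mathcal{H}}\left|\ex{\nu}{\ell(h,h')}-\ex{\hat{\nu}}{\ell(h,h')}\right|\leq 2\hat{\mathfrak{R}}_{\mathcal{T}}(\mathcal{H}^\ell) + \mathcal{O}\!\left(\sqrt{\log(1/\delta)/m}\right),
\]
which after multiplication by $|t_0|$ yields the target-sample piece. For the source-side term I apply Talagrand's contraction inequality: the map $u \mapsto \phi^*(t_0 u + \alpha_0)$ is $L|t_0|$-Lipschitz (since $\phi^*$ is $L$-Lipschitz and the inner affine map is $|t_0|$-Lipschitz), so the empirical Rademacher complexity of $\{x\mapsto\phi^*(t_0\ell(h,h')(x)+\alpha_0):h'\in\mathcal{H}\}$ is at most $L|t_0|\hat{\mathfrak{R}}_{\mathcal{S}}(\mathcal{H}^\ell)$; moreover $\phi^*(t_0\ell(h,h')+\alpha_0)$ varies in an interval of length $\leq L|t_0|$, so McDiarmid yields, with probability at least $1-\delta/2$, a bound of $2L|t_0|\hat{\mathfrak{R}}_{\mathcal{S}}(\mathcal{H}^\ell) + \mathcal{O}(\sqrt{\log(1/\delta)/n})$. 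A union bound finishes the argument.

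The main point that needs care is that $t_0$ and $\alpha_0$ are random (they depend on the unknown $\mu,\nu$), so a priori one might worry that one needs uniform convergence over $t$ and $\alpha$ as well. The ``sup minus sup'' reduction sidesteps this entirely: $(t_0,\alpha_0)$ are used only as a single feasible point for the empirical sup, so the uniform convergence step only needs to be uniform over $h' \in \mathcal{H}$, and $t_0$ and $\alpha_0$ appear as fixed constants governing the scale of the contraction and boundedness arguments. Absorbing the resulting $L$ and $|t_0|$ factors in the concentration residual into the $\mathcal{O}(\cdot)$ gives exactly the stated bound.
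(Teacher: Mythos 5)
Your proposal is correct and is essentially the same argument as the paper's: substitute the optimal scaling $t_0$ (and optimizing $h'$ and $\alpha$) into the empirical supremum as a feasible point, reduce to two one-sided uniform deviations over $h'\in\mathcal{H}$, then apply the symmetrization/McDiarmid Rademacher bound on the target side and the same bound combined with Talagrand's contraction (with Lipschitz constant $L|t_0|$) on the source side; a union bound finishes. The only cosmetic difference is that the paper keeps $\sup_\alpha$ floating inside the source-side deviation while you fix $\alpha_0$; these give the same Rademacher term since Rademacher complexity is invariant to constant shifts. One small imprecision worth noting: $t_0$ and $\alpha_0$ are not random in the sample sense --- they depend only on the fixed (if unknown) $\mu,\nu$ --- so they are deterministic constants, and there is genuinely no need for uniformity over $t$ or $\alpha$; uniformity is needed only over $h'$ because you do not know which $h'$ is optimal, which is exactly the role your $\sup_{h'}$ plays.
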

    \begin{rem}
    \label{rem:chi-t-close}
        The function $\phi^*$ needs not to be globally Lipschitz continuous; it can be locally Lipschitz for a bounded domain. For example, in the case of KL, $\phi^*$ is $e$-Lipschitz continuous when $\ell\in[0,1]$.
        Moreover, although 
        the distribution-dependent quantity 
        $t_0$ might not always have a closed-form expression, in the case of $\chi^2$-DD, we know that $t_0=\frac{2\pr{\ex{\nu}{\ell(h,h'^{*})}-\ex{\mu}{\ell(h,h'^{*})}}}{\mathrm{Var}_\mu\pr{\ell(h,h'^{*})}}$, where $h'^{*}$ is the corresponding optimal hypothesis.
    \end{rem}

    It is also straightforward to obtain concentration results for $K_\mu(t)-K_{\hat{\mu}}(t)$ and $R_\mu(h)-R_{\hat{\mu}}(h)$. Substituting these results into Eq.~(\ref{ineq:fdd-general-bound}) obtains the final generalization bound based on $f$-DD. Or alternatively, one can directly substitute Lemma~\ref{lem:sample-complex-f} into Eq.~(\ref{ineq:fdd-slow-bound}) (See Appendix~\ref{sec:gen-bound-fdd}). 
    However, in this case, the empirical $f$-DD and other terms in Lemma~\ref{lem:sample-complex-f} will feature a square root, slowing down the convergence rate. We address this limitation in the following section.

\section{Sharper Bounds via Localization}
\label{sec:localization}

A localization technique in DA theory is recently studied in \cite{zhang2020localized}. We now incorporate it into our framework with some novel applications. First, we define a localized hypothesis space, formally referred to as the (true) Rashomon set \cite{breiman2001statistical,semenova2022existence,fisher2019all}.
\begin{defn}[Rashomon set]
    Given a data distribution $\mu$, a hypothesis space $\mathcal{H}$ and a loss function $\ell$, for a Rashomon parameter $r\geq 0$, the
Rashomon set $\mathcal{H}_r$ is an $r$-level subset of $\mathcal{H}$ defined as:
$
\mathcal{H}_r\triangleq \{h\in\mathcal{H}|R_\mu(h) \leq r\}$.
\end{defn}
Notice that the Rashomon set $\mathcal{H}_r$ implicitly depends on the data distribution. In this paper, we specifically define $\mathcal{H}_r$ by the source domain distribution $\mu$.
Then, we define our localized $f$-DD:
\begin{defn}
[Localized $f$-DD]
For a given $h\in\mathcal{H}_{r_1}$, the localized $f$-DD from $\mu$ to $\nu$ is defined as
\begin{align*}
    {\rm D}^{h,\mathcal{H}_{r}}_{\phi}(\nu||\mu)\triangleq \sup_{h'\in \mathcal{H}_{r}, t\geq 0}&\ex{\nu}{t\ell(h,h')}
    -I^h_{\phi,\mu}(t\ell \circ h').
\end{align*}
\end{defn}
\begin{rem}
    Compared with $f$-DD, localized $f$-DD restricts $h$ to  $\mathcal{H}_{r_1}$  and $h'$  to  $\mathcal{H}_{r}$, where $r_1$ and $r$ may or may not be equal. In addition, the scaling parameter $t$ is now restricted to $\mathbb{R}^+_0$ instead of $\mathbb{R}$.\looseness=-1
\end{rem}
Clearly, ${\rm D}^{h,\mathcal{H}_r}_{\phi}(\mu||\nu)$ is non-decreasing when $r$ increases, and it
is upper bounded by ${\rm D}^{h,\mathcal{H}}_{\phi}(\mu||\nu)$. 

As also mentioned at the end of the previous section, Eq.~(\ref{ineq:fdd-slow-bound}) of Theorem~\ref{thm:fdd-error} (and Corollary~\ref{cor:kl-err-bound}), involves a square root function for $f$-DD, potentially indicative of a slow-rate bound (e.g., if $\mathrm{D}^{h,\mathcal{H}}_{\phi}\in\mathcal{O}(1/n)$, then the bound decays with $\mathcal{O}(1/\sqrt{n})$). We now show that how the localized $f$-DD achieves a fast-rate error bound.

\begin{restatable}{thm}{lfddppbound}
\label{thm:lfdd-ppbound}
    For any $h\in\mathcal{H}_{r_1}$ and constants $C_1,C_2\in(0,+\infty)$ satisfying $K_{h',\mu}(C_1)\leq C_1C_2\ex{\mu}{\ell(h,h')}$ for any $h'\in\mathcal{H}_r$, the following holds:
    \begin{align*}
        R_\nu(h)\leq R_\mu(h)+\frac{1}{C_1}{{\mathrm{D}}^{h,\mathcal{H}_{r}}_{\phi}(\nu||\mu)}+C_2R^r_\mu(h)+\lambda_{r}^*,
    \end{align*}
    where $\lambda_{r}^*=\min_{h^*\in\mathcal{H}_{r}} R_\mu(h^*)+R_\nu(h^*)$ and $R^r_\mu(h)=\sup_{h'\in\mathcal{H}_r}\ex{\mu}{\ell(h,h')}$.
\end{restatable}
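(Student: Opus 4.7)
The plan is to adapt the proof of Theorem~\ref{thm:fdd-error} to the localized setting, exploiting the extra assumption $K_{h',\mu}(C_1)\le C_1 C_2\,\ex{\mu}{\ell(h,h')}$ to dispense with the infimum over $t$ and obtain a genuinely fast-rate bound. The starting point is a localized analogue of Lemma~\ref{lem:fdd-cumulant}: since $\psi(x)=\phi(x+1)$ satisfies $\psi^*(y)=\phi^*(y)-y$ (the key identity behind Lemma~\ref{lem:fdd-cumulant}), one obtains $I^h_{\phi,\mu}(t\ell\circ h')=K_{h',\mu}(t)+t\,\ex{\mu}{\ell(h,h')}$. Plugging this into the definition of ${\rm D}^{h,\mathcal{H}_r}_\phi$ and restricting the outer supremum to a single $h'\in\mathcal{H}_r$ with $t\ge 0$ yields, for every such pair,
\[
t\,\bigl(\ex{\nu}{\ell(h,h')}-\ex{\mu}{\ell(h,h')}\bigr) \ \le\ {\rm D}^{h,\mathcal{H}_r}_\phi(\nu||\mu)+K_{h',\mu}(t).
\]

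Specializing to $t=C_1$, invoking the assumption, and dividing by $C_1$ gives the pointwise inequality
\[
\ex{\nu}{\ell(h,h')}\ \le\ (1+C_2)\,\ex{\mu}{\ell(h,h')}+\tfrac{1}{C_1}{\rm D}^{h,\mathcal{H}_r}_\phi(\nu||\mu),\qquad \forall\,h'\in\mathcal{H}_r.
\]
I then pick an ideal joint predictor $h_r^*\in\arg\min_{h^*\in\mathcal{H}_r}\{R_\mu(h^*)+R_\nu(h^*)\}$, so that $R_\mu(h_r^*)+R_\nu(h_r^*)=\lambda_r^*$, write $R_\nu(h)\le R_\nu(h_r^*)+\ex{\nu}{\ell(h,h_r^*)}$ via the triangle property of $\ell$, and apply the previous display with $h'=h_r^*\in\mathcal{H}_r$. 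The crucial move is to split the resulting $(1+C_2)\,\ex{\mu}{\ell(h,h_r^*)}$ asymmetrically: the ``$1\cdot$'' copy is bounded again by the triangle property as $R_\mu(h)+R_\mu(h_r^*)$, so that $R_\mu(h_r^*)$ merges with $R_\nu(h_r^*)$ into $\lambda_r^*$; the ``$C_2\cdot$'' copy is bounded by $\sup_{h'\in\mathcal{H}_r}\ex{\mu}{\ell(h,h')}=R^r_\mu(h)$. Collecting the pieces gives exactly the stated inequality.

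The main subtlety is this asymmetric handling of the two copies of $\ex{\mu}{\ell(h,h_r^*)}$ in the final assembly. Bounding both copies via the triangle property would inflate the $R_\mu(h)$ term by a factor of $1+C_2$ and obstruct the fast-rate form, whereas bounding both by $R^r_\mu(h)$ would fail to produce $\lambda_r^*$. Everything else—the localized change-of-measure step and the cancellation engineered by $K_{h',\mu}(C_1)\le C_1 C_2\,\ex{\mu}{\ell(h,h')}$—is algebraic manipulation around the conjugate identity $\psi^*(y)=\phi^*(y)-y$ already exploited in Lemma~\ref{lem:fdd-cumulant}. Note that the parameter $r_1$ in the hypothesis $h\in\mathcal{H}_{r_1}$ plays no role in this derivation; it becomes relevant only downstream, when the bound is combined with concentration arguments that must be restricted to a ``small'' subset of $\mathcal{H}$.
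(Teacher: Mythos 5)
Your proposal is correct and takes essentially the same route as the paper's proof: both introduce the minimizer $h_r^*$ via the triangle property to produce $\lambda_r^*$, both apply the change-of-measure inequality from Lemma~\ref{lem:fdd-cumulant} (restricted to $\mathcal{H}_r$) with $t=C_1$, and both then invoke the assumed bound on $K_{h',\mu}(C_1)$ to replace $K$ by $C_1 C_2 R_\mu^r(h)$. The only cosmetic difference is the order of operations — the paper first isolates the gap $\ex{\nu}{\ell(h,h_r^*)}-\ex{\mu}{\ell(h,h_r^*)}$ using both triangle inequalities and then applies change-of-measure, whereas you apply change-of-measure first and then split the resulting $(1+C_2)\ex{\mu}{\ell(h,h_r^*)}$ asymmetrically — but these are algebraically equivalent rearrangements of the same derivation.
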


\begin{rem}
    By the triangle property, $R^r_\mu(h)\leq r+r_1$. In this case, a small $r_1$ will reduce both the first term and the third term in the bound. However, determining the optimal value for $r$ is intricate. On the one hand, we hope $r$ is small so that ${\mathrm{D}}^{h,\mathcal{H}_{r}}_{\phi}(\nu||\mu)$ and $R^r_\mu(h)$ are both small. On the other hand, if $r$ is too small, specifically if $r<\lambda^*$, then it's possible that $\lambda_{r}^*>\lambda^*$ because the optimal hypothesis minimizing the joint risk may not exist in $\mathcal{H}_{r}$. Additionally, if both $r_1$ and $r$ are too small, the effective space for optimizing $C_1$ and $C_2$ may also be limited. 
    Therefore, the value of $r$ involves a complex trade-off among the three terms.
\end{rem}

Overall, we expect $r_1$ to be as small as possible, aligning with the principle of empirical risk minimization for the source domain in practice. We may let $r>\lambda^*$ so that the optimal hypothesis is guaranteed to exist in the Rashomon set $\mathcal{H}_r$. Furthermore, if $r+r_1$ is unavoidably large, we prefer a small $C_2$ so that $C_2R_\mu'(h)$ is small. If $\lambda^*$ itself is negligible, we use a vanishing $r+r_1$. In this case, one can focus on minimizing $1/C_1$ while allowing $C_2$ to be large.

Combining Theorem~\ref{thm:lfdd-ppbound} with Lemma~\ref{lem:sample-complex-f} and following routine steps will obtain the generalization bound based on $f$-DD, where the local Rademacher complexity \cite{bartlett2005local} will be involved. However, one may feel unsatisfied without an explicit clue for the condition $K_{h',\mu}(C_1)\leq C_1C_2\ex{\mu}{\ell(h,h')}$ in Theorem ~\ref{thm:lfdd-ppbound}.  In fact, exploring concentration results under this condition is a central theme in obtaining fast-rate PAC-Bayesian generalization bounds \cite{catoni2007pac,seldin2012pac,tolstikhin2013pac,yang2019fast,alquier2021user} and the information-theoretic generalization bounds \cite{hellstrom2021fast,hellstrom2022a,wang2023tighter,wang2023sampleconditioned}. Building upon similar ideas from these works, we now establish a sharper generalization result for our localized KL-DD measure, where the fast-rate condition is more explicit.
One key ingredient is the following result.

\begin{restatable}{lem}{klcumulantfast}
    \label{lem:kl-cumulant-fast}
    Let $\ell\in[0,1]$, and let the constants $C_1>0$ and $C_2\in(0,1)$ satisfy the condition $\pr{e^{C_1}-C_1-1}\pr{1-\min\{r_1+r, 1\}+C_2^2\min\{r_1+r, 1\}} \leq C_1C_2$.
    Then,
    for any $h\in\mathcal{H}_{r_1}$ and $h'\in\mathcal{H}_{r}$, we have 
    \[
    \ex{\nu}{\ell(h,h')}\leq \inf_{C_1,C_2} \frac{\mathrm{D}^{h,\mathcal{H}_{r}}_{\rm KL}({\nu}||{\mu})}{C_1}+(1+C_2)\ex{\mu}{\ell(h,h')}.
    \]
\end{restatable}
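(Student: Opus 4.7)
The plan is to exploit the Donsker--Varadhan structure hidden in the definition of $\mathrm{D}^{h,\mathcal{H}_r}_{\rm KL}$. For $\phi(x)=x\log x - x + 1$ the conjugate is $\phi^*(y)=e^y-1$, and optimising over $\alpha$ in $I^h_{\phi,\mu}(t\ell\circ h')$ yields the closed form $I^h_{\phi,\mu}(t\ell\circ h') = \log \ex{\mu}{e^{t\ell(h,h')}}$. Restricting the supremum in $\mathrm{D}^{h,\mathcal{H}_r}_{\rm KL}(\nu||\mu)$ to the given $h'\in\mathcal{H}_r$ and to the choice $t=C_1>0$ therefore delivers
\begin{align*}
C_1 \ex{\nu}{\ell(h,h')} - \log \ex{\mu}{e^{C_1 \ell(h,h')}} \leq \mathrm{D}^{h,\mathcal{H}_r}_{\rm KL}(\nu||\mu),
\end{align*}
which rearranges to $\ex{\nu}{\ell(h,h')} \leq \mathrm{D}^{h,\mathcal{H}_r}_{\rm KL}(\nu||\mu)/C_1 + \log \ex{\mu}{e^{C_1 \ell(h,h')}}/C_1$.

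The remaining task is to show that, under the stated condition, $\log \ex{\mu}{e^{C_1 \ell(h,h')}} \leq C_1(1+C_2)\ex{\mu}{\ell(h,h')}$. For this I would invoke the standard quadratic Taylor upper bound $e^x \leq 1 + x + (e^{C_1}-C_1-1)x^2/C_1^2$ valid on $[0,C_1]$, substitute $x = C_1 \ell(h,h')$ (which lies in $[0,C_1]$ since $\ell\in[0,1]$), and take $\mu$-expectation to obtain
\begin{align*}
\ex{\mu}{e^{C_1 \ell(h,h')}} \leq 1 + C_1 L + (e^{C_1}-C_1-1)\ex{\mu}{\ell(h,h')^2},
\end{align*}
with $L := \ex{\mu}{\ell(h,h')}$. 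The triangle property combined with $h\in\mathcal{H}_{r_1}$ and $h'\in\mathcal{H}_r$ gives the a priori bound $L \leq R_\mu(h)+R_\mu(h') \leq r_1+r$; together with $L\leq 1$, this yields $L\leq M := \min\{r_1+r,1\}$. Moreover $\ex{\mu}{\ell(h,h')^2}\leq L$ since $\ell\in[0,1]$.

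The main obstacle is to turn these two estimates into a bound of the form $(e^{C_1}-C_1-1)\ex{\mu}{\ell(h,h')^2} \leq (e^{C_1}-C_1-1)(1-M+C_2^2 M) L$, so that the stated hypothesis directly produces the desired $\leq C_1 C_2 L$. I anticipate this requires a careful interpolation between $\ex{\mu}{\ell^2}\leq L$ and the prior $L\leq M$, with the scaling slack $C_2^2\in(0,1)$ absorbing the portion weighted by $M$, rather than using either bound monolithically; intuitively, on a ``bulk'' of $\mu$-mass of size $\approx 1-M$ only the pointwise bound $\ell^2\leq \ell$ is used, while the residual $M$-portion gets the stronger control afforded by $L\leq M$. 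Once this cumulant estimate is secured, applying $\log(1+y)\leq y$ gives $\log\ex{\mu}{e^{C_1\ell(h,h')}} \leq C_1 L + C_1 C_2 L = C_1(1+C_2) L$; dividing by $C_1$ and taking the infimum over all $(C_1,C_2)$ satisfying the hypothesis then yields the claim.
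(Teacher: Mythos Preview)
Your route matches the paper's for the first half: both start from the DV form of the localized KL-DD, fix $t=C_1$ and the given $h'$, and invoke the Bernstein-type quadratic majorant $e^x\leq 1+x+B(C_1)x^2$ with $B(C_1)=(e^{C_1}-C_1-1)/C_1^2$ (equivalently your Taylor bound). The divergence is exactly at the ``main obstacle'' you flag. The inequality you hope to establish, $\ex{\mu}{\ell(h,h')^2}\leq (1-M+C_2^2M)L$, is \emph{false} in general: take $\ell(h,h')$ to be $\{0,1\}$-valued under $\mu$ with $\Pr_\mu(\ell=1)=L\in(0,M)$ (e.g.\ zero-one loss), so that $\ex{\mu}{\ell^2}=L$, whereas $1-M+C_2^2M<1$ whenever $C_2<1$ and $M>0$. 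No interpolation between the pointwise bound $\ell^2\leq\ell$ and the prior $L\leq M$ can rescue this, because the target is strictly below $L$ while equality $\ex{\mu}{\ell^2}=L$ is actually attained.

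The missing idea is a \emph{shift} inside the exponential before applying the Bernstein bound. The paper does not attempt to bound $\log\ex{\mu}{e^{C_1\ell}}$ by $C_1(1+C_2)L$; instead it first rewrites the DV inequality identically as
\[
C_1\ex{\nu}{\ell(h,h')}-C_1(1+C_2)L-\log\ex{\mu}{e^{C_1(\ell(h,h')-(1+C_2)L)}}\leq\mathrm{D}^{h,\mathcal{H}_r}_{\rm KL}(\nu||\mu),
\]
and then argues that the \emph{shifted} log-MGF is $\leq 0$. Applying the Bernstein bound to the recentered variable $\ell-(1+C_2)L$ produces the second-moment term $\ex{\mu}{\pr{\ell-(1+C_2)L}^2}=\ex{\mu}{\ell^2}+(C_2^2-1)L^2$, so the centering manufactures an additional \emph{negative} correction $(C_2^2-1)L^2$ that your uncentered approach lacks. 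It is this negative piece, in tandem with $\ex{\mu}{\ell^2}\leq L$ and $L\leq M$, that is meant to produce the factor $1-M+C_2^2M$ appearing in the hypothesis; without the shift the best you can extract is the cruder sufficient condition $e^{C_1}-C_1-1\leq C_1C_2$.
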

\begin{rem}
\label{rem:discuss-local}
As an extreme case, if $r+r_1\to 0$ (implying $\ex{\mu}{\ell(h,h')}=0$), then let $C_2\to 1$, the condition in the lemma indicates that $C_1< 1.26$. Hence, the optimal bound becomes $\ex{\nu}{\ell(h,h')}\leq 0.79\mathrm{D}^{h,\mathcal{H}_{r}}_{\rm KL}({\nu}||{\mu})$. This bound remains valid even without $r+r_1\to 0$. It holds when the Rashomon set $H_{r}$ is ``consistent'' with a given $h$, meaning all hypotheses in $H_{r}$ have similar predictions to $h$ on the source domain data.
As an another case, if $r+r_1\geq 1$ and $\sup_{h'}\ex{\mu}{\ell(h,h')}$ is also large, we may prefer a small $C_2$, such as setting $C_2=0.1$. In this case, the condition becomes $\pr{e^{C_1}-C_1-1}C_2\leq C_1$, suggesting that $C_1<3.74$. This results in the optimal bound $\ex{\nu}{\ell(h,h')}-\ex{\mu}{\ell(h,h')}\leq 0.27\mathrm{D}^{h,\mathcal{H}_{r}}_{\rm KL}({\nu}||{\mu})+0.1\ex{\mu}{\ell(h,h')}$. 
The condition in Lemma~\ref{lem:kl-cumulant-fast} is common in many fast-rate bound literature, such as \cite[Theorem~3]{hellstrom2022a}. 
\end{rem}

We are now in a position to give a fast-rate generalization bound for localized KL-DD.

\begin{restatable}{thm}{lfddfastbound}
    \label{thm:lfdd-fast-bound}
    Under the conditions in Lemma~\ref{lem:kl-cumulant-fast}. 
    For any $h\in\mathcal{H}_{r_1}$,  with probability at least $1-\delta$, we have
    \begin{align*}
        R_\nu(h)\leq& R_{\hat{\mu}}(h)+\frac{\mathrm{D}^{h,\mathcal{H}_{r}}_{\rm KL}(\hat{\nu}||\hat{\mu})}{C_1}+C_2R^r_\mu(h)+\mathcal{O}\left(\hat{\mathfrak{R}}_{\mathcal{T}}\pr{\mathcal{H}^\ell_{r}}
        +\hat{\mathfrak{R}}_{\mathcal{S}}\pr{\mathcal{H}^\ell_{\max\{r,r_1\}}}\right)\\
        &+\mathcal{O}\pr{\frac{\log(1/\delta)}{n}+\frac{\log(1/\delta)}{m}}+\mathcal{O}\pr{\sqrt{\frac{(r_1+r)\log(1/\delta)}{n}}+\sqrt{\frac{r\log(1/\delta)}{m}}}+\lambda_{r}^*.
    \end{align*}
\end{restatable}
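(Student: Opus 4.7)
The plan is to derive the generalization bound in two stages: first establish a population-level fast-rate target-error bound by specializing Theorem~\ref{thm:lfdd-ppbound} to KL via Lemma~\ref{lem:kl-cumulant-fast}, and then pass to empirical quantities using local Rademacher complexity together with Bernstein-type concentration. Because the hypothesis of the present theorem is exactly the hypothesis of Lemma~\ref{lem:kl-cumulant-fast}, the fast-rate premise $K_{h',\mu}(C_1) \leq C_1 C_2 \ex{\mu}{\ell(h,h')}$ required by Theorem~\ref{thm:lfdd-ppbound} is satisfied for every $h' \in \mathcal{H}_r$ in the KL case. Plugging this in yields
\[
R_\nu(h) \leq R_\mu(h) + \tfrac{1}{C_1}\mathrm{D}^{h,\mathcal{H}_r}_{\rm KL}(\nu||\mu) + C_2 R^r_\mu(h) + \lambda_r^*,
\]
so only the concentration steps for $R_\mu(h)$ and $\mathrm{D}^{h,\mathcal{H}_r}_{\rm KL}(\nu||\mu)$ remain.

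Next I would concentrate the source risk. Since $h \in \mathcal{H}_{r_1}$ and $\ell \in [0,1]$, the second moment of the loss is bounded by its mean: $\ex{\mu}{\ell(h,Y)^2} \leq R_\mu(h) \leq r_1$. A Bernstein-type uniform inequality driven by local Rademacher complexity (in the style of \cite{bartlett2005local}) over the localized loss class $\mathcal{H}^\ell_{r_1}$ then gives, with high probability,
\[
R_\mu(h) - R_{\hat{\mu}}(h) \leq \mathcal{O}\!\left(\hat{\mathfrak{R}}_{\mathcal{S}}(\mathcal{H}^\ell_{\max\{r,r_1\}}) + \sqrt{\tfrac{r_1 \log(1/\delta)}{n}} + \tfrac{\log(1/\delta)}{n}\right),
\]
which supplies the $r_1$-contribution in the slow-rate bracket and the $1/n$ fast-rate term.

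The core step is concentrating the localized KL-DD. Writing out its variational form and using the closed-form optimum in $\alpha$ obtained in Eq.~(\ref{ineq:strong-kl}), the deviation $\mathrm{D}^{h,\mathcal{H}_r}_{\rm KL}(\nu||\mu) - \mathrm{D}^{h,\mathcal{H}_r}_{\rm KL}(\hat{\nu}||\hat{\mu})$ decomposes into a target-side piece $\sup_{h',t}\left(\ex{\nu}{t\ell(h,h')} - \ex{\hat{\nu}}{t\ell(h,h')}\right)$ and a source-side piece involving the exponential $\phi^*(x)=e^x-1$. Because $\ell$ is bounded and the KL-DD is finite, the supremum over $t \geq 0$ can be restricted to a bounded interval; inside this window $\phi^*$ is $e$-Lipschitz (as in Remark~\ref{rem:chi-t-close}), so the vector contraction principle applies. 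On the source side, for any $h \in \mathcal{H}_{r_1}$ and $h' \in \mathcal{H}_r$, the triangle property gives $\ex{\mu}{\ell(h,h')^2} \leq \ex{\mu}{\ell(h,h')} \leq r_1 + r$, and a Talagrand--Bernstein inequality applied over $\mathcal{H}^\ell_{\max\{r,r_1\}}$ yields the $\hat{\mathfrak{R}}_{\mathcal{S}}(\mathcal{H}^\ell_{\max\{r,r_1\}}) + \sqrt{(r_1+r)\log(1/\delta)/n} + \log(1/\delta)/n$ rate. On the target side, the analogous local Rademacher argument over $\mathcal{H}^\ell_r$ produces the $\hat{\mathfrak{R}}_{\mathcal{T}}(\mathcal{H}^\ell_r) + \sqrt{r\log(1/\delta)/m} + \log(1/\delta)/m$ rate.

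Finally, I would combine everything via a union bound (replacing $\delta$ by $\delta/3$) and substitute into the population bound of the first paragraph to obtain the stated inequality. The main obstacle is the KL-DD concentration step: controlling the joint supremum over the hypothesis $h'$ and the scaling parameter $t$ while simultaneously invoking the Bernstein variance bound to recover the Rashomon parameter in the rate. This is where the localization pays off---without it one would only obtain $1/\sqrt{n}$ and $1/\sqrt{m}$ scaling rather than the claimed fast-rate $1/n$ and $1/m$ plus localized $\sqrt{r/n}$ and $\sqrt{r/m}$ terms. The careful a priori boundedness of the optimal $t$, justified from the finiteness of the KL-DD and boundedness of $\ell$, is the small but delicate piece needed to make the contraction step go through.
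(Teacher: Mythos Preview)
Your proposal is correct and follows essentially the same route as the paper: start from the population fast-rate bound obtained by combining Theorem~\ref{thm:lfdd-ppbound} with Lemma~\ref{lem:kl-cumulant-fast}, then concentrate $R_\mu(h)$ via a Talagrand--Bernstein inequality exploiting $\mathrm{Var}_\mu(\ell)\le r_1$, and finally concentrate the localized KL-DD by splitting into a target linear piece and a source exponential piece. The only cosmetic differences are that the paper freezes $t$ at the optimizer $t_0$ (absorbing $|t_0|$ into the $\mathcal{O}(\cdot)$) rather than arguing for an a~priori bounded range, and linearizes the $\log$-difference via the elementary inequality $\log(1+x)\le x$ (the ``$\beta$-trick'' of Lemma~\ref{lem:sample-complex-kl}) instead of appealing to a generic contraction for the $e$-Lipschitz $\phi^*$.
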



\begin{rem}
    \label{rem:extend-jeffereys-chi}
    Due to the non-negativity of $f$-DD,
    a similar generalization bound also
   applies to the Jeffereys divergence (or symmetrized KL divergence) \cite{jeffreys1946invariant} counterpart, which is simply the sum of KL divergence and reverse KL divergence (i.e. $\mathrm{D}_{\rm KL}(\hat{\mu}||\hat{\nu})+\mathrm{D}_{\rm KL}(\hat{\nu}||\hat{\mu})$). Furthermore, considering the fact that $\mathrm{D}_{\rm KL}(\nu||\mu)\leq \log(1+\chi^2(\nu||\mu))\leq \chi^2(\nu||\mu)$ \cite{yury2022information}, one might anticipate a similar bound for $\chi^2$-DD, which we defer to Appendix~\ref{sec:gen-bound-chi}.
\end{rem}


This generalization bound suggests that when $r+r_1$ is small, not only are the first four terms (including the local Rademacher complexities) reduced, but it also causes $\mathcal{O}\pr{\frac{1}{n}+\frac{1}{m}}$ to dominate the convergence rate of the bound. In practice,
when empirical risk of the source domain is always minimized to zero (i.e. the realizable case), then $r_1$ itself may have a fast vanishing rate (e.g., $\mathcal{O}(1/n)$). In Appendix~\ref{sec:threshold learning}, we provide a concrete example to further illustrate the superiority of localized $f$-DD.
\section{Algorithms and Experimental Results}
\label{sec:experiments}
\subsection{Domain Adversarial Learning Algorithm} 
In a practical algorithm, the hypothesis space consists of two components:  the representation part, denoted as $\mathcal{H}_{\rm rep}=\{h_{\rm rep}:\mathcal{X}\to\mathcal{Z}\}$, where $\mathcal{Z}$ is the representation space (e.g., the hidden output of a neural network),  and the classification part, denoted as 
$\mathcal{H}_{\rm cls}=\{ h_{\rm cls}:\mathcal{Z}\to \mathcal{Y}\}$. Therefore, the entire hypothesis space is given by $\mathcal{H}=\{h_{\rm cls}\circ h_{\rm rep}|h_{\rm rep}\in \mathcal{H}_{\rm rep}, h_{\rm cls}\in  \mathcal{H}_{\rm cls}\}$. The training objective in $f$-DAL \citep{acuna2021f} is
\begin{align}
    \label{obj:max-min-da}
    \min_{h\in\mathcal{H}}\max_{h'\in\mathcal{H'}}R_{\hat{\mu}}(h)+ \eta\tilde{d}_{\hat{\mu},\hat{\nu}}(h,h').
\end{align}
Here, $\tilde{d}_{\hat{\mu},\hat{\nu}}(h,h')=\ex{\hat{\nu}}{\hat{\ell}(h,h')}-\ex{\hat{\mu}}{\phi^*\pr{\hat{\ell}(h,h')}}$, where $\eta$ is a trade-off parameter and $\hat{\ell}$ is the surrogate loss used to evaluate the disagreement between $h$ and $h'$, which may or may not be the same as $\ell$. Note that, to better align with our framework, we change the order of $\hat{\mu}$ and $\hat{\nu}$ in $\tilde{d}_{\hat{\mu},\hat{\nu}}$ in the original $f$-DAL. This modification is minor, as in either case, its optimal value belongs to $f$-divergence (such as KL and reverse KL, $\chi^2$ and reverse $\chi^2$).

\begin{table}[ht] %
	\addtolength{\tabcolsep}{2pt}
	\centering
	\centering\caption{Accuracy (\%) 
 on the {Office-31} benchmark.} 
	\label{tbl:sota_office31}
	\resizebox{\textwidth}{!}{%
		\begin{tabular}{lccccccc}
			\toprule
			Method                          & A $\rightarrow$ W     & D $\rightarrow$ W     & W $\rightarrow$ D     & A $\rightarrow$ D     & D $\rightarrow$ A     & W $\rightarrow$ A     & Avg           \\
			\midrule
			ResNet-50 \citep{he2016deep}   & 68.4$\pm$0.2          & 96.7$\pm$0.1          & 99.3$\pm$0.1          & 68.9$\pm$0.2          & 62.5$\pm$0.3          & 60.7$\pm$0.3          & 76.1          \\
			DANN \citep{ganin2016domain}   & 82.0$\pm$0.4          & 96.9$\pm$0.2          & 99.1$\pm$0.1          & 79.7$\pm$0.4          & 68.2$\pm$0.4          & 67.4$\pm$0.5          & 82.2          \\

			 MDD  \citep{zhang2019bridging}                &  {94.5}$\pm$0.3 & 98.4$\pm$0.1          &  \textbf{100.0}$\pm$.0 &  {93.5}$\pm$0.2 &  {74.6}$\pm$0.3 &  {72.2}$\pm$0.1 & {88.9} \\
			 %
    KL \cite{nguyen2022kl}  & 87.9$\pm$0.4 & {99.0}$\pm$0.2 & \textbf{100.0}$\pm$.0 & 85.6$\pm$0.6 & 70.1$\pm$1.1 & 69.3$\pm$0.7 & 85.3  \\
       $f$-DAL \cite{acuna2021f}  & \textbf{95.4}$\pm$0.7  &   {98.8}$\pm$0.1          & \textbf{100.0}$\pm$.0 &  {93.8}$\pm$0.4 & {74.9}$\pm$1.5 &  {74.2}$\pm$0.5 & {89.5}    \\

       \midrule
       Ours (KL-DD) & 94.9$\pm$0.7  &   {98.7}$\pm$0.1           & \textbf{100.0}$\pm$.0 &  \textbf{95.9}$\pm$0.6 & {74.6}$\pm$0.9 &  {74.6}$\pm$0.7 & {89.8}    \\


Ours (${\chi^2}$-DD) & 95.3$\pm$0.2  &   {98.7}$\pm$0.1           & \textbf{100.0}$\pm$.0 &  {95.0}$\pm$0.4 & {73.7}$\pm$0.5 & \textbf{75.6}$\pm$0.2 & {89.7}    \\

Ours (Jeffreys-DD) & 94.9$\pm$0.7   & \textbf{99.1} $\pm$0.2  & \textbf{100.0}$\pm$.0 &  \textbf{95.9}$\pm$0.6 &  \textbf{76.0}$\pm$0.5 & {74.6}$\pm$0.7  & \textbf{90.1}    \\
			\midrule
			\bottomrule
		\end{tabular}		
}
\end{table}
\vspace{-4mm}
\begin{table}[!ht]
    \caption{Accuracy (\%) on the {Office-Home} benchmark.}
    \label{tbl:sota_officehome}
    \resizebox{\textwidth}{!}{%
    \begin{tabular}{lcccccccccccccc} %
      \toprule
      Method   & Ar$\to$Cl & Ar$\to$Pr & Ar$\to$Rw & Cl$\to$Ar & Cl$\to$Pr  & Cl$\to$Rw & Pr$\to$Ar %
      & Pr$\to$Cl & Pr$\to$Rw & Rw$\to$Ar & Rw$\to$Cl & Rw$\to$Pr & Avg           \\
      \midrule
      ResNet-50 \citep{he2016deep}    & 34.9                   & 50.0                   & 58.0                   & 37.4                   & 41.9                    & 46.2                   & 38.5 
                        & 31.2                   & 60.4                   & 53.9                   & 41.2                   & 59.9                  
       & 46.1          
      \\
      DANN \citep{ganin2016domain}   & 45.6                   & 59.3                   & 70.1                   & 47.0                   & 58.5   
                       & 60.9                   & 46.1                    & 43.7                   & 68.5                   & 63.2                   & 51.8                   & 76.8                   
      & 57.6           
       \\
       MDD \citep{zhang2019bridging}              & {54.9}          &   {73.7}          &  {77.8}          &  {60.0}          &  {71.4}   
                &  {71.8}          &  {61.2}           &  {53.6}          &  {78.1}          &  {72.5}          &  {60.2}          & {82.3}          
              & {68.1}

      \\
      $f$-DAL \cite{acuna2021f} &54.7 & 71.7 & 77.8&  {61.0}&  72.6& 72.2 &  60.8 &  53.4 &  80.0 &  {73.3} &  {60.6} &  {83.8}&  {68.5} \\
      \midrule

       Ours (KL-DD) &  {55.3} & {70.8} &  {78.6} &  {62.6}&  \textbf{73.8}&  {73.6} & {62.7} &  {53.4}  & {80.9} & \textbf{75.2} &  {61.3} &  \textbf{84.2} & {69.4} \\
      
        Ours ($\chi^2$-DD) &  55.2 & 68.9 & 79.0 & 62.3 & 73.7  & 73.4  & 62.5 & 53.6   &  \textbf{81.3} & 74.8 &  61.0  &  84.1  & 69.2 \\
        
        Ours (Jeffereys-DD) &  \textbf{55.5} & \textbf{74.9} & \textbf{79.5}  & \textbf{64.3} & \textbf{73.8} & \textbf{73.9} & \textbf{63.9} & \textbf{54.7} & \textbf{81.3} & \textbf{75.2} & \textbf{61.6}   & \textbf{84.2} & \textbf{70.2} \\

       \midrule
      \bottomrule
    \end{tabular}%
  }
\end{table}

Eq.~(\ref{obj:max-min-da}) results in an adversarial training strategy. Specifically, the outer optimization spans the entire hypothesis space. Meanwhile, within the inner optimization, given a $h=h_{\rm rep}\circ h_{\rm cls}$, the representation component $h_{\rm rep}$ is shared for $h'$. In other words, the optimization is carried out for $h'$ in $\mathcal{H}'=\mathcal{H}_{\rm cls}\circ h_{\rm rep}\triangleq\{h_{\rm cls}\circ h_{\rm rep}|h_{\rm cls}\in \mathcal{H}_{\rm cls}\}$. The overall training framework of our $f$-DD is illustrated in Figure~\ref{fig:overview} in Appendix.

Clearly, as also discussed previously, $\max_{h'}\tilde{d}_{\hat{\mu},\hat{\nu}}(h,h')\leq \max_{h'}\abs{\tilde{d}_{\hat{\mu},\hat{\nu}}(h,h')}$, which presents a clear gap between the theory and algorithms in \cite{acuna2021f}. In contrast, this training objective aligns more closely with our $\mathrm{D}_\phi^{h,\mathcal{H}}$. Formally, we have the following result.
\begin{restatable}{prop}{optsolution}
\label{lem:opt-solution}
    Let $d_{\hat{\mu},\hat{\nu}}(h,h')=\ex{\hat{\nu}}{\hat{\ell}(h,h')}-I_{\phi,\hat{\mu}}^h(\hat{\ell}\circ h')$. Assume $\mathcal{H}$ is sufficiently large s.t. $\hat{\ell}:\mathcal{X}\to \mathbb{R}$, 
    we have $\max_{h'}\tilde{d}_{\hat{\mu},\hat{\nu}}(h,h')=\max_{h'}d_{\hat{\mu},\hat{\nu}}(h,h')=\mathrm{D}_\phi^{h,\mathcal{H}'}(\hat{\nu}||\hat{\mu})\leq\mathrm{D}_\phi(\hat{\nu}||\hat{\mu})$.
\end{restatable}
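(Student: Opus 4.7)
The plan is to establish the chain $\max_{h'}\tilde{d}=\max_{h'}d=\mathrm{D}_\phi^{h,\mathcal{H}'}(\hat{\nu}||\hat{\mu})\leq\mathrm{D}_\phi(\hat{\nu}||\hat{\mu})$ step by step, addressing separately the absolute value removal (implicit in the equality with $\tilde{d}$), the scaling parameter $t$, and the variational representation inclusion. The unifying idea is that, under the assumption that $\mathcal{H}$ is large enough for $\hat{\ell}(h,\cdot)$ to range over all real-valued functions on $\mathcal{X}$ as $h'$ varies, both the additive shift $\alpha$ hidden inside $I_{\phi,\hat{\mu}}^h$ and the multiplicative scaling $t$ in the $f$-DD definition can be absorbed by varying $h'$.

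First, I would prove $\max_{h'}\tilde{d}_{\hat{\mu},\hat{\nu}}(h,h')=\max_{h'}d_{\hat{\mu},\hat{\nu}}(h,h')$. One direction is immediate: choosing $\alpha=0$ in the infimum defining $I_{\phi,\hat{\mu}}^h$ yields $d\geq\tilde{d}$ pointwise, hence $\max_{h'}d\geq\max_{h'}\tilde{d}$. For the reverse direction, pulling $\alpha$ inside the $\hat{\nu}$-expectation rewrites
\[
d_{\hat{\mu},\hat{\nu}}(h,h')=\sup_{\alpha\in\mathbb{R}}\{\ex{\hat{\nu}}{\hat{\ell}(h,h')+\alpha}-\ex{\hat{\mu}}{\phi^*(\hat{\ell}(h,h')+\alpha)}\},
\]
so that for any $\alpha$, the bracketed expression equals $\tilde{d}_{\hat{\mu},\hat{\nu}}(h,h'')$ whenever $h''\in\mathcal{H}$ satisfies $\hat{\ell}(h,h'')=\hat{\ell}(h,h')+\alpha$ as a function of $x$. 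The largeness of $\mathcal{H}$ supplies such an $h''$, so each inner term of the $\sup_\alpha$ is dominated by some $\tilde{d}_{\hat{\mu},\hat{\nu}}(h,h'')$, giving $\max_{h'}d\leq\max_{h''}\tilde{d}$.

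Next, I would show $\max_{h'}d_{\hat{\mu},\hat{\nu}}(h,h')=\mathrm{D}_\phi^{h,\mathcal{H}'}(\hat{\nu}||\hat{\mu})$. The right-hand side is a supremum over $(h',t)\in\mathcal{H}'\times\mathbb{R}$, whereas the left-hand side corresponds to the same objective with $t$ fixed at $1$, so trivially LHS $\leq$ RHS. Conversely, for any $(h',t)$ the largeness assumption produces $\tilde{h}'\in\mathcal{H}'$ with $\hat{\ell}(h,\tilde{h}')=t\hat{\ell}(h,h')$, so the RHS objective at $(h',t)$ coincides with the LHS objective at $\tilde{h}'$; hence RHS $\leq$ LHS. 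Finally, $\mathrm{D}_\phi^{h,\mathcal{H}'}(\hat{\nu}||\hat{\mu})\leq\mathrm{D}_\phi(\hat{\nu}||\hat{\mu})$ is a direct consequence of Lemma~\ref{lem:strong-variational}: the variational representation takes the supremum over all $g\in\mathcal{G}$, while $\mathrm{D}_\phi^{h,\mathcal{H}'}$ only takes the supremum over the subclass $\{t\hat{\ell}(h,h')\mid t\in\mathbb{R},\,h'\in\mathcal{H}'\}\subseteq\mathcal{G}$.

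The main obstacle is rigorously cashing in the informal ``$\mathcal{H}$ is sufficiently large'' hypothesis. In the $f$-DAL architecture, $\mathcal{H}'$ fixes the shared representation and only lets the classifier head vary, so realizability of an arbitrarily shifted or rescaled $\hat{\ell}(h,\cdot)$ as $\hat{\ell}(h,\tilde{h}')$ is a modelling assumption that must be declared explicitly rather than derived. One also has to ensure that the range condition $t\hat{\ell}(h,h')+\alpha\in\mathrm{dom}(\phi^*)$ is preserved under the shift and scaling, so that the variational identities underlying each step of the chain remain applicable.
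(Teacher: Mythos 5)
Your proof is correct and follows essentially the same approach as the paper's, which is a two-sentence argument: under the assumption that $\hat{\ell}(h,\cdot)$ ranges over all real-valued functions on $\mathcal{X}$ as $h'$ varies, the additive shift $\alpha$ and multiplicative scaling $t$ are absorbed into the choice of $h'$, giving the three equalities, and the final inequality follows because $t\hat{\ell}\circ\mathcal{H}'\subseteq\mathcal{G}$. You simply unpack each of those absorptions as a two-sided inequality, and you also correctly flag (as the paper does not) that the ``sufficiently large $\mathcal{H}$'' hypothesis and the domain condition on $\phi^*$ must be treated as modelling assumptions rather than derived facts.
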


In our algorithm, we use $d_{\hat{\mu},\hat{\nu}}(h,h')$ to replace $\tilde{d}_{\hat{\mu},\hat{\nu}}(h,h')$ in Eq.~(\ref{obj:max-min-da}). Proposition~\ref{lem:opt-solution} implies that either the optimal $\tilde{d}_{\hat{\mu},\hat{\nu}}(h,h')$ or the optimal $d_{\hat{\mu},\hat{\nu}}(h,h')$ coincides with $f$-DD. 
Moreover, as $\hat{\ell}$ is typically unbounded in practice (e.g., cross-entropy loss), considering the unbounded nature of $t$, Proposition~\ref{lem:opt-solution} suggests an equivalence between optimizing $\hat{\ell}(h,h')$ through $h'$ and optimizing $t\ell(h,h')$ through both $t$ and $h'$. In this sense, $f$-DAL has already considered the scaling transformation.
Later on we will empirically investigate whether explicitly optimizing $t$ is necessary. 

Furthermore, we highlight that the training objective in our algorithm, namely Eq.~(\ref{obj:max-min-da}) with $\tilde{d}_{\hat{\mu},\hat{\nu}}(h,h')$ replaced by $d_{\hat{\mu},\hat{\nu}}(h,h')$,  is de facto motivated by the insights from Section~\ref{sec:localization}. In that section, we demonstrate that when the source domain error is small, the $f$-DD term without the square-root function provides a more accurate reflection of generalization (see, for instance, Remark~\ref{rem:discuss-local}). Given that the empirical risk of the source domain is always minimized during training, we expect the final hypothesis to fall within the subset of $\mathcal{H}$ (i.e. Rashomon set with $r_1$).


\subsection{Experiments}
The goals of our experiments unfold in three aspects: 1) demonstrating that utilizing the $f$-DD measure (i.e., using $d_{\hat{\mu},\hat{\nu}}(h,h')$ as the training objective) leads to superior performance on the benchmarks; 2) confirming that the absolute discrepancy (i.e. $\abs{\tilde{d}_{\hat{\mu},\hat{\nu}}(h,h')}$) leads to a degradation in performance; 3) showing that optimizing over $t$ may be unnecessary in practical scenarios.

\begin{wraptable}{r}{7cm}
  \centering
  \caption{Accuracy  (\%)  on the Digits datasets}
    \begin{tabular}{cccc}
    \toprule
     Method     & M$\to$U  & U$\to$M  & Avg \\
      \midrule
   
    DANN \cite{ganin2016domain}  & 91.8  & 94.7  & 93.3 \\
    $f$-DAL \cite{acuna2021f} & {95.3} &  {97.3} &  {96.3} \\
    \midrule
   Ours (KL-DD) & {95.7} & 98.1 & 96.9\\
   Ours ($\chi^2$-DD) & 95.4 & 97.3 & 96.4\\
   Ours (Jeffereys-DD) & \textbf{95.9} & \textbf{98.3} & \textbf{97.1} \\
    \bottomrule
    \end{tabular}%
  \label{tbl:digits_table_acc}%
\end{wraptable}
\paragraph{Dataset} We use three benchmark datasets: 1) the Office31 dataset \cite{saenko2010adapting}, which comprises $4,652$ images across $31$ categories. This dataset includes three domains: Amazon (\textbf{A}), Webcam (\textbf{W}) and DSLR (\textbf{D}); 2) the Office-Home dataset \cite{venkateswara2017deep}, consisting of $15,500$ images distributed among four domains: Artistic images (\textbf{Ar}), Clip Art (\textbf{Cl}), Product images (\textbf{Pr}), and Real-world images (\textbf{Rw}); and 3) two Digits datasets,
MNIST and USPS \cite{hull1994database} with the associated domain adaptation tasks MNIST $\to$ USPS (\textbf{M}$\to$\textbf{U}) and USPS $\to$ MNIST (\textbf{U}$\to$\textbf{M}). We follow the splits and evaluation protocol established by \cite{long2018conditional}, where MNIST and USPS have $60,000$ and $7,291$ training images, as well as $10,000$ and $2,007$ test images, respectively.

\paragraph{Discrepancy Measures} In our algorithms, we mainly focus on three specific discrepancy measures: KL-DD, $\chi^2$-DD and the weighted Jeffereys-DD.
Specifically, 
the weighted Jeffereys-DD is $\gamma_1\mathrm{D}_{\rm KL}(\hat{\mu}||\hat{\nu})+\gamma_2\mathrm{D}_{\rm KL}(\hat{\nu}||\hat{\mu})$, where $\gamma_1$ and $\gamma_2$ are tunable hyper-parameters.
We note that Jeffereys divergence is not explored in \cite{acuna2021f} while it is also an $f$-divergence with $\phi(x)=(x-1)\log{x}$, and advantages of Jeffereys divergence are studied in \cite{bu2022characterizing,nguyen2022kl,wang2023informationtheoretic}.

\paragraph{Baselines and Implementation Details} The primary baseline for our study is the preceding $f$-DAL \cite{acuna2021f}. Note that, with the exception of Digits, the results reported by \cite{acuna2021f} for $f$-DAL rely on maximum values obtained from their $\chi^2$-divergence and weighted Jensen-Shannon divergence across individual sub-tasks. In our comparison, we also include DANN \cite{ganin2016domain} and MDD \cite{zhang2019bridging} as they may be interpreted as special cases of $f$-DAL. Furthermore, we compare the results reported by \cite{nguyen2022kl} on the Office-31 dataset, although denoted as ``KL'', their method incorporates Jeffereys divergence in their algorithms.

\begin{wrapfigure}{l}{0.5\textwidth}
  \centering
  \includegraphics[width=.235\textwidth]{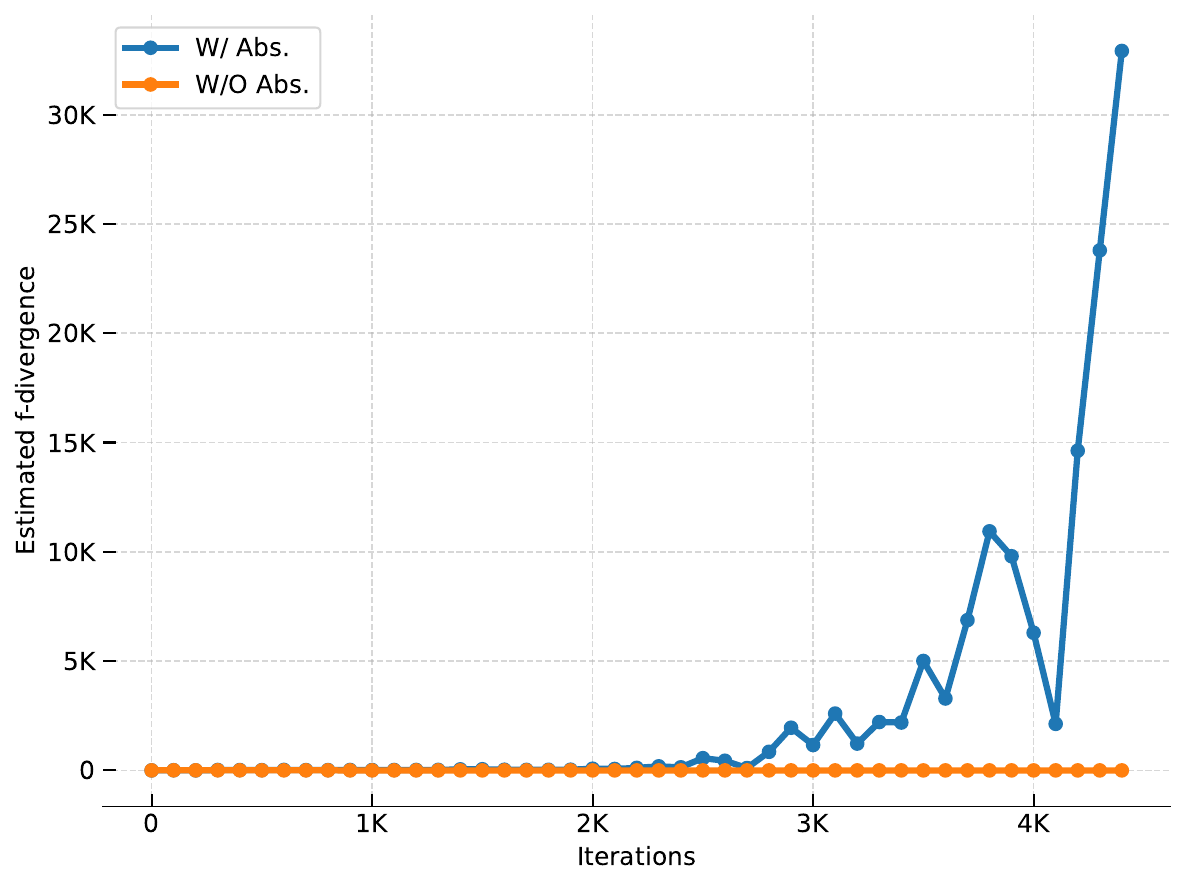} 
  \includegraphics[width=.235\textwidth]{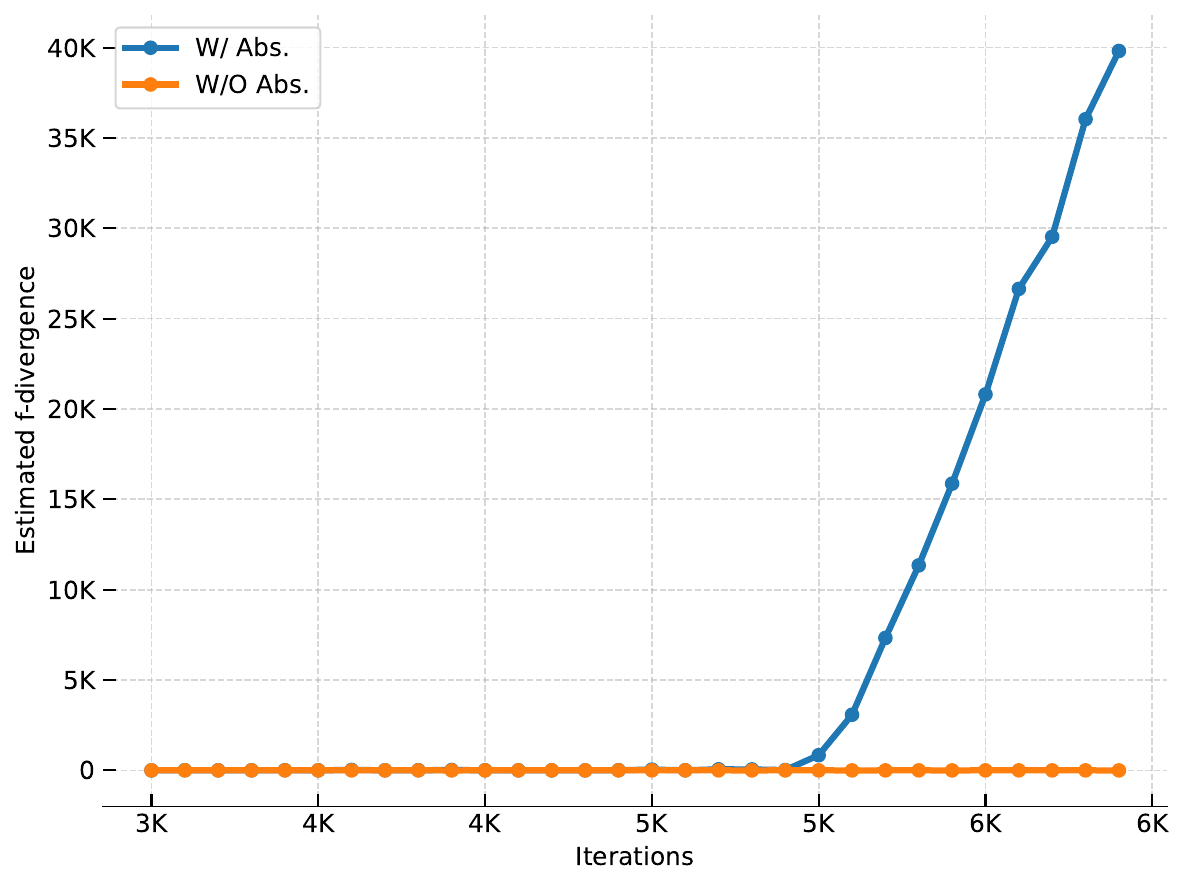} 
  \caption{Failure of absolute discrepancy. The $y$-axis is the estimated  $f$-divergence.
  }
  \label{fig:abs-fail}
\end{wrapfigure}
Our implementation closely follows \cite{acuna2021f}. For Office-31 and Office-Home, we utilize a pretrained ResNet-50 \cite{he2016deep} as the backbone network, while both the primary classifier in $h$ and the auxiliary classifier in $h'$ consist of two-layer Leaky-ReLU networks. In the case of Digits, we use LeNet \cite{lecun1998gradient} as the backbone network and a two-layer ReLU network with Dropout ($0.5$) for the classifiers. Other hyper-parameter settings and the evaluation protocol remain consistent with \cite{acuna2021f}, and the reported results represent average accuracies over $3$ different random seeds.


\begin{wraptable}{r}{7cm}
  \centering
  \caption{Comparison between KL-DD and OptKL-DD}
    \begin{tabular}{cccc}
    \toprule
     Method     & Office-31  & Office-Home & Digits \\
      \midrule
   
    KL-DD  & 89.8  & 69.4 & 96.9  \\
    OptKL-DD & 89.6 &  {69.2} &96.5 \\
    \bottomrule
    \end{tabular}%
  \label{tbl:compare-kl}%
\end{wraptable}
\paragraph{Boosted Benchmark Performance by $f$-DD} Tables~\ref{tbl:sota_office31}-\ref{tbl:digits_table_acc} collectively demonstrate the superior performance of our weighted Jeffereys-DD compared to other methods across the three benchmarks. Notably, the most significant improvement over $f$-DAL is observed on Office-Home ($70.2\%$ vs. $68.5\%$). Remarkably, this performance even surpasses the combination of $f$-DAL with a sampling-based implicit alignment approach \cite{jiang2020implicit} (See Table~\ref{tbl:compare-f-dal-dd} in Appendix), specifically designed to handle the label shift issues. In addition, unlike findings in \cite{acuna2021f}, where $\chi^2$ outperforms other methods on nearly all tasks, our use of a tighter variational representation-based discrepancy reveals that $\chi^2$ is no longer superior to KL. In fact, our KL-DD slightly outperforms $\chi^2$-DD in all three benchmarks.


\paragraph{Failure of Absolute Discrepancy} We also perform experiments on \textbf{A}$\to$\textbf{D} and \textbf{Ar}$\to$\textbf{Cl} using the absolute discrepancy (i.e., $\max\abs{\tilde{d}_{\hat{\mu},\hat{\nu}}(h,h')}$). Specifically, we compare the $\chi^2$-based discrepancy with (w/) and without (w/o) the absolute value function. Figure~\ref{fig:abs-fail} illustrates that such a discrepancy can easily explode during training, demonstrating its tendency to overestimate $f$-divergence. Additional results for KL are given in Figure~\ref{fig:vis-absolute} in Appendix.

\paragraph{Optimizing over $t$} In the paragraph following Proposition~\ref{lem:opt-solution}, we discuss the observation that optimizing over $t$ may not be necessary. Empirical evidence indicates that setting $t=1$ with hyper-parameter tuning (e.g., through $\eta$) obtains satisfactory performance. Now, let's investigate the selection of $t$ for KL-DD. Instead of using a stochastic gradient-based optimizer for updating $t$, we invoke a quadratic approximation for the optimal $t$, as studied in \cite{birrell2022optimizing}. Specifically, we define a Gibbs measure $d\mu'=\frac{e^{{\ell}(h,h')}d\mu}{\ex{\mu}{e^{{\ell}(h,h')}}}$, then the optimal $t^*\approx 1+\Delta t^*$, where $\Delta t^*=\frac{\ex{\nu}{{\ell}(h,h')}-\ex{\mu'}{\ell(h,h')}}{\mathrm{Var}_{\mu'}(\ell(h,h'))}$. Interested readers can find a detailed derivation of this approximation in \cite[Appendix~B]{birrell2022optimizing}. Substituting $t=1+\Delta t^*$, we have the training objective for approximately optimal KL-DD (OptKL-DD). Table~\ref{tbl:compare-kl} presents an empirical comparison between OptKL-DD and the original KL-DD, where $t$ is simply set to 1. The results indicate that OptKL-DD does not provide any improvement on these benchmarks. Similar observations also hold for $\chi^2$, in which case optimal $t$ has an analytic form (see Appendix~\ref{sec:more-experiments}), suggesting that using $t=1$, at least for KL and $\chi^2$, might be sufficient in practice.

Additional experimental results, including an ablation study on $\eta$, t-SNE \cite{van2008visualizing} visualizations and other comparisons, can be found in Appendix~\ref{sec:more-experiments}.

\section{Other Related Works}
\paragraph{Domain Adaptation}
Apart from those mentioned in the introduction, various other discrepancy measures are explored in DA theories and algorithms. These include the Wasserstein distance \cite{flamary2016optimal, courty2017joint, shen2018wasserstein}, Maximum Mean Discrepancy \cite{huang2006correcting, gong2013connecting}, second-order statistics alignment \cite{sun2016deep, sun2016return}, transfer exponents \cite{hanneke2019value, hanneke2023limits}, Integral Probability Metrics \cite{dhouib2022connecting} and so on. 
Notably, \cite{dhouib2022connecting} defines a general Integral Probability Metrics (IPMs)-based discrepancy measure. It's noteworthy that the intersection of the IPMs family and the $f$-divergence family results in the total variation. Consequently, both corresponding discrepancy measures can consider $\mathcal{H}\Delta\mathcal{H}$-divergence as a special case. 
Additionally, one of our baseline models \cite{nguyen2022kl} diverges from the adversarial training strategy. Instead, they directly minimize the KL divergence between two isotropic Gaussian distributions (source domain Gaussian and target domain Gaussian) in the representation space. 
Here, the Gaussian means and variances correspond to the hidden outputs of the representation network. 
For further literature on DA theory, readers are directed to a recent survey by \cite{redko2020survey}.
\paragraph{$f$-divergence} 
Moreover, the combination of $f$-divergence and adversarial training schemes has been extensively studied in generative models, including $f$-GAN \cite{nowozin2016f, song2020bridging}, $\chi^2$-GAN \cite{tao2018chi} and others. In the DA context, \cite{wu2019domain} introduce a $f$-divergence-based discrepancy measure while still relying on Lemma~\ref{lem:weak-variational} and focusing solely on the Jensen-Shannon case. Additionally, \cite{mansour2009multiple} investigates $\alpha$-R{\'e}nyi divergence for multi-source DA, and \cite{bruns2022tailored} provides some intriguing interpretations of $\chi^2$-divergence-based generalization bound for covariate shifts.
 


\section{Conclusion and Future Work}
In this work, we present an improved approach for integrating $f$-divergence into DA theory. Theoretical contributions include novel DA generalization bounds, including fast-rate bounds via localization. On the practical front, the revised $f$-divergence-based discrepancy improves the benchmark performance.  Several promising future directions emerge from our work. Firstly, beyond its usefulness for local Rademacher complexity, the Rashomon set $\mathcal{H}_r$ also relates to another generalization measure, Rashomon ratio \cite{semenova2022existence}, which may give an alternative perspective on generalization in DA. Additionally, exploring transfer component-based analysis \cite{hanneke2019value} for tight minimax rates in DA, invoking a power transformation instead of the affine transformation in $f$-DD, holds promise.

\begin{ack}
This work is supported partly by an NSERC Discovery grant. 
The authors would like to thank Lo{\"\i}c Simon for bringing reference \cite{siry2024theoretical} to their attention and for the insightful feedback provided on this work. The authors also thank Fanshuang Kong for the extensive experimental guidance provided throughout this project. Furthermore, the authors are thankful to the anonymous AC and reviewers for their careful reading and valuable suggestions.
\end{ack}


\bibliographystyle{unsrtnat}
\bibliography{ref}

\newpage
\newpage
\onecolumn
\begin{appendices}
The structure of Appendix is outlined as follows: Section~\ref{sec:notation-sum} provides a table summarizing the notations used throughout the paper. In Section~\ref{sec:useful-lemmas}, we present a collection of technical lemmas crucial to our analysis. Additional variational representations for certain $f$-divergences are explored in Section~\ref{sec:other-divergence}, where we also further discuss on why using a tighter variational representation of $f$-divergence is important. Section~\ref{sec:proofs-more} restates our theoretical results, provides detailed proofs, and introduces supplementary theoretical findings. For more details about experiments and additional empirical results, refer to Section~\ref{sec:more-experiments}.

\section{Summary of Notations}
\label{sec:notation-sum}
For easy reference, Table~\ref{tbl:notation} summarizes the key notations used in this paper.

\begin{table*}[ht]
    \caption{Summary of notations.}
    \label{tbl:notation}

    \begin{center}
    \begin{tabular}{cc} %
      \toprule
      Notation   & Definition        \\
      \midrule
        
       $\mathcal{X}$, $\mathcal{Y}$, $\mathcal{H}$ &  input, label and hypothesis space  \\
       $\mu$, $\nu$ & source domain distribution and target domain distribution\\
        $\mathcal{S}$, $\mathcal{T}$ &  source sample $\mathcal{S}=\{(X_i,Y_i)\}_{i=1}^n$ and target sample $\mathcal{T}=\{X_i\}_{i=1}^m$ \\
        $\hat{\mu}$, $\hat{\nu}$ & empirical source  distribution and empirical target distribution\\
        $\ell(h(x),h'(x))$ or $\ell(h,h')$ &  loss between the predictions returned by $h$ and $h'$  \\
        $R_\nu(h)$, $R_\mu(h)$ & $R_{\nu}(h)\triangleq \ex{\nu}{\ell(h(X),Y)}$, $R_{\mu}(h)\triangleq \ex{\mu}{\ell(h(X),Y)}$\\
        $R_{\hat{\mu}}(h)$ & $\frac{1}{n}\sum_{i=1}^n\ell(h(X),Y)$\\
        $\mathrm{D}_\phi(P||Q)$ & $\ex{Q}{\phi\pr{\frac{dP}{dQ}}}$; $f$-divergence between $P$ and $Q$\\
         $\widetilde{\rm D}^{h,\mathcal{H}}_{\phi}(\mu||\nu)$ & $\sup_{h'\in \mathcal{H}}\abs{\ex{\mu}{\ell(h,h')}
    -I_{\phi,\nu}^{h}(\ell\circ h')}$\\
    $\lambda^*$ & $\min_{h^*\in\mathcal{H}} R_\mu(h^*)+R_\nu(h^*)$\\
    $\hat{\mathfrak{R}}_S(\mathcal{F})$ & $ \ex{\varepsilon_{1:n}}{\sup_{f\in \mathcal{F}}{\frac{1}{n}\sum_{i=1}^n \varepsilon_if(Z_i)}}$; (empirical) Rademacher complexity\\
    $\mathcal{H}^{\ell}$ & $\{x\mapsto\ell(h(x),h'(x))|h,h'\in\mathcal{H}\}$\\
     ${\rm D}^{h,\mathcal{H}}_{\phi}(\nu||\mu)$ &$\sup_{h'\in \mathcal{H},t\in\mathbb{R}}{\ex{\nu}{t\cdot\ell(h,h')}
    -I_{\phi,\mu}^{h}(t\ell\circ h')}$; $f$-DD\\
    $I_{\phi,\mu}^{h}(t\ell\circ h')$ & $\inf_{\alpha}\left\{\ex{\mu}{\phi^*(t\cdot\ell(h, h')+\alpha)}-\alpha\right\}$\\
    $\psi(x)$ & $\phi(x+1)$\\
    $K_{h',\mu}(t)$ & $\inf_{\alpha}\ex{\mu}{\psi^*(t\cdot\ell(h,h')+\alpha)}$\\
    $K_\mu(t)$ & $\sup_{h'\in\mathcal{H}} K_{h',\mu}(t)$\\
    $t_0$ & optimal $t$ achieving the superum in ${\rm D}^{h,\mathcal{H}}_{\phi}(\nu||\mu)$\\
    $
\mathcal{H}_r$ & $\{h\in\mathcal{H}|R_\mu(h) \leq r\}$\\
${\rm D}^{h,\mathcal{H}_{r}}_{\phi}(\nu||\mu)$ & $\sup_{h'\in \mathcal{H}_{r}, t\geq 0}\ex{\nu}{t\ell(h,h')}
    -I^h_{\phi,\mu}(t\ell \circ h')$\\
    $\lambda_{r}^*$ & $\min_{h^*\in\mathcal{H}_{r}} R_\mu(h^*)+R_\nu(h^*)$\\
    $R^r_\mu(h)$ & $\sup_{h'\in\mathcal{H}_r}\ex{\mu}{\ell(h,h')}$\\
    $\mathcal{H}_{\rm rep}$, $\mathcal{H}_{\rm cls}$  & $\{h_{\rm rep}:\mathcal{X}\to\mathcal{Z}\}$ and
 $\{ h_{\rm cls}:\mathcal{Z}\to \mathcal{Y}\}$\\
 $\tilde{d}_{\hat{\mu},\hat{\nu}}(h,h')$ & $\ex{\hat{\mu}}{\hat{\ell}(h,h')}-\ex{\hat{\nu}}{\phi^*\pr{\hat{\ell}(h,h')}}$\\
 $\hat{\ell}$ & surrogate loss used in practical algorithms\\
 $d_{\hat{\mu},\hat{\nu}}(h,h')$ & $\ex{\hat{\nu}}{\hat{\ell}(h,h')}-I_{\phi,\hat{\mu}}^h(\hat{\ell}\circ h')$\\
       \midrule
      \bottomrule
    \end{tabular}%
    \end{center}
  \end{table*}

\section{Some Technical Lemmas}
\label{sec:useful-lemmas}
The well-known Donsker-Varadhan representation of KL divergence is given below.
\begin{lem}[Donsker and Varadhan's variational formula]
\label{lem:DV-KL}
Let $Q$, $P$ be probability measures on $\Theta$, for any bounded measurable function $f:\Theta\rightarrow \mathbb{R}$, we have
$
\mathrm{D_{KL}}(Q||P) = \sup_{f} \ex{\theta\sim Q}{f(\theta)}-\log\ex{\theta\sim P}{\exp{f(\theta)}}.
$
\end{lem}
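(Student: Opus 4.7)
}
The plan is to prove the two inequalities $\leq$ and $\geq$ separately. For the ``$\geq$'' direction (that the supremum is an upper bound), the key tool will be the classical change-of-measure identity via a Gibbs tilt. Specifically, for any bounded measurable $f:\Theta\to\mathbb{R}$, define the tilted probability measure $R$ on $\Theta$ by
\[
\frac{dR}{dP}(\theta)\;=\;\frac{e^{f(\theta)}}{\ex{\theta\sim P}{e^{f(\theta)}}},
\]
which is well-defined because $f$ is bounded (so the normalizer is finite and positive). Assuming $Q\ll P$ (otherwise $\mathrm{D_{KL}}(Q||P)=\infty$ and the bound is trivial, see below), we have $Q\ll R$ and the chain-rule computation
\[
\mathrm{D_{KL}}(Q||R) \;=\; \ex{Q}{\log\frac{dQ}{dP}}\;-\;\ex{Q}{f}\;+\;\log\ex{P}{e^f}\;=\;\mathrm{D_{KL}}(Q||P)-\ex{Q}{f}+\log\ex{P}{e^f}.
\]
Since $\mathrm{D_{KL}}(Q||R)\geq 0$ by Gibbs' inequality, rearranging yields $\ex{Q}{f}-\log\ex{P}{e^f}\leq \mathrm{D_{KL}}(Q||P)$. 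Taking the supremum over bounded $f$ gives the first inequality.

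For the ``$\leq$'' direction, I plan to achieve (or approach) equality by choosing $f$ close to $\log(dQ/dP)$. In the ideal case where $\log(dQ/dP)$ is bounded, plugging it directly into the functional yields $\ex{Q}{\log(dQ/dP)}-\log\ex{P}{(dQ/dP)}=\mathrm{D_{KL}}(Q||P)-\log 1=\mathrm{D_{KL}}(Q||P)$, giving equality. In general, $\log(dQ/dP)$ need not be bounded, so I would use the truncation $f_n(\theta):=\max\bigl(-n,\min(n,\log\tfrac{dQ}{dP}(\theta))\bigr)$, which is bounded and measurable. Then by monotone/dominated convergence applied to the positive and negative parts respectively, $\ex{Q}{f_n}\to \mathrm{D_{KL}}(Q||P)$ and $\ex{P}{e^{f_n}}\to \ex{P}{dQ/dP}\leq 1$, so $\log\ex{P}{e^{f_n}}\to$ a quantity $\leq 0$, and thus $\liminf_{n\to\infty}\bigl(\ex{Q}{f_n}-\log\ex{P}{e^{f_n}}\bigr)\geq \mathrm{D_{KL}}(Q||P)$.

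Finally, I would treat the degenerate case $Q\not\ll P$ separately: pick $A$ with $P(A)=0$ and $Q(A)>0$ and take $f=c\,\mathbb{1}_A$ for arbitrarily large $c>0$; then $\ex{Q}{f}-\log\ex{P}{e^f}=cQ(A)-\log 1\to\infty$ as $c\to\infty$, matching $\mathrm{D_{KL}}(Q||P)=\infty$. The main technical obstacle is the unboundedness of $\log(dQ/dP)$ in the achievability step; the truncation plus dominated/monotone convergence argument handles it, with care needed to justify exchanging limits (especially when $\mathrm{D_{KL}}(Q||P)=\infty$, where one argues $\ex{Q}{f_n}\uparrow+\infty$ dominates).
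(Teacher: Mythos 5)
The paper does not supply a proof of Lemma~\ref{lem:DV-KL} in the appendix; it is stated as a classical result (Donsker--Varadhan). The only place the paper touches the formula is in Section~\ref{sec:preliminary}, Eq.~(\ref{ineq:strong-kl}), where it is \emph{recovered} as a corollary of the general $f$-divergence variational representation (Lemma~\ref{lem:strong-variational}) by specializing $\phi(x)=x\log x - x + 1$, $\phi^*(y)=e^y-1$, and optimizing out the shift $\alpha$ to get $\alpha^*=-\log\ex{Q}{e^{g}}$. Your proof is correct but goes a genuinely different route: it is a self-contained, first-principles argument. The ``$\leq$'' direction via the Gibbs tilt $dR/dP = e^{f}/\ex{P}{e^{f}}$, the chain-rule identity $\mathrm{D_{KL}}(Q\|R)=\mathrm{D_{KL}}(Q\|P)-\ex{Q}{f}+\log\ex{P}{e^{f}}$, and Gibbs' inequality $\mathrm{D_{KL}}(Q\|R)\ge 0$ is exactly the standard textbook argument, and the achievability direction with the truncations $f_n$ of $\log(dQ/dP)$ plus monotone/dominated convergence handles the unboundedness correctly (the negative part of $\log(dQ/dP)$ is always $Q$-integrable since $x\log x\ge -1/e$, so $\ex{Q}{f_n}\to\mathrm{D_{KL}}(Q\|P)$ holds in $[0,\infty]$, and $e^{f_n}\le\max(1,dQ/dP)$ gives a $P$-integrable dominating function so that $\log\ex{P}{e^{f_n}}\to 0$). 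The degenerate case $Q\not\ll P$ is also dispatched properly. The trade-off: the paper's route presupposes the (harder) general $f$-divergence representation and gets DV for free, while your argument is elementary and requires nothing beyond non-negativity of KL and basic measure-theoretic limit theorems, which is arguably cleaner for a standalone lemma and also makes the logical dependency run the right way (DV is typically used as a building block \emph{for} results like Lemma~\ref{lem:strong-variational}, not the reverse).
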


The following lemma is largely used.
\begin{lem}
Let $\phi^*$ be the Fenchel conjugate of $\phi$ and let $\psi(x)=\phi(x+1)$, then $\psi^*(x)=\phi^*(x)-x$. Furthermore, if $\phi$ satisfies $\phi(1)=0$, we have $\psi^*(x)\geq 0$, or equivalently $\phi^*(x)\geq x$.
    \label{lem:non-dec-convex}
\end{lem}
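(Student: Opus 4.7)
The plan is to prove the two claims directly from the definition of the Fenchel conjugate, $\phi^*(y) = \sup_x \{xy - \phi(x)\}$, with a single change of variables handling the first part and a one‑line evaluation handling the second.

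For the identity $\psi^*(y) = \phi^*(y) - y$, I would start from
\[
\psi^*(y) = \sup_{x} \bigl\{ xy - \psi(x) \bigr\} = \sup_{x} \bigl\{ xy - \phi(x+1) \bigr\},
\]
and substitute $u = x + 1$. This rewrites the expression as $\sup_u \bigl\{ (u-1)y - \phi(u) \bigr\} = \sup_u \bigl\{ uy - \phi(u) \bigr\} - y$, and the first term is exactly $\phi^*(y)$. (If one wants to be careful about domains, note that the change of variable $u \mapsto u-1$ is a bijection on $\mathbb{R}$, so no additional work is needed to justify the equality of the two suprema.)

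For the nonnegativity statement, the assumption $\phi(1) = 0$ translates to $\psi(0) = \phi(0+1) = 0$. Therefore
\[
\psi^*(y) = \sup_{x} \bigl\{ xy - \psi(x) \bigr\} \geq 0 \cdot y - \psi(0) = 0,
\]
simply by choosing $x = 0$ in the supremum. Combining this with the identity just proved gives $\phi^*(y) - y = \psi^*(y) \geq 0$, i.e., $\phi^*(y) \geq y$ for all $y$, which is the stated equivalent form.

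There is essentially no obstacle here: the lemma is a routine consequence of unfolding the definition of the convex conjugate and performing a linear shift of variable. The only mild care needed is to confirm that the shift $x \mapsto x+1$ is a bijection of the domain over which the supremum is taken (which it is, since the conjugate is defined on all of $\mathbb{R}$), so the two suprema genuinely coincide rather than being related by an inequality.
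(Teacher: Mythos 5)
Your proof is correct and follows essentially the same route as the paper: the identity $\psi^*(y)=\phi^*(y)-y$ is obtained by the same linear change of variable in the supremum, and the nonnegativity is a one-line evaluation of the supremum at a single point. The only cosmetic difference is that the paper plugs $t=1$ into $\phi^*$ directly to get $\phi^*(x)\ge x-\phi(1)=x$, whereas you plug $x=0$ into $\psi^*$ and then translate back; these are the same computation.
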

\begin{proof}
    By definition, $\psi^*(x)=\sup_t xt-\phi(t+1)$. Let $t'=t+1$, then $\psi^*(x)=\sup_{t'} x(t'-1)-\phi(t')=\phi^*(x)-x$. If $\phi(1)=0$, then $\phi^*(x)=\sup_t xt-\phi(t)\geq x-\phi(1)=x$. This completes the proof.
\end{proof}

\begin{defn}[Empirical Rademacher Complexity {\cite{bartlett2002rademacher}}]
    For any function class $\mathcal{F}=\{f:
\mathcal{Z}\to\mathbb{R}\}$, the empirical Rademacher complexity is defined as 
$\hat{\mathfrak{R}}_S(\mathcal{F})\triangleq \ex{\varepsilon_{1:n}}{\sup_{f\in \mathcal{F}}{\frac{1}{n}\sum_{i=1}^n \varepsilon_if(Z_i)}}
$, where $S=\{Z_i\}_{i=1}^n$ and $\varepsilon_{1:n}$ is a sequence of i.i.d. Rademacher variables.
\end{defn}

The Rademacher complexity-based generalization bound is given below.

\begin{lem}[{\cite[Theorem~3.3]{mohri2018foundations}}]
\label{lem:rad-bound}
    Let $\mathcal{F}$ be a family of functions mapping from $\mathcal{Z}$ to $[0,1]$ and let i.i.d. sample $S=\{Z_i\}_{i=1}^n$, we have $\ex{S}{\sup_{f\in\mathcal{F}}\pr{\ex{Z}{f(Z)}-\frac{1}{n}\sum_{i=1}^nf(Z_i)}}\leq 2\ex{S}{\hat{\mathfrak{R}}_{{S}}(\mathcal{F})}$. 
    Then for any $f\in\mathcal{F}$, with probability at least $1-\delta$ over the draw of $S$, we have
    \[
    \ex{}{f(Z)}\leq \frac{1}{n}\sum_{i=1}^nf(Z_i)+2\hat{\mathfrak{R}}_S(\mathcal{F})+\mathcal{O}\pr{\sqrt{\frac{\log(1/\delta)}{n}}}.
    \]
\end{lem}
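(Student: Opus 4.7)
The plan is to establish the high-probability bound in two stages: a symmetrization argument that yields the in-expectation inequality stated in the lemma, followed by a McDiarmid-style concentration step that upgrades it to a tail bound.

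Define $\Phi(S) \triangleq \sup_{f \in \mathcal{F}}\pr{\ex{}{f(Z)} - \frac{1}{n}\sum_{i=1}^n f(Z_i)}$. For the first inequality, I would introduce an independent ghost sample $S' = \{Z_i'\}_{i=1}^n$ drawn i.i.d.~from the same distribution as $S$, rewrite $\ex{}{f(Z)} = \ex{S'}{\frac{1}{n}\sum_i f(Z_i')}$, and apply Jensen's inequality to pull the supremum outside the $S'$-expectation, obtaining
\[
\ex{S}{\Phi(S)} \leq \ex{S,S'}{\sup_{f \in \mathcal{F}} \frac{1}{n}\sum_{i=1}^n \pr{f(Z_i') - f(Z_i)}}.
\]
Because $(Z_i, Z_i')$ is exchangeable for each $i$, inserting independent Rademacher signs $\varepsilon_i$ multiplying each summand preserves the joint law. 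Subadditivity of the supremum then splits the signed sum into two identically distributed Rademacher averages, delivering $\ex{S}{\Phi(S)} \leq 2\,\ex{S}{\hat{\mathfrak{R}}_S(\mathcal{F})}$, which is exactly the in-expectation bound stated in the lemma.

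For the concentration step, I would invoke McDiarmid's bounded-differences inequality. Since $f$ takes values in $[0,1]$, replacing a single $Z_i$ in $S$ alters $\Phi(S)$ by at most $1/n$, and likewise alters $\hat{\mathfrak{R}}_S(\mathcal{F})$ by at most $1/n$ (using $|\varepsilon_i| = 1$). Hence with probability at least $1 - \delta/2$, $\Phi(S) \leq \ex{S}{\Phi(S)} + \sqrt{\log(2/\delta)/(2n)}$, and with probability at least $1 - \delta/2$, $\ex{S}{\hat{\mathfrak{R}}_S(\mathcal{F})} \leq \hat{\mathfrak{R}}_S(\mathcal{F}) + \sqrt{\log(2/\delta)/(2n)}$. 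A union bound combined with the symmetrization inequality, together with the trivial fact that $\ex{}{f(Z)} - \frac{1}{n}\sum_i f(Z_i) \leq \Phi(S)$ for every $f \in \mathcal{F}$, yields the claimed inequality.

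The main obstacle is the symmetrization step: one must carefully justify the Rademacher insertion via exchangeability of $(Z_i, Z_i')$ and then split $\sup_f \frac{1}{n}\sum_i \varepsilon_i \pr{f(Z_i') - f(Z_i)}$ into two copies of the Rademacher process to recover the factor of $2$. Once this is done, the McDiarmid stage is routine, with the boundedness $f \in [0,1]$ being the only substantive ingredient needed to verify the $1/n$ bounded-differences constant.
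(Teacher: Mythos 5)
Your proof is correct, and it reproduces the standard argument for this result. Note that the paper itself does not prove this lemma: it is cited verbatim from Mohri, Rostamizadeh, and Talwalkar's \emph{Foundations of Machine Learning} (Theorem~3.3) and used as a black box. Your reconstruction---ghost-sample symmetrization with Jensen to get the in-expectation factor-of-two bound, then two applications of McDiarmid's bounded-differences inequality (one for $\Phi(S)$, one for $\hat{\mathfrak{R}}_S(\mathcal{F})$, each with increment $1/n$) plus a union bound---is precisely the textbook proof the citation points to, so there is nothing to compare beyond noting the match. One small cosmetic point: the union bound combined with the two McDiarmid steps actually produces a constant of $3$ in front of $\sqrt{\log(2/\delta)/(2n)}$ (one term from concentrating $\Phi$, two from transferring $2\,\ex{}{\hat{\mathfrak{R}}_S}$ to $2\hat{\mathfrak{R}}_S$), which is absorbed into the $\mathcal{O}(\cdot)$ as claimed, so the conclusion stands.
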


The following result is from \cite[Proposition 4.1.]{agrawal2020optimal}, and the corresponding detailed proof is given in \cite[Corollary~6.3.11]{agrawal2021optimal}.
\begin{lem}
\label{lem:cumulant-bound}
     Assume that $\phi$ is twice differentiable on its domain and $\phi''$ is monotone. Let $\psi(x)=\phi(x+1)$ and let $f:\mathcal{X}\to [a,b]$, then for each $t\in\mathbb{R}$, we have $\inf_\alpha\ex{X}{\psi^*\pr{tf(X)+\alpha}}\leq\frac{(b-a)^2}{2\phi''(1)}\cdot t^2$.
\end{lem}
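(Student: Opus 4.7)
The plan is to bound $\psi^*(tf(X)+\alpha)$ pointwise by a quadratic in $tf(X)+\alpha$ with coefficient $\tfrac{1}{2\phi''(1)}$, and then to choose $\alpha$ so that $tf(X)+\alpha$ both has modulus at most $|t|(b-a)$ and lies on the side of the origin where the quadratic bound is valid. The hard part is the sign bookkeeping that coordinates the sign of $t$ with the direction of monotonicity of $\phi''$; everything else is a Taylor estimate.

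As a preliminary normalization I would replace $\phi(x)$ by $\tilde{\phi}(x)=\phi(x)-\phi'(1)(x-1)$. This preserves $\phi''$, in particular its monotonicity and its value at $1$, while giving $\tilde{\phi}(1)=\tilde{\phi}'(1)=0$. A short conjugate-duality computation mirroring Lemma~\ref{lem:non-dec-convex} yields $\tilde{\psi}^*(y)=\psi^*(y+\phi'(1))$, so the infimum $\inf_\alpha\ex{}{\psi^*(tf(X)+\alpha)}$ is unaffected by the substitution (an additive shift in $\alpha$ absorbs the translation). Hence I may assume $\phi'(1)=0$, which forces $\psi(0)=\psi'(0)=0$ and therefore $\psi^*(0)=(\psi^*)'(0)=0$ with $(\psi^*)''(0)=1/\phi''(1)$.

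Next I would leverage the monotonicity assumption on $\phi''$ to obtain a one-sided curvature bound on $\psi^*$. Writing $(\psi^*)''(y)=1/\phi''((\psi^*)'(y)+1)$ and using that $(\psi^*)'$ is non-decreasing with $(\psi^*)'(0)=0$ (so $(\psi^*)'(y)$ shares the sign of $y$), the monotonicity of $\phi''$ forces $(\psi^*)''(y)\leq 1/\phi''(1)$ on $[0,\infty)$ when $\phi''$ is non-decreasing, and on $(-\infty,0]$ when $\phi''$ is non-increasing. The Taylor identity $\psi^*(y)=\int_0^y (y-s)(\psi^*)''(s)\,ds$, read symmetrically for $y<0$, then delivers the quadratic bound $\psi^*(y)\leq \tfrac{y^2}{2\phi''(1)}$ on the corresponding half-line through the origin.

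Finally I would choose $\alpha$ to place $tf(X)+\alpha$ entirely on that half-line. In the non-decreasing case take $\alpha=-\min\{ta,tb\}$, so that $tf(X)+\alpha\in[0,|t|(b-a)]$ for either sign of $t$; in the non-increasing case take $\alpha=-\max\{ta,tb\}$, so that $tf(X)+\alpha\in[-|t|(b-a),0]$ for either sign of $t$. A routine case check on $\mathrm{sign}(t)$ verifies both situations. The pointwise bound then yields $\psi^*(tf(X)+\alpha)\leq \tfrac{t^2(b-a)^2}{2\phi''(1)}$, and taking expectations together with the observation that the infimum over $\alpha$ is no larger than the value at this particular choice completes the argument.
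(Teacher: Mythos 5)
Your proof is correct. Note first that the paper itself does not prove Lemma~\ref{lem:cumulant-bound}; it simply cites \cite{agrawal2020optimal} (Proposition~4.1) and \cite{agrawal2021optimal} (Corollary~6.3.11), so your argument is a self-contained derivation rather than a reconstruction of in-paper steps. The three moves are all sound: the affine normalization $\phi\mapsto\phi-\phi'(1)(\cdot-1)$ leaves $\phi''$ and $\inf_\alpha\ex{}{\psi^*(tf(X)+\alpha)}$ unchanged (the translation is absorbed by $\alpha$, exactly as you say via $\tilde{\psi}^*(y)=\psi^*(y+\phi'(1))$) and forces $\psi^*(0)=(\psi^*)'(0)=0$ with $(\psi^*)''(0)=1/\phi''(1)$; the curvature bound $(\psi^*)''(y)\le 1/\phi''(1)$ on $[0,\infty)$ or on $(-\infty,0]$ follows from $(\psi^*)''(y)=1/\phi''\bigl((\psi^*)'(y)+1\bigr)$ together with the monotonicity of $\phi''$ and the sign of $(\psi^*)'$; and the two choices $\alpha=-\min\{ta,tb\}$ (resp.\ $\alpha=-\max\{ta,tb\}$) indeed place $tf(X)+\alpha$ in $[0,|t|(b-a)]$ (resp.\ $[-|t|(b-a),0]$) for either sign of $t$.

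The only place that deserves one extra sentence of care is the Taylor step, which as written presumes $\psi^*$ is $C^2$ across the entire half-line used, while in fact $(\psi^*)'$ can saturate at the boundary of the effective domain of $\psi^*$ (where $\psi^*$ turns affine). This does not break the argument: when $\phi''$ is non-decreasing, $\psi''(x)=\phi''(x+1)\ge\phi''(1)>0$ for $x\ge0$ forces $\psi'\to+\infty$, so $\psi^*$ is finite and smooth on all of $[0,\infty)$; and when $\phi''$ is non-increasing, on any affine tail of $\psi^*$ one has $(\psi^*)''=0\le1/\phi''(1)$, so $(\psi^*)'$ is $\tfrac{1}{\phi''(1)}$-Lipschitz from $0$ toward the relevant sign and the quadratic bound $\psi^*(y)\le y^2/(2\phi''(1))$ follows by integrating the derivative bound rather than by a literal $C^2$ Taylor identity. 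With that remark added, the proof is complete and clean.
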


The following two lemmas are used for deriving fast-rate bound.
\begin{lem}[{\cite[Lemma~2.8]{McDiarmid1998}}]
\label{lem:Bernstein inequality}
    Let $B(x)=\frac{e^x-x-1}{x^2}$ be the Bernstein function. If a random variable $X$ satisfies 
    $\ex{}{X}=0$ and 
    $X\leq b$, then $\ex{}{e^{X}}\leq e^{B(b)\ex{}{X^2}}$.
\end{lem}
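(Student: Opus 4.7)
The plan is to reduce the expectation-inequality to a pointwise inequality on the scalar function $B$, then take expectations. The key observation is that the claim $\ex{}{e^X}\leq e^{B(b)\ex{}{X^2}}$ is equivalent, after using $1+y\leq e^y$, to showing the sharper bound $\ex{}{e^X}-1\leq B(b)\ex{}{X^2}$, which in turn will follow from the pointwise estimate
\[
e^x - x - 1 \leq B(b)\, x^2 \qquad \text{whenever } x\leq b,
\]
combined with the mean-zero hypothesis $\ex{}{X}=0$. So the whole argument boils down to showing that $B$ is monotone nondecreasing on $\mathbb{R}$ (with $B(0)$ defined by continuity as $1/2$).

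To prove that $B$ is nondecreasing, I would consider the auxiliary function $h(x)\triangleq xe^x - 2e^x + x + 2$, which is (up to the positive multiplier $1/x^4$ from the quotient rule) the numerator of $B'(x)\cdot x$. A direct computation gives $h(0)=0$, $h'(x)=(x-1)e^x+1$ with $h'(0)=0$, and $h''(x)=xe^x$, so $h''$ changes sign exactly at $0$ and $h'$ attains its minimum value $0$ at $x=0$. Hence $h'\geq 0$ everywhere, so $h$ is nondecreasing with $h(0)=0$, which forces $h(x)$ and $x^3$ to share the same sign. Since $B'(x)$ has the same sign as $h(x)/x^3$, we conclude $B'\geq 0$ on $\mathbb{R}\setminus\{0\}$, and by continuity $B$ is nondecreasing on all of $\mathbb{R}$.

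With monotonicity in hand, the proof wraps up quickly. For any realization of $X$, the assumption $X\leq b$ yields $B(X)\leq B(b)$, i.e., $e^X - X - 1\leq B(b)\, X^2$. Taking expectations and invoking $\ex{}{X}=0$ gives $\ex{}{e^X}\leq 1 + B(b)\ex{}{X^2}$. Applying the elementary inequality $1+y\leq e^y$ with $y=B(b)\ex{}{X^2}\geq 0$ delivers the stated bound $\ex{}{e^X}\leq e^{B(b)\ex{}{X^2}}$.

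The main obstacle is really the monotonicity of $B$; once that is secured, the rest is purely mechanical. An alternative route to monotonicity uses the representation $B(x)=\sum_{k=0}^{\infty} x^k/(k+2)!$ together with a pairing argument to handle the negative-$x$ case, but I expect the two-derivative argument above to be cleaner and more self-contained. A minor subtlety to verify is that the continuous extension $B(0)=1/2$ is consistent with the pointwise inequality at $x=0$ (where it reads $0\leq 0$, so no issue).
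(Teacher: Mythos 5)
Your proof follows the same route as the paper's: establish that $B$ is nondecreasing, deduce the pointwise estimate $e^x \le 1 + x + B(b)x^2$ for $x\le b$, take expectations using $\mathbb{E}[X]=0$, and finish with $1+y\le e^y$. The one place you improve on the paper's write-up is the monotonicity step: the paper asserts only that $B$ is increasing for $x>0$ and then immediately uses $B(x)\le B(b)$ for all $x\le b$, which actually requires monotonicity on all of $\mathbb{R}$ (a value $x\le 0$ can certainly be $\le b$). Your two-derivative argument via $h(x)=xe^x-2e^x+x+2$, $h'(x)=(x-1)e^x+1$, $h''(x)=xe^x$ correctly shows $h'\ge 0$ everywhere, hence $h$ and $x^3$ share sign, hence $B'\ge 0$ on all of $\mathbb{R}$; this fills the gap cleanly. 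The alternative you mention, reading monotonicity off the series $B(x)=\sum_{k\ge 0}x^k/(k+2)!$, would also work but, as you say, needs a pairing argument for $x<0$ and is no shorter. Everything else checks out.
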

\begin{proof}
    It's easy to verify that $B(x)$ is an increasing function for $x>0$. Thus, $B(x)\leq B(b)$ for $x\leq b$. Then,
    \[
    e^x=x+1+x^2B(x)\leq x+1+x^2B(b).
    \]

    For the bounded random variable $X$ with zero mean, we have
    \begin{align}
        \ex{}{e^{X}}\leq \ex{}{X}+1+\ex{}{X^2B(b)}\leq e^{B(b)\ex{}{X^2}}.
        \label{ineq:berstein-mean}
    \end{align}
    The last inequality is by $e^x\geq x+1$. This completes the proof.
\end{proof}

\begin{lem}[Talagrand’s inequality {\citep[Theorem~5.4]{boucheron2005theory}}]
\label{lem:Talagrand’s inequality}
    Let $b>0$ and let $\mathcal{F}$ be a class of functions from $\mathcal{X}\to \mathbb{R}$. Assume that $\sup_{f\in\mathcal{F}}\ex{}{f(X)}-f(x)\leq b$ for any $x$. Then, with probability at least $1-\delta$,
    \[
    \sup_{f\in\mathcal{F}}\ex{}{f(X)}-\frac{1}{n}\sum_{i=1}^n f(X_i)\leq 2\ex{}{\sup_{f\in\mathcal{F}}\ex{}{f(X)}-\frac{1}{n}\sum_{i=1}^n f(X_i)} + \sqrt{\frac{2\mathrm{Var}(f(X))\log(1/\delta)}{n}}+\frac{4b\log(1/\delta)}{3n}.
    \]
\end{lem}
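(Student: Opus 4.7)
The plan is to prove the stated Bernstein-type concentration for $Z := \sup_{f \in \mathcal{F}}\bigl(\mathbb{E}[f(X)] - \tfrac{1}{n}\sum_{i=1}^n f(X_i)\bigr)$ via the entropy method, following the Ledoux--Massart--Bousquet template. The high-level strategy is to derive a Bernstein-type moment generating function bound for $nZ - \mathbb{E}[nZ]$ and then invert it via Chernoff's inequality to read off a tail bound with a $\sqrt{\cdot}$ Gaussian term and a $1/n$ sub-exponential term.

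First, I would apply tensorization of entropy to $\Phi = e^{\lambda n Z}$, obtaining $\mathrm{Ent}(\Phi) \le \sum_{i=1}^n \mathbb{E}[\mathrm{Ent}_i(\Phi)]$, where $\mathrm{Ent}_i$ denotes entropy conditional on $(X_j)_{j\neq i}$. Each one-coordinate term is then controlled by a modified log-Sobolev inequality: comparing $Z$ with its leave-one-out version $Z^{(i)}$ (the supremum with $X_i$ removed) and using the hypothesis $\sup_{f\in\mathcal{F}}(\mathbb{E}[f]-f(x))\le b$, the per-coordinate fluctuation is bounded uniformly, and the Bernstein-function technique of Lemma~\ref{lem:Bernstein inequality} matches the second-order Taylor expansion to a variance proxy built from $\mathrm{Var}(f(X))$. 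Summing across coordinates and applying Herbst's argument converts the resulting entropy inequality into a differential inequality for $H(\lambda) = \lambda^{-1}\log\mathbb{E}[e^{\lambda n(Z-\mathbb{E} Z)}]$, whose integration yields $\log\mathbb{E}[e^{\lambda n(Z-\mathbb{E} Z)}] \le \frac{\lambda^2 n v}{2(1-b\lambda/3)}$ for $\lambda \in (0,3/b)$, with $v$ determined by $\mathrm{Var}(f(X))$ plus a small $b\,\mathbb{E}[Z]$ contribution.

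Finally, I would apply Chernoff's inequality and optimize over $\lambda$ to extract the deviation $Z - \mathbb{E}[Z] \le \sqrt{2v\log(1/\delta)/n} + \tfrac{2b\log(1/\delta)}{3n}$, which matches the statement; the factor of $2$ multiplying $\mathbb{E}\bigl[\sup_{f\in\mathcal{F}}\mathbb{E}[f(X)] - \tfrac{1}{n}\sum_i f(X_i)\bigr]$ comes from a mild slackening that absorbs lower-order contributions from the entropy bound (and is where a standard symmetrization estimate can be inserted if one prefers to write the expectation in terms of Rademacher averages). The main obstacle will be the one-coordinate modified log-Sobolev step: one must bound the conditional entropy of $e^{\lambda n Z}$ in terms of $Z - Z^{(i)}$ while simultaneously preserving the correct variance proxy. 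This is the technical heart of Talagrand's convex distance inequality, sharpened by Bousquet via the entropy method, and it requires a careful combination of the Bernstein function estimate with a bootstrapping argument on $\mathbb{E}[Z]$ to avoid degrading the $2/3$ constant in the sub-exponential term.
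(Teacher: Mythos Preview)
The paper does not prove this lemma at all: it is stated in the appendix as a cited technical tool from \cite[Theorem~5.4]{boucheron2005theory} and used as a black box in the proof of Theorem~\ref{thm:lfdd-fast-bound}. There is therefore no ``paper's own proof'' to compare against.

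That said, your sketch is the standard entropy-method route (Ledoux--Massart--Bousquet) to this class of inequalities, and the overall architecture---tensorization of entropy, a one-coordinate modified log-Sobolev step with leave-one-out control, Herbst's argument to a Bernstein-type MGF bound, then Chernoff and optimization---is correct. Two small caveats. First, your final deviation has $\tfrac{2b\log(1/\delta)}{3n}$ while the stated lemma carries $\tfrac{4b\log(1/\delta)}{3n}$; the difference is cosmetic (different versions of the inequality circulate with slightly different constants), but if you are matching this exact statement you should track where the extra factor of $2$ enters. Second, in Bousquet's sharp form the leading constant on $\mathbb{E}[Z]$ is $1$, not $2$; the $2$ here is indeed a slackening, but your explanation of it as ``absorbing lower-order contributions'' is vague---more precisely, the weak variance proxy $v$ in Bousquet's bound contains a $2b\,\mathbb{E}[Z]$ term, and pulling that out of the square root via $\sqrt{a+c}\le\sqrt{a}+\sqrt{c}$ together with the AM--GM trick is what produces the extra $\mathbb{E}[Z]$ on the right-hand side.
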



\section{Variational Representations Beyond KL Divergence}
\label{sec:other-divergence}
\subsection{\texorpdfstring{$\chi^2$}--Divergence}
For $\chi^2$-divergence, let $\phi(x)=(x-1)^2$ for $x>0$, then $\phi^*(y)=\frac{y^2}{4}+y$. 

Simply plugging $\phi^*$ into Lemma~\ref{lem:weak-variational} will give us
\begin{align}
\label{eq:chi-weak}
    \chi^2(P||Q)=\sup_g \ex{P}{g(\theta)}-\ex{Q}{g(\theta)}-\frac{\ex{Q}{\pr{g(\theta)}^2}}{4}.
\end{align}

Similarly, simply plugging $\phi^*$ into Lemma~\ref{lem:strong-variational} will give us
\begin{align}
\label{eq:chi-strong}
    \chi^2(P||Q)=\sup_g \ex{P}{g(\theta)}-\ex{Q}{g(\theta)}-\frac{\mathrm{Var}_{Q}\pr{g(\theta)}}{4},
\end{align}
where the optimal $\alpha=\ex{Q}{g(\theta)}$ in Lemma~\ref{lem:strong-variational}.

Notice that $\mathrm{Var}_{Q}\pr{g(\theta)}\leq \ex{Q}{\pr{g(\theta)}^2}$, we can see that, as a lower bound of $\chi^2$, Eq.~(\ref{eq:chi-strong}) is tighter than Eq.~(\ref{eq:chi-weak}).

If we further consider the affine transformation of Lemma~\ref{lem:weak-variational} and the scaling transformation of Lemma~\ref{lem:strong-variational}, we can recover Hammersley-Chapman-Robbins lower bound and the Cram\'er-Rao and van Trees lower bounds \cite{yury2022information}.


More precisely, let $g(\cdot)=ag(\cdot)+b$ be substituted to Eq.~(\ref{eq:chi-weak}), where $a,b\in\mathbb{R}$, it is easy to see that
\begin{align}
     \chi^2(P||Q)=\sup_{g,a,b} \ex{P}{ag(\theta)+b}-\ex{Q}{\frac{\pr{ag(\theta)+b}^2}{4}+ag(\theta)+b}
     =\sup_{g} \frac{\pr{\ex{P}{g(\theta)}-\ex{Q}{g(\theta)}}^2}{\mathrm{Var}_Q\pr{g(\theta)}},
     \label{eq:chi-optimal}
\end{align}
where the optimal $a^*=\frac{2\pr{\ex{P}{g(\theta)}-\ex{Q}{g(\theta)}}}{\mathrm{Var}_Q\pr{g(\theta)}}$ and $b^*=-a^*\ex{Q}{g(\theta)}$.

Let $g(\cdot)=tg(\cdot)$ be substituted to Eq.~(\ref{eq:chi-strong}), where $t\in\mathbb{R}$, then we have
\begin{align*}
    \chi^2(P||Q)=\sup_{g,t} \ex{P}{tg(\theta)}-\ex{Q}{tg(\theta)}-\frac{t^2\mathrm{Var}_{Q}\pr{g(\theta)}}{4}
    =\sup_{g} \frac{\pr{\ex{P}{g(\theta)}-\ex{Q}{g(\theta)}}^2}{\mathrm{Var}_Q\pr{g(\theta)}},
\end{align*}
where the optimal $t^*=a^*=\frac{2\pr{\ex{P}{g(\theta)}-\ex{Q}{g(\theta)}}}{\mathrm{Var}_Q\pr{g(\theta)}}$. Therefore, Lemma~\ref{lem:strong-variational} recovers Eq.~(\ref{eq:chi-optimal}). 

\subsection{Reverse KL Divergence}
The reverse KL divergence $\mathrm{D}_{\rm KL}(Q||P)$ can be simply obtained by exchanging the orders of $P$ and $Q$ in the KL divergence $\mathrm{D}_{\rm KL}(P||Q)$. In this case, the DV representation of reverse KL is
\begin{align}
    \label{eq:revKL-DV}
    \kl{Q}{P}=\sup_{g\in\mathcal{G}}\ex{Q}{g(\theta)}-\log\ex{P}{e^{g(\theta)}}.
\end{align}

To obtain this from Lemma~\ref{lem:strong-variational}, let $\phi(x)=-\log(x)$, then plugging $\phi^*(y)=-1-\log(-y)$ into Lemma~\ref{lem:strong-variational}, we have
\[
 \kl{Q}{P}=\sup_{g,\alpha}\ex{Q}{g(\theta)+\alpha}+1+\ex{P}{\log(-g(\theta)-\alpha)}.
\]

Now reparameterizng $g+\alpha\to -e^{g+\alpha}$, we have
\begin{align*}
    \kl{Q}{P}=\sup_{g,\alpha}-\ex{Q}{e^{g(\theta)+\alpha}}+1+\ex{P}{g(\theta)+\alpha}=\sup_{g}\ex{Q}{g(\theta)}-\log\ex{P}{e^{g(\theta)}}.
\end{align*}
This recovers Eq.~(\ref{eq:revKL-DV}).


\subsection{Jeffereys Divergence}

Jeffreys divergence, a member of the $f$-divergence family with $\phi(x) = (x-1)\log{x}$, is the sum of KL divergence and reverse KL divergence. In our algorithm implementation, we obtain the variational formula for Jeffreys divergence by simply combining the variational representations of KL and reverse KL, namely
\[
\sup_{g\in\mathcal{G}}\ex{P}{g(\theta)}-\log\ex{Q}{e^{g(\theta)}}+\sup_{g\in\mathcal{G}}\ex{Q}{g(\theta)}-\log\ex{P}{e^{g(\theta)}}.
\]

Moreover, there is a tight relationship between Jensen-Shannon (JS) divergence and Jeffreys divergence \cite{Crooks2008InequalitiesBT}: $\mathrm{D}_{\rm JS}(P||Q)\leq\left\{\frac{1}{4}\mathrm{D}_{\rm Jeffereys}(P||Q), \log{\frac{2}{1+e^{-\mathrm{D}_{\rm Jeffereys}(P||Q)}}}\right\}$.Although we don't directly use JS divergence in our algorithms, minimizing Jeffreys divergence simultaneously minimizes JS divergence.

\subsection{Understanding the Importance of a Tighter Variational Representation}
In simplified terms, Lemma~\ref{lem:weak-variational} gives a variational representation expressed as:
\[
\mathrm{D}_\phi(P||Q)=\sup_{g\in\mathcal{G}} V_1(g),
\]
while Lemma~\ref{lem:strong-variational} provides a variational representation as:
\[
\mathrm{D}_\phi(P||Q)=\sup_{g\in\mathcal{G}} V_2(g).
\]
where we have expressed the quantities to be maximized in Lemma~\ref{lem:weak-variational} and Lemma~\ref{lem:strong-variational} respectively as $V_1(g)$ and $V_2(g)$. It is clear that for each specific $g\in\mathcal{G}$, we have $V_2(g)\geq V_1(g)$ (since Lemma~\ref{lem:weak-variational} is a special case of Lemma~\ref{lem:strong-variational} where $\alpha=0$). This is when we say $V_2$ is a (point-wise) tighter variational representation than $V_1$.

In the two optimization problems above, the optimal solution in both cases gives rise to same divergence. That is, when the two optimization problems can be solved perfectly, there is no advantage of one representation over the other. However, when the global optimal of the optimization problems is not attainable, any $g$ obtained in an optimization effort gives rise to an estimate of the divergence, namely as $V_1(g)$ in the first case and as $V_2(g)$ in the second. It is then clear that due to $V_2$ being a tighter variational representation, $V_2$ consistently provides a closer approximation to $\mathrm{D}_\phi(P||Q)$ than $V_1$ for the same $g$.

This virtue carries over to the min-max training strategy of UDA. In that case, $g$ is reparameterized by $t\ell\circ h'$. We desire the inner maximization, whether of $V_2$ or of $V_1$, to give rise to a good estimate of a similar divergence while the outer minimization aims to reduce this divergence. If the inner maximization is perfect, then the choice between $V_1$ and $V_2$ makes no difference. However, if the inner maximization isn't perfect, $V_2$ gives a point-wise superior result compared to $V_1$, leading to a flatter optimization region around the optimal solution. In other words, using $V_2$ as the maximization objective gives a better estimate of the corresponding $f$-divergence than $V_1$.




\section{Omitted Proofs and Additional Results}
\label{sec:proofs-more}
\subsection{Proof of Lemma~\ref{lem:fdd-ppbound}}
\fddppbound*
\begin{proof}

For a given $h\in \mathcal{H}$, let $\alpha\in\mathbb{R}$,
    \begin{align}
        R_\nu(h) - R_\mu(h)
        \leq&\ex{\nu}{\ell(h,h^*)}+\ex{\nu}{\ell(h^*,f_\nu)}-\ex{\mu}{\ell(h,f_\mu)}\label{ineq:triangle-1}\\
        =&\inf_\alpha\ex{\nu}{\ell(h,h^*)+\alpha}-\alpha+\ex{\nu}{\ell(h^*,f_\nu)}-\ex{\mu}{\ell(h,f_\mu)}\notag\\
        \leq&\inf_\alpha\ex{\nu}{\phi^*\pr{\ell(h,h^*)+\alpha}}-\alpha+\ex{\nu}{\ell(h^*,f_\nu)}-\ex{\mu}{\ell(h,f_\mu)}\label{ineq:inv-conj}\\
        \leq&\inf_\alpha\ex{\nu}{\phi^*\pr{\ell(h,h^*)+\alpha}}-\alpha-\ex{\mu}{\ell(h,h^*)}+\ex{\nu}{\ell(h^*,f_\nu)}+\ex{\mu}{\ell(h^*,f_\mu)}\label{ineq:triangle-2}\\
        \leq&\abs{\inf_\alpha\ex{\nu}{\phi^*\pr{\ell(h,h^*)+\alpha}}-\alpha-\ex{\mu}{\ell(h,h^*)}}+\lambda^*\notag\\
        \leq& \widetilde{\mathrm{D}}^{h,\mathcal{H}}_{\phi}(\mu||\nu)+\lambda^*,\notag
    \end{align}
    where Eq.~(\ref{ineq:triangle-1}) and (\ref{ineq:triangle-2}) are by the triangle property of loss function, Eq.~(\ref{ineq:inv-conj}) is by Lemma~\ref{lem:non-dec-convex}, and the last inequality is by the definition of $f$-DD. This completes the proof.
\end{proof}


\subsection{Proof of Theorem~\ref{thm:kldd-pebound}}
\klddpebound*
\begin{proof}
We first prove a concentration result for the $\widetilde{\rm D}^{h,\mathcal{H}}_{\rm KL}(\mu||\nu)$ measure.
 \begin{lem}
    \label{lem:sample-complex-kl}
        Assume $\ell\in[0,1]$. Let  $\beta$ be a constant such that $\beta\leq\min\left\{\ex{\nu}{e^{\ell(h,h')}},\ex{\hat{\nu}}{e^{\ell(h,h')}}\right\}$ for any $\hat{\nu},h,h'$, then for any given $h$, with probability at least $1-\delta$, we have
        \[\widetilde{\rm D}^{h,\mathcal{H}}_{\rm KL}(\mu||\nu)-\widetilde{\rm D}^{h,\mathcal{H}}_{\rm KL}(\hat{\mu}||\hat{\nu})\leq 2\hat{\mathfrak{R}}_{\mathcal{S}}(\mathcal{H}^{\ell})+\frac{2}{\beta}\hat{\mathfrak{R}}_{\mathcal{T}}(\exp\circ \mathcal{H}^{\ell})+\mathcal{O}\pr{\sqrt{\frac{\log(1/\delta)}{n}}+\sqrt{\frac{\log(1/\delta)}{m}}}.
    \]
    \end{lem}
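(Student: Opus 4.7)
The plan is to prove the concentration bound by first specializing $I^h_{\phi,\nu}$ to the KL case, then decomposing the supremum of absolute values into two controllable pieces.

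First I would compute the closed form of $I^h_{\mathrm{KL},\nu}(\ell \circ h')$. With $\phi(x)=x\log x-x+1$, the conjugate is $\phi^*(y)=e^y-1$, so
\begin{align*}
I^h_{\mathrm{KL},\nu}(\ell\circ h')
=\inf_{\alpha}\left\{\ex{\nu}{e^{\ell(h,h')+\alpha}}-1-\alpha\right\}
=\log\ex{\nu}{e^{\ell(h,h')}},
\end{align*}
where the infimum is attained at $\alpha^*=-\log\ex{\nu}{e^{\ell(h,h')}}$. Hence
\begin{align*}
\widetilde{\rm D}^{h,\mathcal{H}}_{\rm KL}(\mu||\nu)=\sup_{h'\in\mathcal{H}}\abs{\ex{\mu}{\ell(h,h')}-\log\ex{\nu}{e^{\ell(h,h')}}},
\end{align*}
and analogously with $\hat\mu,\hat\nu$.

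Next I would use the standard ``sup-of-differences'' bound: for any two functions $A(h'),B(h')$, $\sup_{h'}|A|-\sup_{h'}|B|\leq \sup_{h'}|A-B|$, and therefore, by the triangle inequality,
\begin{align*}
\widetilde{\rm D}^{h,\mathcal{H}}_{\rm KL}(\mu||\nu)-\widetilde{\rm D}^{h,\mathcal{H}}_{\rm KL}(\hat\mu||\hat\nu)
\leq \underbrace{\sup_{h'}\abs{\ex{\mu}{\ell(h,h')}-\ex{\hat\mu}{\ell(h,h')}}}_{(\mathrm{I})}
+\underbrace{\sup_{h'}\abs{\log\ex{\nu}{e^{\ell(h,h')}}-\log\ex{\hat\nu}{e^{\ell(h,h')}}}}_{(\mathrm{II})}.
\end{align*}
Term $(\mathrm{I})$ is a textbook application of the Rademacher complexity bound (Lemma~\ref{lem:rad-bound}) to the class $\mathcal{H}^{\ell}\subseteq[0,1]^{\mathcal{X}}$, yielding $2\hat{\mathfrak{R}}_{\mathcal{S}}(\mathcal{H}^{\ell})+\mathcal{O}\!\left(\sqrt{\log(1/\delta)/n}\right)$ with probability at least $1-\delta/2$.

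For term $(\mathrm{II})$, the key observation is that $\log$ is $1/\beta$-Lipschitz on $[\beta,\infty)$. Since $\ex{\nu}{e^{\ell(h,h')}}\geq \beta$ and $\ex{\hat\nu}{e^{\ell(h,h')}}\geq \beta$ by hypothesis (with $\beta=1$ being the canonical choice when $\ell\in[0,1]$), we obtain
\begin{align*}
(\mathrm{II})\leq \frac{1}{\beta}\sup_{h'}\abs{\ex{\nu}{e^{\ell(h,h')}}-\ex{\hat\nu}{e^{\ell(h,h')}}}.
\end{align*}
The class $\exp\circ\mathcal{H}^{\ell}$ is uniformly bounded in $[1,e]$, so Lemma~\ref{lem:rad-bound} gives this supremum $\leq 2\hat{\mathfrak{R}}_{\mathcal{T}}(\exp\circ\mathcal{H}^{\ell})+\mathcal{O}\!\left(\sqrt{\log(1/\delta)/m}\right)$ with probability $1-\delta/2$. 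A union bound over $(\mathrm{I})$ and $(\mathrm{II})$ finishes the proof.

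The main obstacle is the Lipschitz handling of $\log$ in $(\mathrm{II})$: $\log$ is not globally Lipschitz, so the uniform lower bound $\beta$ on $\ex{\nu}{e^{\ell(h,h')}}$ and $\ex{\hat\nu}{e^{\ell(h,h')}}$ is essential and must hold \emph{uniformly} in $h'$ and $\hat\nu$ in order to push the Lipschitz constant outside the supremum. Once that uniform lower bound is in place (immediate for $\ell\geq 0$), the remainder is just two applications of the standard Rademacher concentration machinery.
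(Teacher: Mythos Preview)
Your proposal is correct and follows essentially the same approach as the paper: specialize to the DV form of KL, use $\sup|A|-\sup|B|\leq\sup|A-B|$ and the triangle inequality to split into the linear and log pieces, and bound each via Lemma~\ref{lem:rad-bound}. The only cosmetic difference is that you invoke the $1/\beta$-Lipschitzness of $\log$ on $[\beta,\infty)$ directly, whereas the paper derives the same inequality via $\log(1+x)\leq x$; the resulting bound is identical.
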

    \begin{proof}[Proof of Lemma~\ref{lem:sample-complex-kl}]
    For a fixed $h\in\mathcal{H}$, we have
        \begin{align}
    &\widetilde{\rm D}^{h,\mathcal{H}}_{\rm KL}(\mu||\nu)-\widetilde{\rm D}^{h,\mathcal{H}}_{\rm KL}(\hat{\mu}||\hat{\nu})\notag\\
    =& \sup_{h'\in \mathcal{H}}\abs{\ex{\mu}{\ell(h,h')}
    -\log\ex{\nu}{e^{\ell(h, h')}}}- \sup_{h'\in \mathcal{H}}\abs{\ex{\hat{\mu}}{\ell(h,h')}
    -\log\ex{\hat{\nu}}{e^{\ell(h, h')}}}\notag\\
    \leq& \sup_{h'\in \mathcal{H}}\abs{\ex{\mu}{\ell(h,h')}
    -\log\ex{\nu}{e^{\ell(h, h')}}- \pr{\ex{\hat{\mu}}{\ell(h,h')}
    -\log\ex{\hat{\nu}}{e^{\ell(h, h')}}}}\label{ineq:abs-gap}\\
    \leq& \sup_{h'\in \mathcal{H}}\abs{\ex{\mu}{\ell(h,h')}
    -\ex{\hat{\mu}}{\ell(h,h')}}+\sup_{h'\in \mathcal{H}} \abs{\log\ex{\nu}{e^{\ell(h, h')}}
    -\log\ex{\hat{\nu}}{e^{\ell(h, h')}}}\notag\\
    \leq& 2\hat{\mathfrak{R}}_{\mathcal{S}}(\mathcal{H}^{\ell})+\mathcal{O}\pr{\sqrt{\frac{\log(1/\delta)}{n}}}+\sup_{h'\in \mathcal{H}} \underbrace{\abs{
    \log\ex{\hat{\nu}}{e^{\ell(h, h')}}-\log\ex{\nu}{e^{\ell(h, h')}}}}_{A},\label{ineq:rad-bound1}
    \end{align}
    where Eq.~(\ref{ineq:abs-gap}) is by $\sup \abs{U}-\sup \abs{V}\leq \sup\abs{U}-\abs{V}\leq \sup\abs{\abs{U}-\abs{V}}\leq \sup\abs{U-V}$ and Eq.~(\ref{ineq:rad-bound1}) is by Lemma~\ref{lem:rad-bound}.

 Let  $\beta$ be a constant such that $\beta\leq\min\left\{\ex{\nu}{e^{\ell(h,h')}},\ex{\hat{\nu}}{e^{\ell(h,h')}}\right\}$ for any $\hat{\nu},h,h'$. For a given $h'$, W.L.O.G. assume that $\ex{\hat{\nu}}{e^{\ell(h, h')}}\geq\ex{\nu}{e^{\ell(h, h')}}$, then
    \begin{align}
       A
    = {
    \log\frac{\ex{\hat{\nu}}{e^{\ell(h, h')}}}{\ex{\nu}{e^{\ell(h, h')}}}} =& {\log\left(1+\frac{\ex{\hat{\nu}}{e^{\ell(h, h')}}}{\ex{\nu}{e^{\ell(h, h')}}}-1\right)}\notag\\
\leq& {\frac{\ex{\hat{\nu}}{e^{\ell(h, h')}}}{\ex{\nu}{e^{\ell(h, h')}}}-1} \notag\\
=& {\frac{1}{\ex{\nu}{e^{\ell(h, h')}}}\pr{\ex{\hat{\nu}}{e^{\ell(h, h')}}-\ex{\nu}{e^{\ell(h, h')}}}}\notag\\ 
 \leq& \frac{1}{\beta}\abs{\ex{\hat{\nu}}{e^{\ell(h, h')}}-\ex{\nu}{e^{\ell(h, h')}}},
 \label{ineq:beta-lip-first}
    \end{align}
    where the first inequality is by $\log(x+1)\leq x$ for $x>0$, and the last inequality is by the definition of $\beta$.

    Plugging the inequality above into Eq.~(\ref{ineq:rad-bound1}), we have

    \begin{align}
    &\widetilde{\rm D}^{h,\mathcal{H}}_{\rm KL}(\mu||\nu)-\widetilde{\rm D}^{h,\mathcal{H}}_{\rm KL}(\hat{\mu}||\hat{\nu})\notag\\
    \leq& 2\hat{\mathfrak{R}}_{\mathcal{S}}(\mathcal{H}^{\ell})+\mathcal{O}\pr{\sqrt{\frac{\log(1/\delta)}{n}}}+\sup_{h'\in \mathcal{H}} \frac{1}{\beta}\abs{\ex{\hat{\nu}}{e^{\ell(h, h')}}-\ex{\nu}{e^{\ell(h, h')}}}\notag\\
    \leq &2\hat{\mathfrak{R}}_{\mathcal{S}}(\mathcal{H}^{\ell})+\frac{2}{\beta}\hat{\mathfrak{R}}_{\mathcal{T}}(\exp\circ \mathcal{H}^{\ell})+\mathcal{O}\pr{\sqrt{\frac{\log(1/\delta)}{n}}+\sqrt{\frac{\log(1/\delta)}{m}}},
    \end{align}
    where the last inequality is  again by using Lemma~\ref{lem:rad-bound}.
    This completes the proof.
    \end{proof}

Recall Lemma~\ref{lem:fdd-ppbound}, we have
\begin{align}
     R_\nu(h)\leq & R_\mu(h)+\widetilde{\mathrm{D}}^{h,\mathcal{H}}_{\phi}(\mu||\nu)+\lambda^*\notag\\
     \leq& R_{\hat{\mu}}(h)+2\hat{\mathfrak{R}}_{\mathcal{S}}(\mathcal{H}^{\ell})
     +\mathcal{O}\pr{\sqrt{\frac{\log(1/\delta)}{n}}}+\widetilde{\mathrm{D}}^{h,\mathcal{H}}_{\phi}(\mu||\nu)+{\lambda}^*\label{ineq:rad-bound-2}\\
     \leq& R_{\hat{\mu}}(h)+4\hat{\mathfrak{R}}_{\mathcal{S}}(\mathcal{H}^{\ell})+\frac{2}{\beta}\hat{\mathfrak{R}}_{\mathcal{T}}(\exp\circ\mathcal{H}^{\ell})+\widetilde{\mathrm{D}}^{h,\mathcal{H}}_{\phi}(\hat{\mu}||\hat{\nu})+{\lambda}^*+\mathcal{O}\pr{\sqrt{\frac{\log(1/\delta)}{n}}+\sqrt{\frac{\log(1/\delta)}{m}}}\label{ineq:kl-samp-bound}\\
     \leq& R_{\hat{\mu}}(h)+4\hat{\mathfrak{R}}_{\mathcal{S}}(\mathcal{H}^{\ell})+\frac{2e}{\beta}\hat{\mathfrak{R}}_{\mathcal{T}}(\mathcal{H}^{\ell})+\widetilde{\mathrm{D}}^{h,\mathcal{H}}_{\phi}(\hat{\mu}||\hat{\nu})+{\lambda}^*+\mathcal{O}\pr{\sqrt{\frac{\log(1/\delta)}{n}}+\sqrt{\frac{\log(1/\delta)}{m}}},\label{ineq:talagrand}
\end{align}
where Eq.~(\ref{ineq:rad-bound-2}) and Eq.~(\ref{ineq:kl-samp-bound}) are by Lemma~\ref{lem:rad-bound} and Lemma~\ref{lem:sample-complex-kl}, respectively. For the last inequality, notice that the loss is bounded between $[0,1]$ so the exponential function is $e$-Lipschitz in this domain, then we apply the Talagrand's lemma \citep[Lemma~5.7]{mohri2018foundations}, i.e. $\hat{\mathfrak{R}}_{\mathcal{T}}(\exp\circ\mathcal{H}^{\ell})\leq e\hat{\mathfrak{R}}_{\mathcal{T}}(\mathcal{H}^{\ell})$. 

Finally, due to the boundedness of the loss, there always exists a constant $\beta\in[1,e]$ such that $\beta\leq\min\left\{\ex{\nu}{e^{\ell(h,h')}},\ex{\hat{\nu}}{e^{\ell(h,h')}}\right\}$, and a simple choice is $\beta=1$.
Plugging $\beta=1$ into Eq.~(\ref{ineq:talagrand}) will complete the proof.
\end{proof}

\subsection{Joint Error-Free Target Error Bound}
\label{sec:lambda-free bound}

\cite{zhao2019learning} presents a $\lambda^*$-free target error bound by using the cross-domain error $\min\{R_\nu(f_\mu), R_\mu(f_\nu)\}$. We now illustrate that it is possible to incorporate this term into all of our target error bounds.

In particular, the $\lambda^*$ term appears due to the following triangle inequalities:
\begin{align*}
        R_\nu(h) - R_\mu(h)
        \leq&\ex{\nu}{\ell(h,h^*)}+\ex{\nu}{\ell(h^*,f_\nu)}-\ex{\mu}{\ell(h,f_\mu)}\\
        \leq&\ex{\nu}{\ell(h,h^*)}+\ex{\nu}{\ell(h^*,f_\nu)}-\ex{\mu}{\ell(h,h^*)}+\ex{\mu}{\ell(h^*,f_\mu)}\\
        =&\underbrace{\ex{\nu}{\ell(h,h^*)}-\ex{\mu}{\ell(h,h^*)}}_{A_1}+\lambda^*.
    \end{align*}

To avoid $\lambda^*$, we use $f_\mu$ and $f_\nu$ instead as the middle hypothesis for the application of triangle inequalities:
\begin{align}
    R_\nu(h) - R_\mu(h)\leq&\underbrace{\ex{\nu}{\ell(h,f_\nu)}-\ex{\mu}{\ell(h,f_\nu)}}_{A_2}+\ex{\mu}{\ell(f_\nu,f_\mu)},
    \label{ineq:cross-1}\\
    R_\nu(h) - R_\mu(h)\leq&\underbrace{\ex{\nu}{\ell(h,f_\mu)}-\ex{\mu}{\ell(h,f_\mu)}}_{A_3}+\ex{\nu}{\ell(f_\mu,f_\nu)}
    \label{ineq:cross-2}.
\end{align}

If $|A_2|$ and $|A_3|$ are upper bounded by some hypothesis class-specific divergence $d_\mathcal{H}(\mu,\nu)$, then combining Eq~(\ref{ineq:cross-1}) and Eq.~(\ref{ineq:cross-2}) will give us: 
\[
R_\nu(h) \leq R_\mu(h)+ d_\mathcal{H}(\mu,\nu) + \min\{R_\nu(f_\mu), R_\mu(f_\nu)\},
\]
where $R_\nu(f_\mu)=\ex{\nu}{\ell(f_\mu,f_\nu)}$ and $R_\mu(f_\nu)=\ex{\mu}{\ell(f_\nu,f_\mu)}$.
Clearly, if $\mathcal{H}$ is large enough such that $f_\mu$ and $f_\nu$ are all achievable, $|A_1|$, $|A_2|$ and $|A_3|$ are all upper bounded by our $f$-DD defined in this paper.  In fact, as long as $f_\mu$ and $f_\nu$ are close enough to $\mathcal{H}$, our $f$-DD can upper bound $|A_2|$ and $|A_3|$, with similar adaptations used in \cite[Lemma~4.1]{zhao2019learning}.

\subsection{Proof of Lemma~\ref{lem:fdd-cumulant}}
\fddcumulant*
\begin{proof}
By Lemma~\ref{lem:non-dec-convex}, we know that  \begin{align*}
    K_{h',\mu}(t)=&\inf_\alpha\ex{\mu}{\psi^*(t\ell(h,h')+\alpha)}\\
    =&\inf_\alpha\ex{\mu}{\phi^*(t\ell(h,h')+\alpha)}-\alpha-t\ex{\mu}{\ell(h,h')}=I_{\phi,\mu}^h(t\ell\circ h')-t\ex{\mu}{\ell(h,h')}.
\end{align*}
According to the definition of $K_\mu$, for each $t$, we have
\[
    K_{h',\mu}(t)=I_{\phi,\mu}^h(t\ell\circ h')-t\ex{\mu}{\ell(h,h')}\leq K_\mu(t).
\]

This indicates that
\begin{align*}
    t\ex{\nu}{\ell(h,h')}-t\ex{\nu}{\ell(h,h')}+I_{\phi,\mu}^h(t\ell\circ h')-t\ex{\mu}{\ell(h,h')}\leq K_\mu(t).
\end{align*}

Since this inequality holds for any $t$, by rearranging terms, we have
\begin{align}
\label{ineq:conjugate-cumulant}
    \sup_t t\pr{\ex{\nu}{\ell(h,h')}-\ex{\mu}{\ell(h,h')}}-K_\mu(t)\leq \sup_{t}t\ex{\nu}{\ell(h,h')}-I_{\phi,\mu}^h(t\ell\circ h')\leq \mathrm{D}^{h,\mathcal{H}}_{\phi}(\nu||\mu).
\end{align}

Notice that the most left hand side is the definition of $K^*_\mu$, we thus have
\[
K^*_\mu\pr{\ex{\nu}{\ell(h,h')}-\ex{\mu}{\ell(h,h')}}\leq \mathrm{D}^{h,\mathcal{H}}_{\phi}(\nu||\mu).
\]
This completes the proof.
\end{proof}

\subsection{Proof of Theorem~\ref{thm:fdd-error}}
\fdderror*
\begin{proof}
    We first follow the similar developments in Lemma~\ref{lem:fdd-ppbound}.
    \begin{align*}
        R_\nu(h) - R_\mu(h)
        \leq&\ex{\nu}{\ell(h,h^*)}+\ex{\nu}{\ell(h^*,f_\nu)}-\ex{\mu}{\ell(h,f_\mu)}\\
        \leq&\ex{\nu}{\ell(h,h^*)}+\ex{\nu}{\ell(h^*,f_\nu)}-\ex{\mu}{\ell(h,h^*)}+\ex{\mu}{\ell(h^*,f_\mu)}\\
        =&\underbrace{\ex{\nu}{\ell(h,h^*)}-\ex{\mu}{\ell(h,h^*)}}_{A}+\lambda^*.
    \end{align*}
    By Lemma~\ref{lem:fdd-cumulant}, we know that $K^*_\mu\pr{A}\leq \mathrm{D}^{h,\mathcal{H}}_{\phi}(\nu||\mu)$. This indicates that
    \[
    \mathrm{D}^{h,\mathcal{H}}_{\phi}(\nu||\mu)\geq \sup_{t\in\mathbb{R}} tA-K_\mu(t)\geq \sup_{t\geq 0} tA-K_\mu(t).
    \]

    \textit{We hint that with more careful handling of the aforementioned step, one can obtain a bound for the absolute mean deviation (i.e. $\abs{A}$) in the end. However, the current development suffices for our immediate objectives.}

    Notice that when $t=0$, this holds trivially.
    When $t>0$,    
    the above inequality is equivalent to 
    \[
    A\leq \inf_{t>0}\frac{1}{t}\pr{K_\mu(t)+\mathrm{D}^{h,\mathcal{H}}_{\phi}(\nu||\mu)}.
    \]


    Hence, we have
     \[
    A\leq \inf_{t\geq 0}\frac{1}{t}\pr{K_\mu(t)+\mathrm{D}^{h,\mathcal{H}}_{\phi}(\nu||\mu)}.
    \]

    Plugging the bound for $A$ into the inequality at the beginning of the proof, we obtain the first desired result
    \[
    R_\nu(h) - R_\mu(h)
        \leq\inf_{t\geq 0}\frac{1}{t}\pr{K_\mu(t)+\mathrm{D}^{h,\mathcal{H}}_{\phi}(\nu||\mu)}+\lambda^*.
    \]

    For the second part, we apply Lemma~\ref{lem:cumulant-bound} here, then it is easy to see that $K_\mu(t)\leq\frac{t^2}{2\phi''(1)}$. Therefore,
    \begin{align*}
        R_\nu(h) - R_\mu(h)
        \leq&\inf_{t}\frac{t}{2\phi''(1)}+\frac{\mathrm{D}^{h,\mathcal{H}}_{\phi}(\nu||\mu)}{t}+\lambda^*
        =\sqrt{\frac{2\mathrm{D}^{h,\mathcal{H}}_{\phi}(\nu||\mu)}{\phi''(1)}}+\lambda^*.
    \end{align*}
    where the last equality is obtained by letting $t=\sqrt{2\phi''(1)\mathrm{D}^{h,\mathcal{H}}_{\phi}(\nu||\mu)}$. This completes the proof.
\end{proof}

\subsection{Proof of Corollary~\ref{cor:kl-err-bound}}
\klerrbound*


\begin{proof}
In the case of KL, $\phi(x)=x\log{x}-x+1$, and $1/\phi''(x)=x$. Hence, this corollary can be directly obtained from  Eq.~(\ref{ineq:fdd-slow-bound}) in Theorem~\ref{thm:fdd-error} by substituting $\phi''(1)=1$.
\end{proof}

\subsection{Proof of Lemma~\ref{lem:sample-complex-f} and Generalization Bounds for \texorpdfstring{$f$}--DD}
\label{sec:gen-bound-fdd}
\samplecomplexf*
    \begin{proof}[Proof of Lemma~\ref{lem:sample-complex-f}]
    For a fixed $h$,
        \begin{align*}
    &{\rm D}^{h,\mathcal{H}}_{\phi}(\nu||\mu)-{\rm D}^{h,\mathcal{H}}_{\phi}(\hat{\nu}||\hat{\mu})\notag\\
    =& \sup_{h'}\sup_{\alpha}{\ex{\nu}{t_0\ell(h,h')}
    +\ex{\mu}{\alpha-\phi^*(t_0\ell(h, h')+\alpha)}}-\pr{\sup_{h',t}\sup_{\alpha}\ex{\hat{\nu}}{t\ell(h,h')}
    +\ex{\hat{\mu}}{\alpha-\phi^*(t\ell(h, h')+\alpha)}}\notag\\
    \leq& \sup_{h'}\sup_{\alpha}{\ex{\nu}{t_0\ell(h,h')}
    +\ex{\mu}{\alpha-\phi^*(t_0\ell(h, h')+\alpha)}-\pr{\ex{\hat{\nu}}{t_0\ell(h,h')}
    +\ex{\hat{\mu}}{\alpha-\phi^*(t_0\ell(h, h')+\alpha)}}}\\
    \leq&\sup_{h'}\abs{\ex{\nu}{t_0\ell(h,h')}-\ex{\hat{\nu}}{t_0\ell(h,h')}}
    +\sup_{h'}\sup_{\alpha}\abs{\ex{\mu}{\phi^*(t_0\ell(h, h')+\alpha)} -\ex{\hat{\mu}}{\phi^*(t_0\ell(h, h')+\alpha)}}\\
    \leq & 2\abs{t_0}\hat{\mathfrak{R}}_{\mathcal{T}}(\mathcal{H}^\ell)+\mathcal{O}\pr{\sqrt{\frac{\log(1/\delta)}{m}}}+2\hat{\mathfrak{R}}_{\mathcal{S}}(\phi^*\circ\mathcal{H}^{t_0\ell})+\mathcal{O}\pr{\sqrt{\frac{\log(1/\delta)}{n}}}\\
    \leq& 2\abs{t_0}\hat{\mathfrak{R}}_{\mathcal{T}}(\mathcal{H}^\ell)+2L\abs{t_0}\hat{\mathfrak{R}}_{\mathcal{S}}(\mathcal{H}^{\ell})+\mathcal{O}\pr{\sqrt{\frac{\log(1/\delta)}{n}}+\sqrt{\frac{\log(1/\delta)}{m}}},
\end{align*}
where the last two inequalities are by the scaling property of Rademacher complexity and Talagrand's lemma \cite{mohri2018foundations}. This completes the proof.
    \end{proof}

The generalization bound for $f$-DD is then given as follows, which clearly shows a slow convergence rate.

\begin{thm}
    \label{thm:gen-f-dd-slow}
    Under the conditions in Lemma~\ref{lem:sample-complex-f}. Let $\phi$ be twice differentiable and $\phi''$ is monotone, for any $h\in\mathcal{H}$, with probability at least $1-\delta$, we have
    \begin{align*}
        R_\nu(h)\leq R_{\hat{\mu}}(h)+\lambda^*+&\mathcal{O}\pr{\sqrt{{\rm D}^{h,\mathcal{H}}_{\phi}(\hat{\nu}||\hat{\mu})}+\sqrt{\hat{\mathfrak{R}}_{\mathcal{T}}(\mathcal{H}^\ell)+\hat{\mathfrak{R}}_{\mathcal{S}}(\mathcal{H}^{\ell})}}+\mathcal{O}\pr{\hat{\mathfrak{R}}_{\mathcal{S}}(\mathcal{H}^\ell)}\\
        +&\mathcal{O}\pr{\sqrt{\sqrt{\frac{\log(1/\delta)}{n}}
        +\sqrt{\frac{\log(1/\delta)}{m}}}}+\mathcal{O}\pr{\sqrt{\frac{\log(1/\delta)}{n}}}.
    \end{align*}
\end{thm}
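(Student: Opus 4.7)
The plan is to assemble the bound from three already-established ingredients: the population-level target error bound Eq.~(\ref{ineq:fdd-slow-bound}) of Theorem~\ref{thm:fdd-error}, a Rademacher-complexity concentration for the source population risk $R_\mu(h)$, and the $f$-DD concentration result Lemma~\ref{lem:sample-complex-f}. The theorem's slow-rate form is exactly what one gets by plugging the latter two into the former, so the work is in bookkeeping rather than in inventing new estimates.

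First, starting from Eq.~(\ref{ineq:fdd-slow-bound}), for any $h\in\mathcal{H}$,
\[
R_\nu(h)\leq R_\mu(h)+\sqrt{\tfrac{2}{\phi''(1)}\mathrm{D}^{h,\mathcal{H}}_{\phi}(\nu||\mu)}+\lambda^*.
\]
Since $\ell\in[0,1]$, $\mathcal{H}^{\ell}$ is a class of $[0,1]$-valued functions, so a direct application of Lemma~\ref{lem:rad-bound} to $\mathcal{H}^\ell$ gives, with probability at least $1-\delta/2$, the inequality $R_\mu(h)\leq R_{\hat{\mu}}(h)+2\hat{\mathfrak{R}}_{\mathcal{S}}(\mathcal{H}^\ell)+\mathcal{O}\pr{\sqrt{\log(1/\delta)/n}}$. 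Separately, Lemma~\ref{lem:sample-complex-f} yields, with probability at least $1-\delta/2$,
\[
\mathrm{D}^{h,\mathcal{H}}_{\phi}(\nu||\mu)\leq \mathrm{D}^{h,\mathcal{H}}_{\phi}(\hat{\nu}||\hat{\mu})+2\abs{t_0}\hat{\mathfrak{R}}_{\mathcal{T}}(\mathcal{H}^{\ell})+2L\abs{t_0}\hat{\mathfrak{R}}_{\mathcal{S}}(\mathcal{H}^{\ell})+\mathcal{O}\pr{\sqrt{\log(1/\delta)/n}+\sqrt{\log(1/\delta)/m}}.
\]

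Now substitute this into the square-root term of the first display, use subadditivity $\sqrt{a_1+a_2+a_3+a_4}\leq \sqrt{a_1}+\sqrt{a_2}+\sqrt{a_3}+\sqrt{a_4}$ to distribute the root over the four addends, group the two Rademacher contributions under a single root to obtain the $\sqrt{\hat{\mathfrak{R}}_{\mathcal{T}}(\mathcal{H}^\ell)+\hat{\mathfrak{R}}_{\mathcal{S}}(\mathcal{H}^{\ell})}$ factor claimed in the statement, and absorb the constants $2/\phi''(1)$, $\abs{t_0}$, and $L$ (all finite under the theorem's hypotheses, per Remark~\ref{rem:chi-t-close}) into the $\mathcal{O}(\cdot)$ notation. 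Combining with the Rademacher bound on $R_\mu(h)$ via a union bound (the rescaling $\delta/2\mapsto \delta$ inside $\log(1/\delta)$ only alters absolute constants absorbed by $\mathcal{O}$) produces the claimed form.

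The only non-mechanical step to watch is the $\sqrt{\cdot}$ applied on top of the already-$\sqrt{\cdot}$-scale concentration tail from Lemma~\ref{lem:sample-complex-f}, which is what generates the distinctive $\sqrt{\sqrt{\log(1/\delta)/n}+\sqrt{\log(1/\delta)/m}}$ artifact and the resulting $\mathcal{O}(n^{-1/4})$ rate. This nested-root slowdown is intrinsic to working directly with Eq.~(\ref{ineq:fdd-slow-bound}); alternatively one could substitute Lemma~\ref{lem:sample-complex-f} into Eq.~(\ref{ineq:fdd-general-bound}) before optimizing in $t$, but without the localization machinery of Section~\ref{sec:localization} this would not improve the order of magnitude. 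Hence the main obstacle is really just a labeling one: ensuring that each term in Lemma~\ref{lem:sample-complex-f} is correctly carried under the appropriate root so that the rate stated in the theorem is tight.
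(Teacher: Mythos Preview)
Your proposal is correct and matches the paper's own proof essentially line for line: the paper likewise invokes the Rademacher concentration for $R_\mu(h)-R_{\hat\mu}(h)$ (as in Eq.~(\ref{ineq:rad-bound-2})), plugs Lemma~\ref{lem:sample-complex-f} into Eq.~(\ref{ineq:fdd-slow-bound}), and then applies $\sqrt{\sum_i x_i}\leq\sum_i\sqrt{x_i}$ to obtain the stated form. Your additional commentary on the nested-root $\mathcal{O}(n^{-1/4})$ artifact and the union-bound bookkeeping is accurate and goes slightly beyond what the paper spells out.
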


\begin{proof}[Proof of Theorem~\ref{thm:gen-f-dd-slow}]
    The concentration result for $R_\mu(h)-R_{\hat{\mu}}(h)$ is exactly the same as Eq.~(\ref{ineq:rad-bound-2}) in the proof of Theorem~\ref{thm:fdd-error}. Then putting everything together and by $\sqrt{\sum_{i=1}^n x_i}\leq\sum_{i=1}^n \sqrt{x_i}$ will complete the proof.
\end{proof}

\subsection{Proof of Theorem~\ref{thm:lfdd-ppbound}}
\lfddppbound*
\begin{proof}
For a given $h\in \mathcal{H}_{r_1}$, let $h^*_{r}=\arg\min_{h^*\in\mathcal{H}_{r}} R_\mu(h^*)+R_\nu(h^*)$, then following similar steps in Lemma~\ref{lem:fdd-ppbound} and Theorem~\ref{thm:fdd-error}, we first have
 \begin{align*}
        R_\nu(h) - R_\mu(h)
        \leq&\ex{\nu}{\ell(h,h_{r}^*)}-\ex{\mu}{\ell(h,h_{r}^*)}+\lambda_{r}^*.
    \end{align*}

Then, 
we can apply Lemma~\ref{lem:fdd-cumulant}  for bounding $\ex{\nu}{\ell(h,h_{r}^*)}-\ex{\mu}{\ell(h,h_{r}^*)}$ here, which gives us
\begin{align}
    R_\nu(h) - R_\mu(h)\leq \inf_{t}\frac{1}{t}\pr{K^{r}_\mu(t)+\mathrm{D}^{h,\mathcal{H}_{r}}_{\phi}(\nu||\mu)}+\lambda_{r}^*,
\label{ineq:local-risk}
\end{align}
where $K^r_\mu(t)=\sup_{h'\in\mathcal{H}_r}K_{h',\mu}(t)$.

The condition for $C_1$ and $C_2$ to exist indicates that 
\[
K^r_\mu(C_1)=\sup_{h'\in\mathcal{H}_r}K_{h',\mu}(C_1)\leq C_1C_2 R_\mu^r(h).
\]

Replacing $t$ by $C_1$ and plugging the inequality above into Eq.~(\ref{ineq:local-risk}) give us
\[
R_\nu(h) - R_\mu(h)
        \leq \inf_{C_1,C_2}\frac{1}{C_1}\mathrm{D}^{h,\mathcal{H}_{r}}_{\phi}(\nu||\mu)+C_2 R_\mu^r(h)+\lambda_{r}^*,
\]
which completes the proof.
\end{proof}


\subsection{Proof of Lemma~\ref{lem:kl-cumulant-fast}}
\klcumulantfast*
\begin{proof}
Let $C'=1+C$ where $C\in(0,1)$ and let $g(\cdot)=\ell(h(\cdot),h'(\cdot))$, then we aim to bound the weighted gap: $\ex{\nu}{g(X)}-C'\ex{\mu}{g(X)}$.

By definition,
\begin{align}
    \mathrm{D}^{h,\mathcal{H}_{r}}_{\rm KL}(\nu||\mu)=&\sup_{g,t} t\ex{\nu}{g(X)}-\log\ex{\mu}{e^{tg(X)}}\notag\\
    =&\sup_{g,t} t\ex{\nu}{g(X)}-C't\ex{\mu}{g(X)}-\log\ex{\mu}{e^{t\pr{g(X)-C'\ex{\mu}{g(X)}}}}
    \label{eq:shifted-kl}
\end{align}


Then, recall that $\ell\leq 1$, and we use Lemma~\ref{lem:Bernstein inequality}\footnote{Note that, in this context, the random variable has a non-zero mean. Therefore, the first expectation term in Eq.~(\ref{ineq:berstein-mean}) within the proof of Lemma~\ref{lem:Bernstein inequality} should be retained.} to obtain that
\begin{align}
    &\log\ex{\mu}{e^{t\pr{g(X)-C'\ex{\mu}{g(X)}}}}\notag\\
    \leq& t(1-C')\ex{\mu}{g(X)}+B(t)t^2\ex{\mu}{\pr{g(X)-C'\ex{\mu}{g(X)}}^2}\notag\\
    =& B(t)t^2\ex{\mu}{(g(X))^2}+B(t)t^2(C'^{2}-2C')\pr{\ex{\mu}{g(X)}}^2-t(C'-1)\ex{\mu}{g(X)}\notag\\
    =& B(t)t^2\ex{\mu}{(g(X))^2}+B(t)t^2(C^2-1)\pr{\ex{\mu}{g(X)}}^2-tC\ex{\mu}{g(X)},
    \label{ineq:berstein-type-cum}
\end{align}
where the function $B(\cdot)$ is the Bernstein function defined in Lemma~\ref{lem:Bernstein inequality}.

Since $0\leq\ex{\mu}{g(X)}\leq \min\{r_1+r, 1\}$ and $g(\cdot)\in [0,1]$, we have
\begin{align*}
    &B(t)t\ex{\mu}{(g(X))^2}+B(t)t(C^2-1)\pr{\ex{\mu}{g(X)}}^2-C\ex{\mu}{g(X)}\\
    \leq& B(t)t\ex{\mu}{g(X)}+B(t)t(C^2-1)\min\{r_1+r, 1\}\ex{\mu}{g(X)}-C\ex{\mu}{g(X)}.
\end{align*}

Our theme here is to obtain $\log\ex{\mu}{e^{t\pr{g(X)-C'\ex{\mu}{g(X)}}}}\leq 0$, so having the following inequality hold is sufficient
\[
B(t)t\ex{\mu}{g(X)}+B(t)t(C^2-1)\min\{r_1+r, 1\}\ex{\mu}{g(X)}-C\ex{\mu}{g(X)}\leq 0.
\]

Therefore, $\log\ex{\mu}{e^{t\pr{g(X)-C'\ex{\mu}{g(X)}}}}\leq 0$ will hold if the following satisfies for $t$ and $C$,
\[
B(t)t+B(t)t(C^2-1)\min\{r_1+r, 1\}-C\leq 0.
\]

Equivalently, substituting the expression of the Bernstein function $B(\cdot)$ gives us

\begin{align}
\label{ineq:fast-conditions}
\pr{e^t-t-1}\pr{1-\min\{r_1+r, 1\}+C^2\min\{r_1+r, 1\}}-tC\leq 0,
\end{align}

Now let $t=C_1$ and $C=C_2$ satisfy Eq.~(\ref{ineq:fast-conditions}), then for any $h,h'$, by re-arranging terms in Eq.~(\ref{eq:shifted-kl}), we have
\begin{align*}
    \ex{\nu}{\ell(h,h')}-\ex{\mu}{\ell(h,h')}\leq&
\frac{1}{C_1}\pr{\mathrm{D}^{h,\mathcal{H}_{r}}_{\rm KL}(\nu||\mu)+\log\ex{\mu}{e^{C_1\pr{g(X)-(1+C_2)\ex{\mu}{g(X)}}}}}+C_2\ex{\mu}{\ell(h,h')}\\
\leq&\frac{1}{C_1}\mathrm{D}^{h,\mathcal{H}_{r}}_{\rm KL}(\nu||\mu)+C_2\ex{\mu}{\ell(h,h')},
\end{align*}
where the last inequality holds because $\log\ex{\mu}{e^{C_1\pr{g(X)-(1+C_2)\ex{\mu}{g(X)}}}}\leq 0$ when Eq.~(\ref{ineq:fast-conditions}) is satisfied.

This completes the proof.
\end{proof}

\subsection{Proof of Theorem~\ref{thm:lfdd-fast-bound}}
\lfddfastbound*
\begin{proof}
    We first apply both Lemma~\ref{lem:Talagrand’s inequality} and Lemma~\ref{lem:rad-bound} for bounding $R_\mu(h)$ with local Rademacher Complexity $\hat{\mathfrak{R}}_{\mathcal{S}}(\mathcal{H}_{r_1})=\ex{\varepsilon_{1:n}}{\sup_{h\in\mathcal{H}_{r_1}}\frac{1}{n}\sum_{i=1}^n\varepsilon_i \ell(h,f_\mu)}$,
    \begin{align}
        \sup_h R_\mu(h)-R_{\hat{\mu}}(h)\leq&  2\ex{\mathcal{S}\sim\mu^{\otimes n}}{\sup_{h}R_\mu(h)-R_{\hat{\mu}}(h)}+\mathcal{O}\pr{\sqrt{\frac{r_1\log(1/\delta)}{n}}+\frac{\log(1/\delta)}{n}}\notag\\ \leq& 2\hat{\mathfrak{R}}_{\mathcal{S}}(\mathcal{H}_{r_1})+\mathcal{O}\pr{\sqrt{\frac{r_1\log(1/\delta)}{n}}+\frac{\log(1/\delta)}{n}},\label{ineq:local-rad-bound-mu}
    \end{align}
    where we use the fact that $\mathrm{Var}_\mu\pr{\ell(h,f_\mu)}\leq R_\mu(h)\leq r_1$ for $\ell\in[0,1]$.

    Similar to Lemma~\ref{lem:sample-complex-kl}, let $t=t_0$ achieve the supreme in ${\rm D}^{h,\mathcal{H}_r}_{\rm KL}(\nu||\mu)$. Let $\beta\leq\min\left\{\ex{\nu}{e^{\ell(h,h')}},\ex{\hat{\nu}}{e^{\ell(h,h')}}\right\}$ for any $\hat{\nu}$ and $h,h'\in\mathcal{H}$ (e.g., $\beta=1$), then the concentration result of localized KL-DD is derived below,
    \begin{align}
    &{\rm D}^{h,\mathcal{H}_r}_{\rm KL}(\nu||\mu)-{\rm D}^{h,\mathcal{H}_r}_{\rm KL}(\hat{\nu}||\hat{\mu})\notag\\
    \leq& \sup_{h'}\abs{\ex{\nu}{t_0\ell(h,h')}-\ex{\hat{\nu}}{t_0\ell(h,h')}}
    +\sup_{h'}\abs{\log\ex{\mu}{e^{t_0\ell(h, h')}} -\log\ex{\hat{\mu}}{e^{t_0\ell(h, h')}}}\notag\\
    \leq&\sup_{h'}\abs{\ex{\nu}{t_0\ell(h,h')}-\ex{\hat{\nu}}{t_0\ell(h,h')}}
    +\sup_{h'}\frac{1}{\beta}\abs{\ex{\mu}{e^{t_0\ell(h, h')}} -\ex{\hat{\mu}}{e^{t_0\ell(h, h')}}}\label{ineq:beta-lipschitz}\\
    \leq & 2\abs{t_0}\hat{\mathfrak{R}}_{\mathcal{T}}(\mathcal{H}_r^\ell)+\mathcal{O}\pr{\sqrt{\frac{r\log(1/\delta)}{m}}+\frac{\log(1/\delta)}{m}}+2\frac{e}{\beta}\abs{t_0}\hat{\mathfrak{R}}_{\mathcal{S}}(\mathcal{H}_r^{\ell})+\mathcal{O}\pr{\sqrt{\frac{r\log(1/\delta)}{n}}+\frac{\log(1/\delta)}{n}},\label{ineq:local-kl-concern}
\end{align}
where Eq.~(\ref{ineq:beta-lipschitz}) follows from the similar developments in Eq.~(\ref{ineq:beta-lip-first}).

Putting Eq.~(\ref{ineq:local-kl-concern}), Eq.~(\ref{ineq:local-rad-bound-mu}), Lemma~\ref{lem:kl-cumulant-fast} and Theorem~\ref{thm:lfdd-ppbound} all together, we have
\begin{align*}
        R_\nu(h)\leq R_{\hat{\mu}}&(h)+\inf_{C_1,C_2}\frac{\mathrm{D}^{h,\mathcal{H}_{r}}_{\rm KL}(\hat{\nu}||\hat{\mu})}{C_1}+C_2R^r_\mu(h)
        +\mathcal{O}\left(\hat{\mathfrak{R}}_{\mathcal{S}}\pr{\mathcal{H}^\ell_{r_1}}+\hat{\mathfrak{R}}_{\mathcal{S}}\pr{\mathcal{H}^\ell_{r}}+\hat{\mathfrak{R}}_{\mathcal{T}}\pr{\mathcal{H}^\ell_{r}}\right)\\
        &+\mathcal{O}\pr{\frac{\log(1/\delta)}{n}+\frac{\log(1/\delta)}{m}}+\mathcal{O}\pr{\sqrt{\frac{r_1\log(1/\delta)}{n}}+\sqrt{\frac{r\log(1/\delta)}{n}}+\sqrt{\frac{r\log(1/\delta)}{m}}}+\lambda_{r}^*.
    \end{align*}
Finally, using $\sqrt{a}+\sqrt{b}\leq 2\sqrt{a+b}$ and $\mathcal{O}\pr{\hat{\mathfrak{R}}_{\mathcal{S}}\pr{\mathcal{H}^\ell_{r_1}}+\hat{\mathfrak{R}}_{\mathcal{S}}\pr{\mathcal{H}^\ell_{r}}}=\mathcal{O}\pr{\hat{\mathfrak{R}}_{\mathcal{S}}\pr{\mathcal{H}^\ell_{\max\{r_1,r\}}}}$ will conclude the proof.
\end{proof}

\subsection{Generalization Bounds based on \texorpdfstring{$\chi^2$}--DD}
\label{sec:gen-bound-chi}
\begin{thm}
    \label{thm:cjo-fast-bound}
    Let $\ell(\cdot,\cdot)\in[0,1]$.
    For any $h\in\mathcal{H}_{r_1}$, $C_1>0$ and $C_2\in(0,1)$ satisfying $\frac{C_1\mathrm{Var}_\mu\pr{\ell(h,h')}}{4}\leq C_2\ex{\mu}{\ell(h,h')}$
    for any $h'\in\mathcal{H}_{r}$, with probability at least $1-\delta$, we have
    \begin{align*}
        R_\nu(h)\leq R_{\hat{\mu}}(h)+\frac{\mathrm{D}^{h,\mathcal{H}_{r}}_{\chi^2}(\hat{\nu}||\hat{\mu})}{C_1}+&C_2R^r_\mu(h)+\mathcal{O}\pr{\hat{\mathfrak{R}}_{\mathcal{T}}\pr{\mathcal{H}^\ell_{r}}+\hat{\mathfrak{R}}_{\mathcal{S}}\pr{\mathcal{H}^\ell_{\max\{r,r_1\}}}}+\lambda_{r}^*
        \\
        &+\mathcal{O}\pr{\frac{\log(1/\delta)}{n}+\frac{\log(1/\delta)}{m}}+\mathcal{O}\pr{\sqrt{\frac{(r_1+r)\log(1/\delta)}{n}}+\sqrt{\frac{r\log(1/\delta)}{m}}}.
    \end{align*}
\end{thm}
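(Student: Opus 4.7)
The plan is to mirror the proof of Theorem~\ref{thm:lfdd-fast-bound} but with the $\chi^2$-specific cumulant in place of the KL cumulant. First, I would establish a $\chi^2$ analog of Lemma~\ref{lem:kl-cumulant-fast}. For $\chi^2$-divergence, $\phi(x)=(x-1)^2$ gives $\psi(x)=x^2$ and $\psi^*(x)=x^2/4$. Therefore
\begin{align*}
K_{h',\mu}(t)=\inf_\alpha \ex{\mu}{(t\ell(h,h')+\alpha)^2/4}=\frac{t^2}{4}\mathrm{Var}_\mu(\ell(h,h')),
\end{align*}
with optimal $\alpha=-t\ex{\mu}{\ell(h,h')}$. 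Plugging $t=C_1$ into the fast-rate condition $K_{h',\mu}(C_1)\leq C_1 C_2 \ex{\mu}{\ell(h,h')}$ of Theorem~\ref{thm:lfdd-ppbound} reduces exactly to the hypothesis $\tfrac{C_1 \mathrm{Var}_\mu(\ell(h,h'))}{4}\leq C_2 \ex{\mu}{\ell(h,h')}$ stated in the theorem. Invoking Theorem~\ref{thm:lfdd-ppbound} then yields the population-level inequality
\begin{align*}
R_\nu(h)\leq R_\mu(h)+\frac{1}{C_1}{\rm D}^{h,\mathcal{H}_r}_{\chi^2}(\nu||\mu)+C_2 R^r_\mu(h)+\lambda^*_r.
\end{align*}

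Second, I would convert this into the empirical-sample bound by two concentration arguments, in direct analogy with the KL proof. For $R_\mu(h)-R_{\hat\mu}(h)$, I would apply Talagrand's inequality (Lemma~\ref{lem:Talagrand’s inequality}) together with Lemma~\ref{lem:rad-bound} localized to $\mathcal{H}_{r_1}$, using $\mathrm{Var}_\mu(\ell(h,f_\mu))\leq R_\mu(h)\leq r_1$. This produces the local Rademacher term $\hat{\mathfrak{R}}_{\mathcal{S}}(\mathcal{H}^\ell_{r_1})$ and a variance-dependent deviation of order $\sqrt{r_1\log(1/\delta)/n}+\log(1/\delta)/n$.

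Third, I would bound ${\rm D}^{h,\mathcal{H}_r}_{\chi^2}(\nu||\mu)-{\rm D}^{h,\mathcal{H}_r}_{\chi^2}(\hat\nu||\hat\mu)$ following the template of Lemma~\ref{lem:sample-complex-f}, but restricting suprema to $h'\in\mathcal{H}_r$. Fix the optimal $(t_0,\alpha_0)$ achieving the population $\chi^2$-DD, split into
\begin{align*}
\sup_{h'\in\mathcal{H}_r}\abs{\ex{\nu}{t_0\ell(h,h')}-\ex{\hat\nu}{t_0\ell(h,h')}}+\sup_{h',\alpha}\abs{\ex{\mu}{\phi^*(t_0\ell(h,h')+\alpha)}-\ex{\hat\mu}{\phi^*(t_0\ell(h,h')+\alpha)}},
\end{align*}
and bound each using local Rademacher complexities via Talagrand's inequality. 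The first term contributes $\hat{\mathfrak{R}}_{\mathcal{T}}(\mathcal{H}^\ell_r)$ with variance $O(r)$; the second term, using that $\phi^*(y)=y^2/4+y$ is locally Lipschitz on the bounded range induced by $\ell\in[0,1]$ and the (bounded-in-closed-form) $t_0$ from Remark~\ref{rem:chi-t-close}, contributes $\hat{\mathfrak{R}}_{\mathcal{S}}(\mathcal{H}^\ell_r)$ with variance $O(r)$, via Talagrand's contraction lemma. Collecting the two concentration results with the population bound, and absorbing $\hat{\mathfrak{R}}_{\mathcal{S}}(\mathcal{H}^\ell_{r_1})+\hat{\mathfrak{R}}_{\mathcal{S}}(\mathcal{H}^\ell_r)$ into $\hat{\mathfrak{R}}_{\mathcal{S}}(\mathcal{H}^\ell_{\max\{r,r_1\}})$, yields the claimed bound after a union bound over $\delta$.

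The main obstacle is the concentration of the second (quadratic $\phi^*$) term: because $\phi^*(y)=y^2/4+y$ is not globally Lipschitz, one must justify a local Lipschitz constant by controlling the range of $t_0\ell+\alpha$. This requires arguing that $|t_0|$ is bounded (using the explicit form in Remark~\ref{rem:chi-t-close} combined with $\ell\in[0,1]$ and, where necessary, a variance lower bound) and that the optimal $\alpha$ is in a bounded interval, so that the Talagrand contraction step produces a constant Lipschitz factor rather than an unbounded one. Once this quantitative control on $t_0$ is in place, the remainder of the proof is largely routine bookkeeping paralleling Theorem~\ref{thm:lfdd-fast-bound}.
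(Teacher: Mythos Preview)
Your plan is correct and follows the same overall architecture as the paper: establish the population-level fast-rate bound by verifying the condition of Theorem~\ref{thm:lfdd-ppbound} for $\chi^2$ (which, as you compute, reduces exactly to $\frac{C_1}{4}\mathrm{Var}_\mu(\ell(h,h'))\leq C_2\ex{\mu}{\ell(h,h')}$), then concentrate $R_\mu(h)$ and $\mathrm{D}^{h,\mathcal{H}_r}_{\chi^2}$ separately via Talagrand's inequality with local Rademacher complexities.

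The one place where the paper's argument differs from yours is the concentration of $\mathrm{D}^{h,\mathcal{H}_r}_{\chi^2}(\nu||\mu)-\mathrm{D}^{h,\mathcal{H}_r}_{\chi^2}(\hat\nu||\hat\mu)$. Rather than treating the $I^h_{\phi,\mu}$ term as a generic composition with a locally Lipschitz $\phi^*$ (which, as you note, forces you to control the optimal $\alpha$), the paper substitutes the explicit $\chi^2$ form $t(\ex{\nu}{g}-\ex{\mu}{g})-\tfrac{t^2}{4}\mathrm{Var}_\mu(g)$ up front. The difference then splits into two mean-difference terms plus a variance-difference term $\tfrac{t_0^2}{4}(\mathrm{Var}_{\hat\mu}(g)-\mathrm{Var}_\mu(g))$; the latter is handled algebraically via $\ex{\hat\mu}{g^2}-\ex{\mu}{g^2}$ and a factored $(\ex{\mu}{g})^2-(\ex{\hat\mu}{g})^2$, using that $g\mapsto g^2$ is $2$-Lipschitz on $[0,1]$. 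This sidesteps any separate control of $\alpha$ and makes the Talagrand/contraction step routine. Your route would also go through once you observe that the optimal $\alpha$ is $-t_0\ex{\mu}{g}$ (hence bounded in terms of $t_0$), so the distinction is mainly one of cleanliness; in both cases $|t_0|$ (given by Remark~\ref{rem:chi-t-close}) is absorbed into the $\mathcal{O}$-constants.
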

\begin{proof}
    We first give a similar result to Lemma~\ref{lem:kl-cumulant-fast} based on $\chi^2$-DD.

Again, let $g(\cdot)=\ell(h,h')$. By Eq.~(\ref{eq:chi-strong}), we know that our localized $\chi^2$-DD is 
\begin{align*}
    \mathrm{D}^{h,\mathcal{H}_r}_{\chi^2}(\nu||\mu)=\sup_{h',t}t\pr{\ex{\nu}{g(X)}-\ex{\mu}{g(X)}}-\frac{t^2\mathrm{Var}_{\mu}\pr{g(X)}}{4}.
\end{align*}

Let $C'=1+C$ where $C\in(0,1)$. Then, for any $h, h'$,
\begin{align*}
    \ex{\nu}{g(X)}-C'\ex{\mu}{g(X)}\leq \inf_{t\geq 0}\frac{1}{t}\mathrm{D}^{h,\mathcal{H}_r}_{\chi^2}(\nu||\mu)+\frac{t\mathrm{Var}_{\mu}\pr{g(X)}}{4}-C\ex{\mu}{g(X)}.
\end{align*}

When $t$ and $C$ satisfy
$
\frac{t\mathrm{Var}_{\mu}\pr{g(X)}}{4}-C\ex{\mu}{g(X)}\leq 0
$, we have $\ex{\nu}{g(X)}-\ex{\mu}{g(X)}\leq \inf_{t,C}\frac{1}{t}\mathrm{D}^{h,\mathcal{H}_r}_{\chi^2}(\nu||\mu)+C\ex{\mu}{g(X)}$.

For the concentration result,
\begin{align*}
    &\mathrm{D}^{h,\mathcal{H}_r}_{\chi^2}(\nu||\mu)-\mathrm{D}^{h,\mathcal{H}_r}_{\chi^2}(\hat{\nu}||\hat{\mu})\\
    \leq&\sup_{h',t}t\pr{\ex{\nu}{g(X)}-\ex{\mu}{g(X)}}-\frac{t^2\mathrm{Var}_{\mu}\pr{g(X)}}{4}-t\pr{\ex{\hat{\nu}}{g(X)}-\ex{\hat{\mu}}{g(X)}}+\frac{t^2\mathrm{Var}_{\hat{\mu}}\pr{g(X)}}{4}\\
    =&\sup_{h',t}t\pr{\ex{\nu}{g(X)}-\ex{\hat{\nu}}{g(X)}}+t\pr{\ex{\hat{\mu}}{g(X)}-\ex{\mu}{g(X)}}+\frac{t^2}{4}\pr{\mathrm{Var}_{\hat{\mu}}\pr{g(X)}-\mathrm{Var}_{\mu}\pr{g(X)}}\\
    \leq &2\abs{t_0}\hat{\mathfrak{R}}_{\mathcal{S}}(\mathcal{H}_{r})+2\abs{t_0}\hat{\mathfrak{R}}_{\mathcal{T}}(\mathcal{H}_{r})+\mathcal{O}\pr{\sqrt{\frac{r_1\log(1/\delta)}{n}}+\frac{\log(1/\delta)}{n}+\sqrt{\frac{r_1\log(1/\delta)}{m}}+\frac{\log(1/\delta)}{m}}\\
    &\qquad+\sup_{h'}\frac{t_0^2}{4}\pr{\mathrm{Var}_{\hat{\mu}}\pr{g(X)}-\mathrm{Var}_{\mu}\pr{g(X)}}.
\end{align*}

Notice that
\begin{align*}
    &\sup_{h'}\frac{t_0^2}{4}\pr{\mathrm{Var}_{\hat{\mu}}\pr{g(X)}-\mathrm{Var}_{\mu}\pr{g(X)}}\\
    =&\sup_{h'}\frac{t_0^2}{4}\pr{\ex{\hat{\mu}}{g^2(X)}-\ex{{\mu}}{g^2(X)}+\pr{\ex{{\mu}}{g(X)}}^2-\pr{\ex{\hat{\mu}}{g(X)}}^2}\\
    =&\sup_{h'}\frac{t_0^2}{4}\pr{\ex{\hat{\mu}}{g^2(X)}-\ex{{\mu}}{g^2(X)}+\pr{\ex{{\mu}}{g(X)}-\ex{\hat{\mu}}{g(X)}}\pr{\ex{{\mu}}{g(X)}+\ex{\hat{\mu}}{g(X)}}}\\
    \leq&\sup_{h'}\frac{t_0^2}{4}\ex{\hat{\mu}}{g^2(X)}-\ex{{\mu}}{g^2(X)}+\sup_{h'}\frac{t_0^2}{2}\pr{\ex{{\mu}}{g(X)}-\ex{\hat{\mu}}{g(X)}}\\
    \leq &{t_0^2}\hat{\mathfrak{R}}_{\mathcal{S}}(\mathcal{H}_{r})+\mathcal{O}\pr{\sqrt{\frac{r\log(1/\delta)}{n}}+\frac{\log(1/\delta)}{n}},
\end{align*}
where we use the fact that $g^2$ is $2$-Lipschitz in $g\in[0,1]$.

Notice that as mentioned in Remark~\ref{rem:chi-t-close}, $t_0=\frac{2\pr{\ex{\nu}{\ell(h,h'^{*})}-\ex{\mu}{\ell(h,h'^{*})}}}{\mathrm{Var}_\mu\pr{\ell(h,h'^{*})}}$. Putting everything together and following the last several steps as in Theorem~\ref{thm:lfdd-fast-bound} will complete the proof.
\end{proof}

\subsection{Threshold Learning Example}
\label{sec:threshold learning}
Consider a popular threshold learning example. 

\begin{exam}[Threshold Learning]
    Let $\mathcal{X}=\mathbb{R}$. A threshold function, denoted as $h_c$, is defined s.t. $h_c(x)=0$ if $x<c$ and $h_c(x)=1$ if $x\geq c$. Let the source domain $\mu$ be an uniform distribution on $[0,1]$ and let the target domain $\nu$ be another uniform distribution on $[0,2]$. Let $\mathcal{H}=\{h_c|c\in[0,\frac{1}{2}]\}$ and assume the ground-truth hypothesis is $h^*_{\frac{1}{2}}$ for the both the source and target domains. Let $\ell$ be the zero-one loss.
\end{exam} 

Recall that $\mathrm{D}^{h,\mathcal{H}}_{\rm KL}(\nu||\mu)=\sup_{h',t} t\mathbb{E}_\nu[\ell(h,h')]-\log\mathbb{E}_\mu e^{t\ell(h,h')}$.
Hence, 
\[
\mathrm{D}^{h_{\frac{1}{2}},\mathcal{H}}_{\rm KL}(\nu||\mu)=\sup_t\int_0^{\frac{1}{2}}\frac{t}{2}\cdot 1 dx-\log\pr{\int_0^{\frac{1}{2}}{e^t} dx+\int_{\frac{1}{2}}^1{e^0} dx}=\sup_t\frac{t}{4}-\log\pr{\frac{1}{2}+\frac{e^t}{2}}=0.131.
\]
Now let $r=1/4$, so $\mathcal{H}_r=\{h_c|c\in[\frac{1}{4},\frac{1}{2}]\}$, by a similar calculation we have 
\[
\mathrm{D}^{h_{\frac{1}{2}},\mathcal{H}_{\frac{1}{4}}}_{\rm KL}(\nu||\mu)=\sup_t\frac{t}{8}-\log(\frac{3}{4}+\frac{e^t}{4})=0.048 \text{ and } R^r_\mu=\frac{1}{4}.
\]
 Based on Remark~\ref{rem:discuss-local}, we can set $C_2=0.1$ and $C_1=3.74$. Hence, we have
 \[
 0.27\cdot\mathrm{D}^{h_{\frac{1}{2}},\mathcal{H}_{\frac{1}{4}}}_{\rm KL}(\nu||\mu)+0.1\cdot R^r_\mu=0.038\leq \sqrt{\mathrm{D}^{h_{\frac{1}{2}},\mathcal{H}}_{\rm KL}(\nu||\mu)}=0.36.
 \]
 Since $h^*_{\frac{1}{2}}\in\mathcal{H}_r\subset\mathcal{H}$, we have $\lambda^*=\lambda^*_r=0$ in this case. This example justifies that the localization technique can significantly tighten the bound. Note that $C_2=0.1$ and $C_1=3.74$ are not the optimal choice so localized bound can be even tighter.

In addition, as a more extreme case, let $r\to 0$ so $\mathcal{H}_r=\{h_{\frac{1}{2}}\}$. Then by a similar calculation, we have $\mathrm{D}^{h_{\frac{1}{2}},\mathcal{H}_{0}}_{\rm KL}(\nu||\mu)=R^r_\mu=0$, while $\mathrm{D}^{h_{\frac{1}{2}},\mathcal{H}}_{\rm KL}(\nu||\mu)$ remains unchanged. Notice that $R_\nu(h_{\frac{1}{2}})-R_\mu(h_{\frac{1}{2}})=0$ so localized $f$-DD gives a tightest target error bound in this case.

\subsection{Proof of Proposition~\ref{lem:opt-solution}}
\optsolution*

\begin{proof}
Since $\hat{\ell}:\mathcal{X}\to \mathbb{R}$,  $t{\ell}:\mathcal{X}\to \mathbb{R}$ and $t{\ell}+\alpha:\mathcal{X}\to \mathbb{R}$, then it is straightforward to have $\max_{h'}\tilde{d}_{\hat{\mu},\hat{\nu}}(h,h')=\max_{h'}d_{\hat{\mu},\hat{\nu}}(h,h')=\mathrm{D}_\phi^{h,\mathcal{H}'}(\hat{\mu}||\hat{\nu})$. Additionally, as 
$t\ell\circ\mathcal{H'}\subseteq \mathcal{G}$, they are upper bounded by $\mathrm{D}_\phi(\hat{\mu}||\hat{\nu})$.    
\end{proof}

\section{More Experiment Details and Additional Results}
\label{sec:more-experiments}

We adopt the experimental setup from \cite{acuna2021f} and build upon their publicly available code, accessible at \href{https://github.com/nv-tlabs/fDAL/tree/main}{https://github.com/nv-tlabs/fDAL/tree/main}. Additionally, we leverage some settings from the implementations of \cite{zhang2019bridging} and \cite{long2018conditional}, which can be found at \href{https://github.com/thuml/MDD/tree/master}{https://github.com/thuml/MDD/tree/master} and \href{https://github.com/thuml/CDAN}{https://github.com/thuml/CDAN}, respectively. Our code is available at \href{https://github.com/ZiqiaoWangGeothe/f-DD}{https://github.com/ZiqiaoWangGeothe/f-DD}.

\begin{figure}[ht]
    \centering
          \includegraphics[scale=0.35]{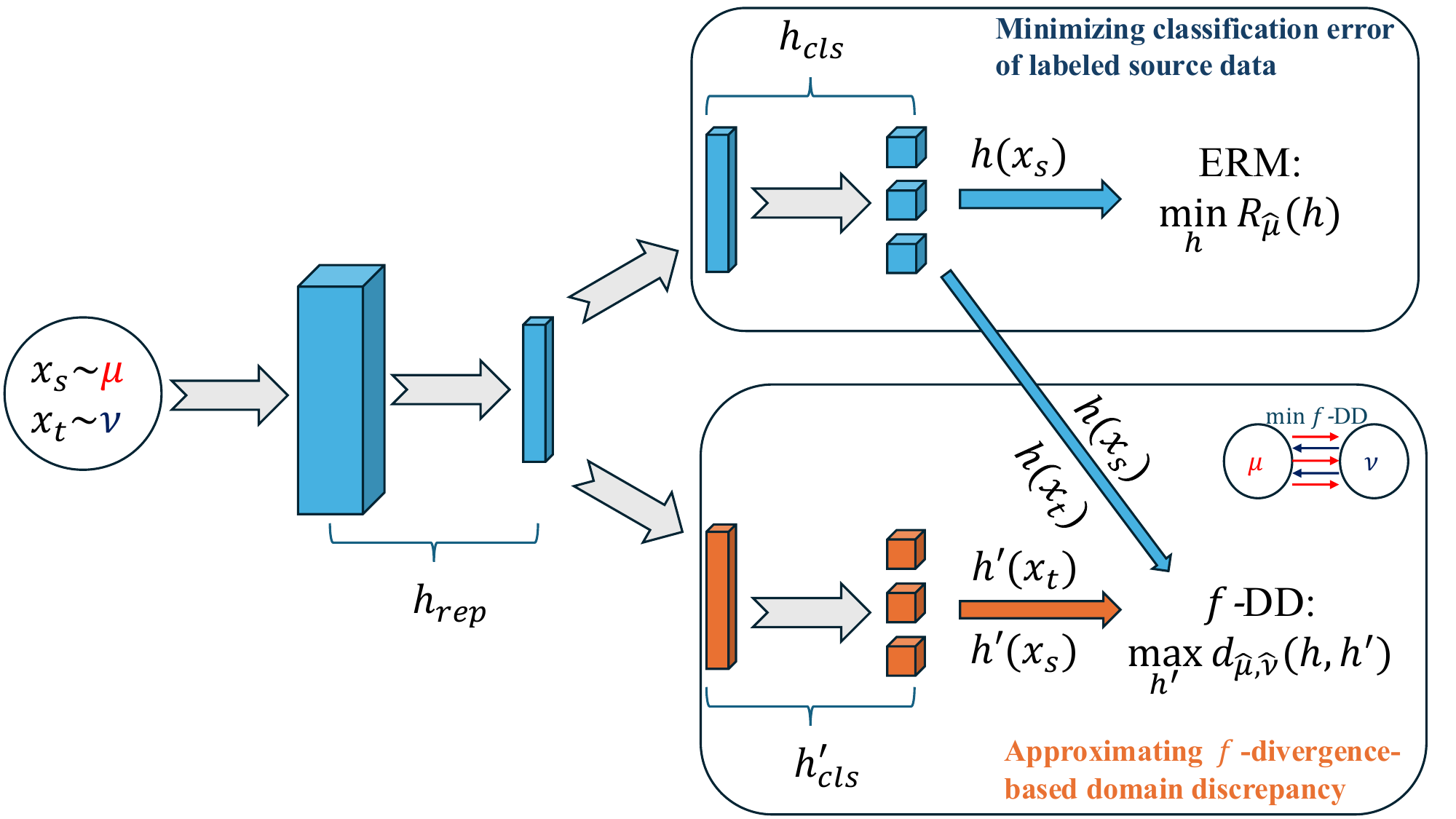}
            \caption{Illustration of the adversarial training framework for $f$-DD-based UDA. The framework includes the representation network ($h_{\rm rep}$), the main classifier ($h_{\rm cls}$), and the auxiliary classification network ($h'_{\rm cls}$). It jointly minimizes the empirical risk on the source domain and the approximated $f$-DD between the source and target domains.}
    \label{fig:overview}
\end{figure}

Specifically, for experiments on Office-31 and Office-Home, we utilize a pretrained ResNet-50 backbone on ImageNet \cite{deng2009imagenet}. Our $f$-DD is trained for 40 epochs using SGD with Nesterov Momentum, setting the momentum to 0.9, the learning rate to 0.004, and the batch size to 32. Hyperparameter settings and training protocols closely align with \cite{zhang2019bridging} and \cite{long2018conditional}. Particularly, on Office-31, we vary the trade-off parameter $\eta$ for our KL-DD within $[3, 4.5, 5.75]$, and for our $\chi^2$-DD within $[1, 1.75, 2]$. For Jeffreys-DD, we choose $\eta\gamma_1, \eta\gamma_2$ from $[0, 1, 1.2, 3, 3.75, 4.5, 5, 5.75]$. On Office-Home, $\eta$ for KL-DD is chosen from $[3, 3.75, 4.5]$, $\eta$ for $\chi^2$-DD from $[3, 3.75]$, and $\eta\gamma_1, \eta\gamma_2$ for Jeffreys-DD from $[0, 1, 1.2, 3, 3.75, 4.5]$.
For the Digits datasets, we train our $f$-DD for 30 epochs with SGD and Nesterov Momentum, setting the momentum to 0.9 and the learning rate to 0.01. The batch size is set to 128 for \textbf{M}$\to$\textbf{U} and 64 for \textbf{U}$\to$\textbf{M}. Other hyperparameters closely follow those used by \cite{acuna2021f} and \cite{long2018conditional}. The trade-off parameter $\eta$ for both KL-DD and $\chi^2$-DD is selected from $[0.75, 1]$, and $\eta\gamma_1, \eta\gamma_2$ for Jeffreys-DD from $[0, 0.01, 0.5, 0.55]$. All experiments are conducted on NVIDIA V100 (32GB) GPUs.

\paragraph{Comparison with $f$-DAL + Implicit Alignment}
\cite{acuna2021f} also investigate the empirical performance of combining $f$-DAL with a sampling-based implicit alignment technique from \cite{jiang2020implicit}, enhancing their original $f$-DAL. However, our findings reveal that our $f$-DD consistently outperforms $f$-DAL with implicit alignment, as detailed in Table~\ref{tbl:compare-f-dal-dd}. This observation suggests that the performance gain can be achieved by simply adopting a more tightly defined variational representation.
Furthermore, in Table~\ref{tbl:compare-f-dal-dd}, the entry labeled $f$-DD (Best) corresponds to the average of the maximum accuracy across KL-DD, $\chi^2$-DD, and Jeffereys-DD for each subtask. Notably, this aggregated result shows slight improvement on Office-31 when compared to the individual performance of Jeffereys-DD.
\begin{table}[h!]
  \centering
  \caption{Comparison between $f$-DD and $f$-DAL}
    
    \begin{tabular}{ccc}
    \toprule
     Method     & Office-31  & Office-Home \\
      \midrule
      $f$-DAL  & 89.5  & 68.5  \\
      $f$-DAL+Imp. Align.  & 89.2  &70.0  \\
   \midrule
    Jeffereys-DD  & 90.1  & \textbf{70.2}  \\
    $f$-DD (Best) & \textbf{90.3} & \textbf{70.2} \\
    \bottomrule
    \end{tabular}%
  \label{tbl:compare-f-dal-dd}%
\end{table}%

\paragraph{Ablation Study}
We adjust the trade-off hyper-parameter $\eta$ in KL-DD and present the outcomes for Office-31 and Office-Home. As depicted in Table~\ref{tbl:ablation-office31} and Table~\ref{tbl:ablation-officehome}, we can see that the performance exhibits relatively low sensitivity to changes in $\eta$. Hence, the tuning process need not be overly meticulous.

\begin{table}[h!]
  \centering
  \caption{Ablation Study on Office-31 for KL-DD}
    
    \begin{tabular}{ccccc}
    \toprule
     $\eta$        & 3.75 & 4.5 & 5.5 & 5.75 \\
      \midrule
      \textbf{A}$\to$\textbf{D}    & 95.5$\pm$0.7 &95.4$\pm$0.4  &95.4$\pm$0.7 & 95.9$\pm$0.6\\
      \textbf{W}$\to$\textbf{A}  & 74.5$\pm$0.1 &74.6$\pm$0.7  &74.6$\pm$0.4 & 74.5$\pm$0.5\\
    \bottomrule
    \end{tabular}%
  \label{tbl:ablation-office31}%
\end{table}%

\begin{table}[h!]
  \centering
  \caption{Ablation Study on Office-Home for KL-DD}
    
    \begin{tabular}{cccc}
    \toprule
     $\eta$        &3 & 3.75 & 4.5 \\
      \midrule
      \textbf{Ar}$\to$\textbf{Cl}  &55.3$\pm$0.4 &54.9$\pm$0.2 & 55.3$\pm$0.1 \\
      \textbf{Pr}$\to$\textbf{Rw}   &80.7$\pm$0.1 &80.8$\pm$0.2 & 80.9$\pm$0.1\\
    \bottomrule
    \end{tabular}%
  \label{tbl:ablation-officehome}%
\end{table}%

\paragraph{Additional Results for Absolute Divergence}
In addition to Figure~\ref{fig:abs-fail}, we present the performance of the absolute KL discrepancy in Figure~\ref{fig:vis-absolute}(a-b). The consistent observations persist, indicating that the absolute version of the discrepancy tends to overestimate the $f$-divergence, leading to a breakdown in the training process.

\begin{figure}[htbp]
    \centering
       \begin{subfigure}[t]{0.24\columnwidth}
           \centering
          \includegraphics[scale=0.21]{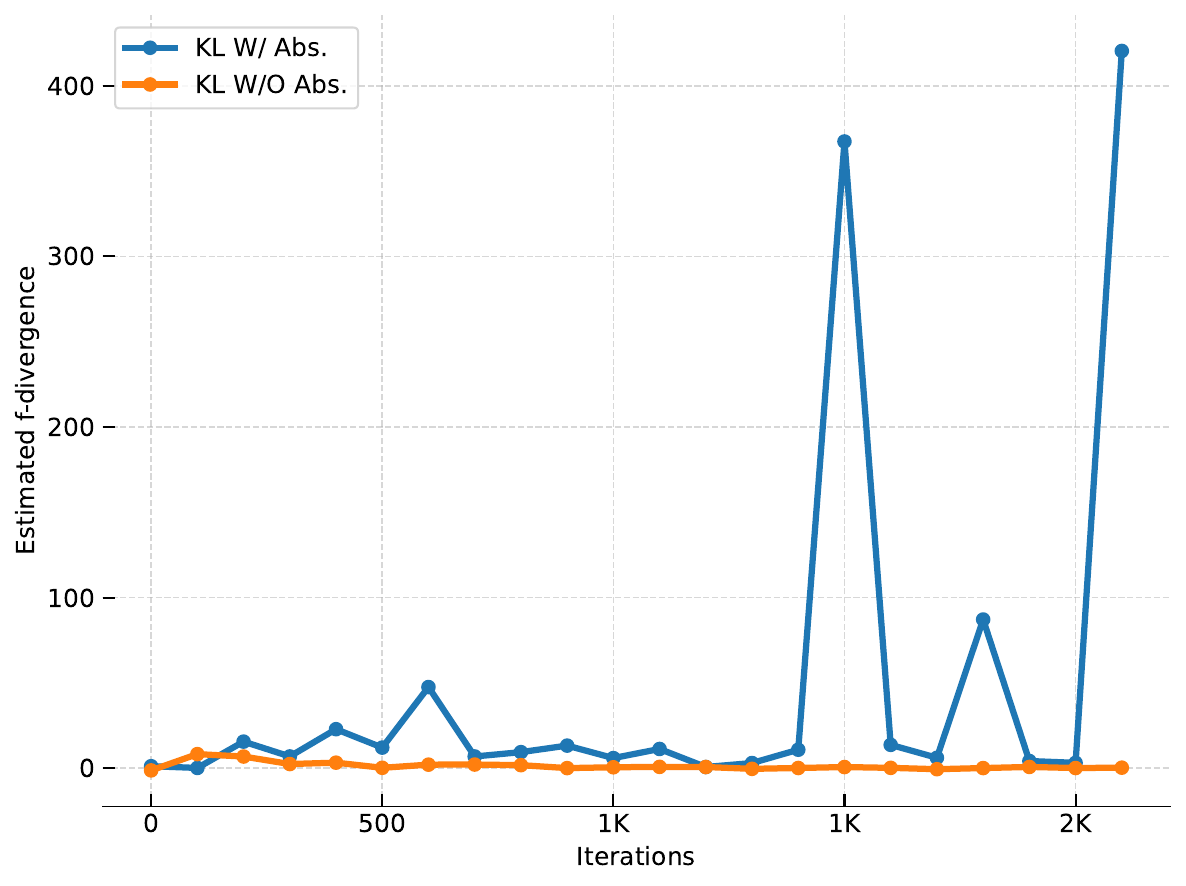}
            \caption{KL (Office-31)}
            \label{fig:a}
    \end{subfigure}
    \begin{subfigure}[t]{0.24\columnwidth}
            \centering
            \includegraphics[scale=0.21]{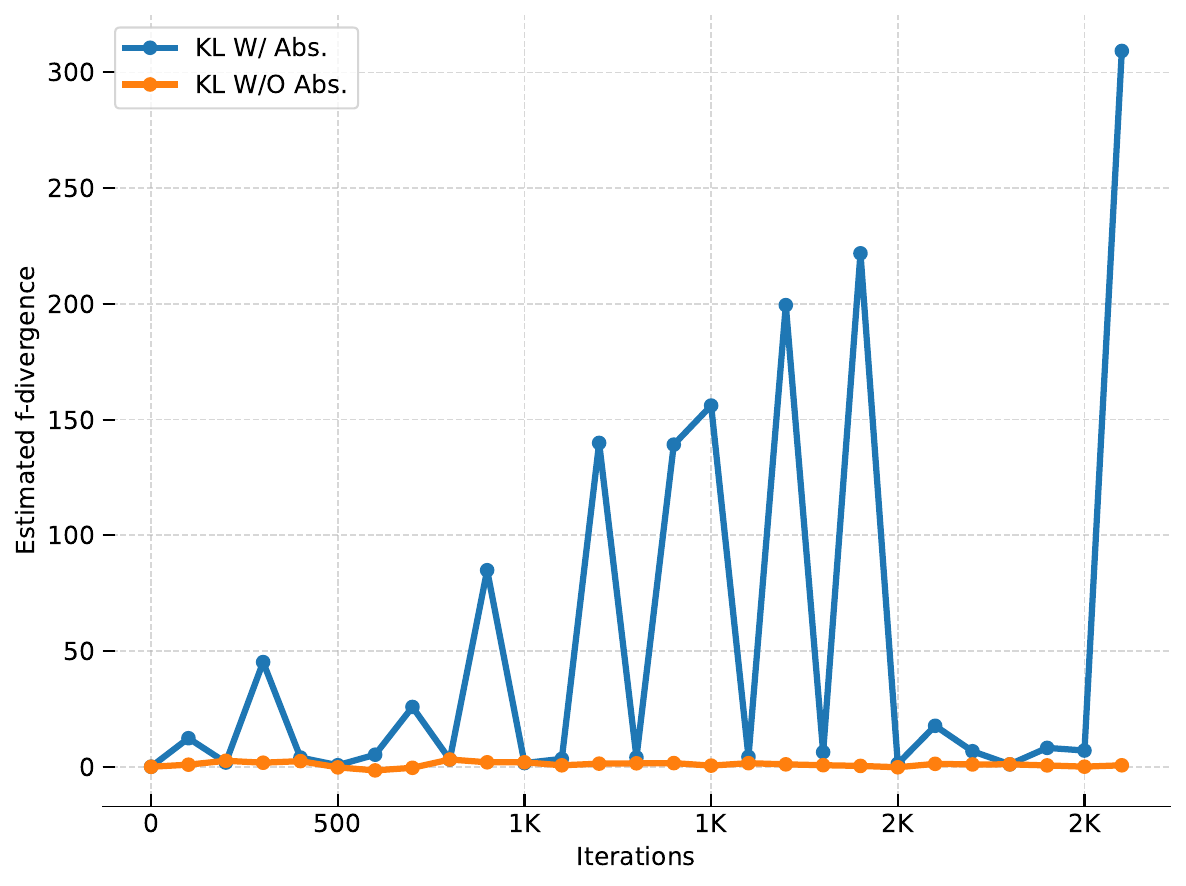}
            \caption{KL (Office-Home)}
            \label{fig:b}
    \end{subfigure}
\begin{subfigure}[t]{0.24\columnwidth}
            \centering
            \includegraphics[scale=0.21]{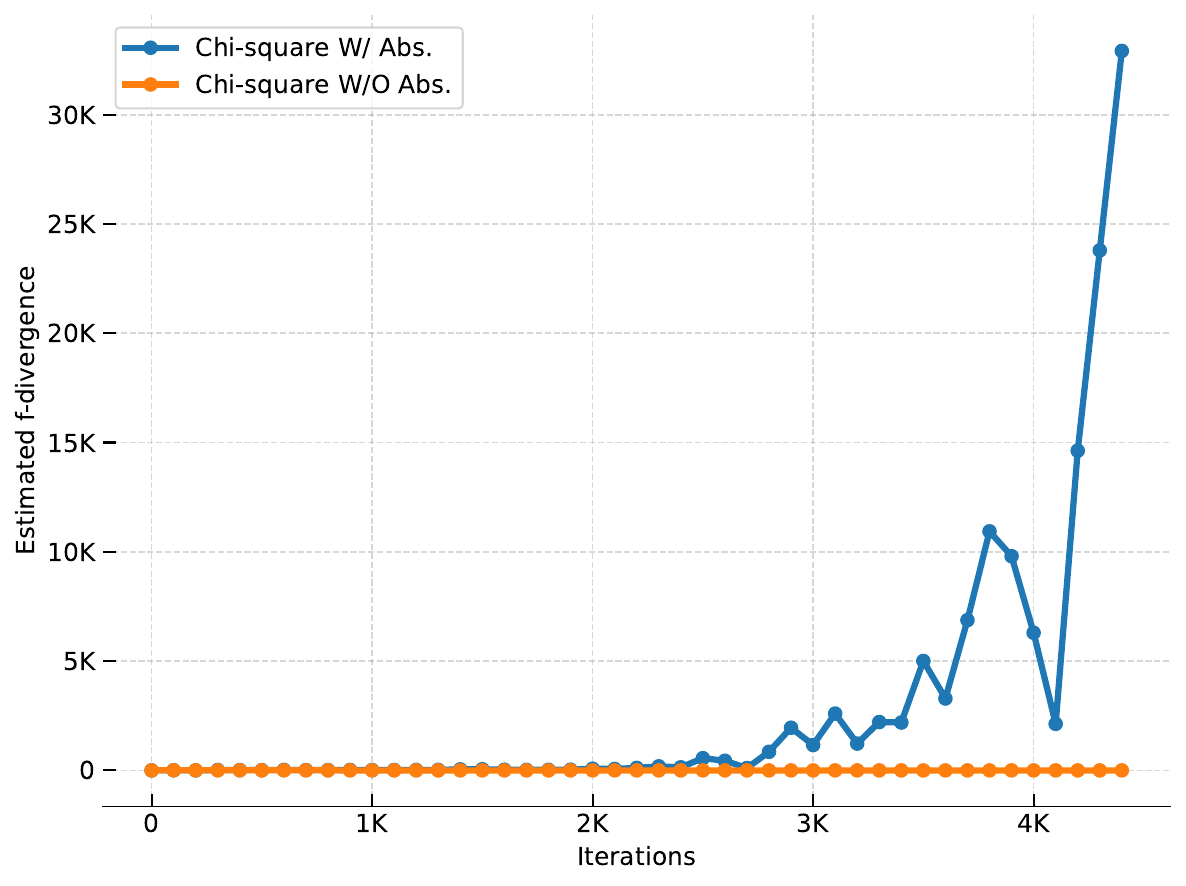}
            \caption{$\chi^2$ (Office-31)}
            \label{fig:c}
    \end{subfigure}
    \begin{subfigure}[t]{0.24\columnwidth}
            \centering
            \includegraphics[scale=0.21]{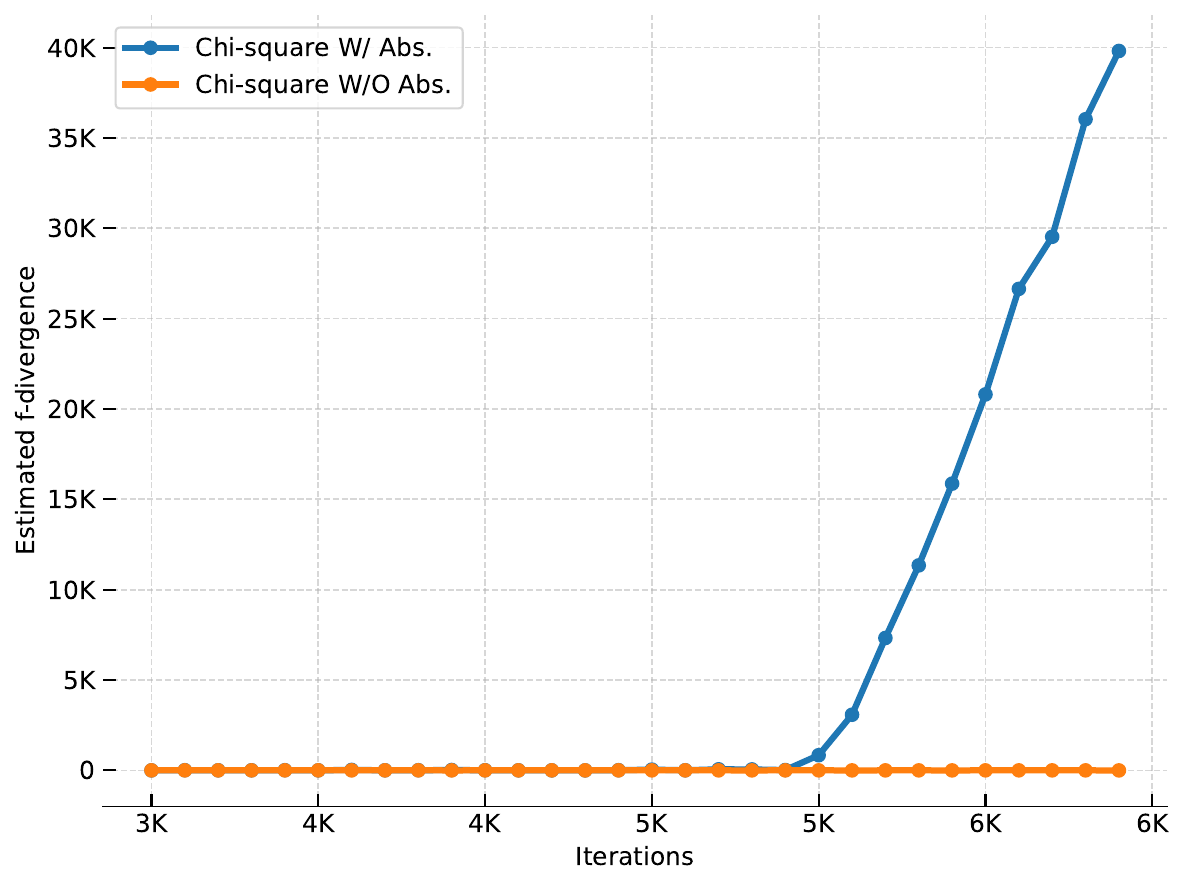}
            \caption{$\chi^2$ (Office-Home)}
            \label{fig:d}
    \end{subfigure}
    \caption{Comparison between $\mathrm{D}_{\phi}^{h,\mathcal{H}}$ and $\widetilde{\mathrm{D}}_{\phi}^{h,\mathcal{H}}$. The $y$-axis is the estimated corresponding $f$-divergence and the $x$-axis is the number of iterations.}
    \label{fig:vis-absolute}
\end{figure}

\begin{table}[h!]
  \centering
  \caption{Comparison between $\chi^2$-DD and Opt$\chi^2$-DD}
    
    \begin{tabular}{ccc}
    \toprule
     Method     & \textbf{A}$\to$\textbf{D}  & \textbf{Ar}$\to$\textbf{Cl} \\
      \midrule
   
    $\chi^2$-DD  & 95.0$\pm$0.4  & 55.2$\pm$0.3  \\
    Opt$\chi^2$-DD  & 93.1$\pm$0.3 &  53.9$\pm$0.3 \\
    \bottomrule
    \end{tabular}%
  \label{tbl:compare-chi}%
\end{table}%

\paragraph{Details and Additional Results for Optimal $f$-DD}
As outlined in Section~\ref{sec:experiments}, instead of invoking a stochastic gradient-based optimizer (e.g., SGD) to update $t$, we utilize a quadratic approximation for the optimal $t$ as presented in \cite{birrell2022optimizing}. The approximately optimal KL-DD (OptKL-DD) is expressed as follows:
\[
\mathrm{D}_{\rm KL}^{h,\mathcal{H}}(\nu||\mu)=\sup_{h'}\pr{1+\Delta t^*} \ex{\nu}{\ell(h,h')}-\log\ex{\mu}{e^{\pr{1+\Delta t^*}\ell(h,h')}}.
\]
Here, a Gibbs measure is defined as  $d\mu'\triangleq\frac{e^{{\ell}(h,h')}d\mu}{\ex{\mu}{e^{{\ell}(h,h')}}}$, and $\Delta t^*$ is determined by the formula
\[
\Delta t^*=\frac{\ex{\nu}{{\ell}(h,h')}-\ex{\mu'}{\ell(h,h')}}{\mathrm{Var}_{\mu'}(\ell(h,h'))}.
\]
 This approximation is obtained by \cite{birrell2022optimizing} through a Taylor expansion around $t=1$, with additional details provided in their Appendix~B. In our implementation of OptKL-DD, similar to KL-DD, we use $\hat{\ell}$ to replace $\ell$.

 \begin{figure}[htbp]
    \centering
       \begin{subfigure}[t]{0.245\columnwidth}
           \centering
          \includegraphics[scale=0.18]{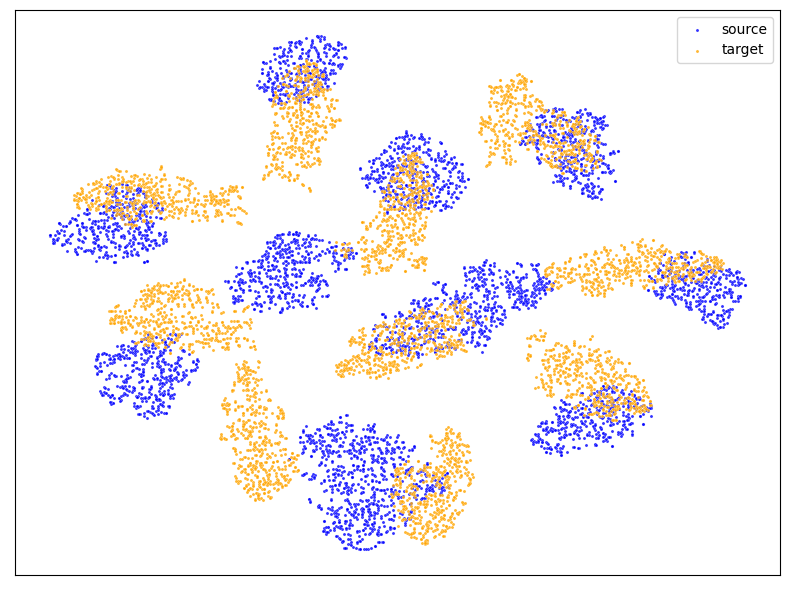}
            \caption{$\chi^2$ in $f$-DAL}
            \label{fig:pearson-dd-tsne}
    \end{subfigure}
    \hfill
    \begin{subfigure}[t]{0.245\columnwidth}
            \centering
            \includegraphics[scale=0.18]{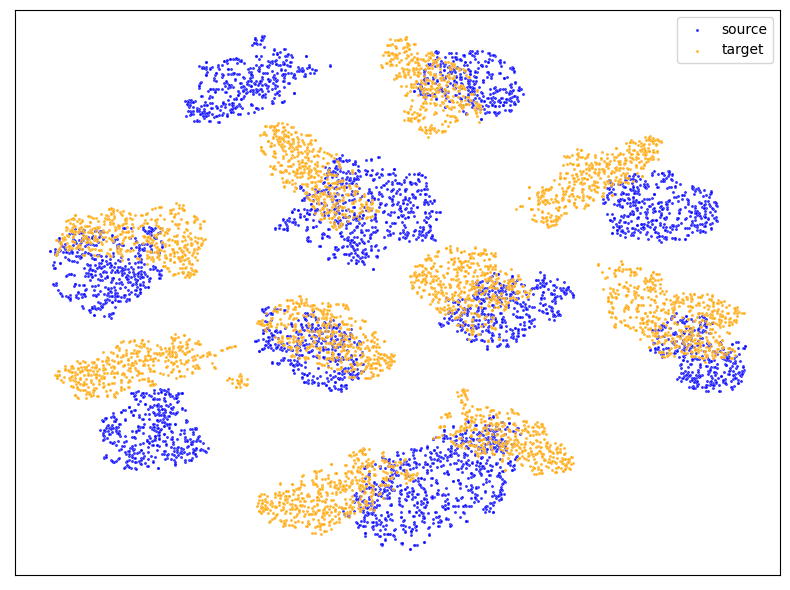}
            \caption{$\chi^2$ in $f$-DD}
            \label{fig:chi-dd-tsne}
    \end{subfigure}
    \hfill
\begin{subfigure}[t]{0.245\columnwidth}
            \centering
            \includegraphics[scale=0.18]{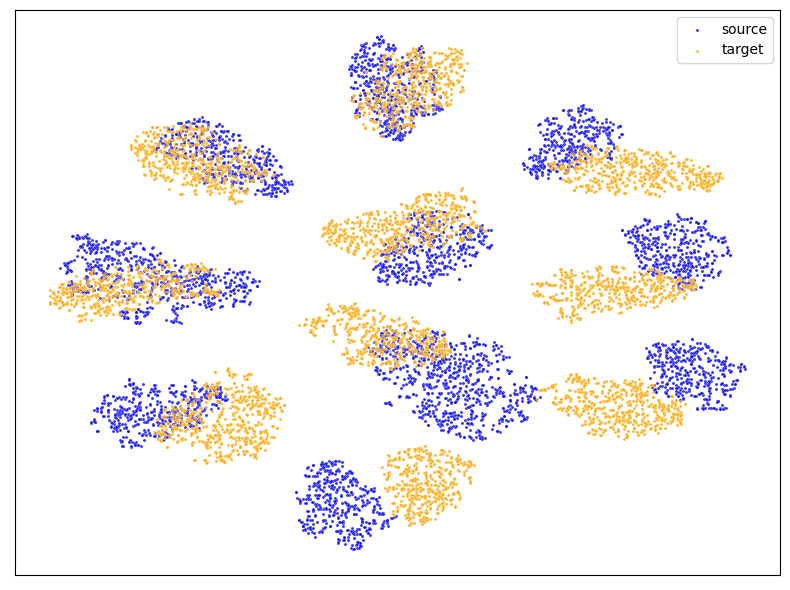}
            \caption{KL-DD}
            \label{fig:kl-dd-tsne}
    \end{subfigure}
    \hfill
    \begin{subfigure}[t]{0.245\columnwidth}
            \centering
            \includegraphics[scale=0.18]{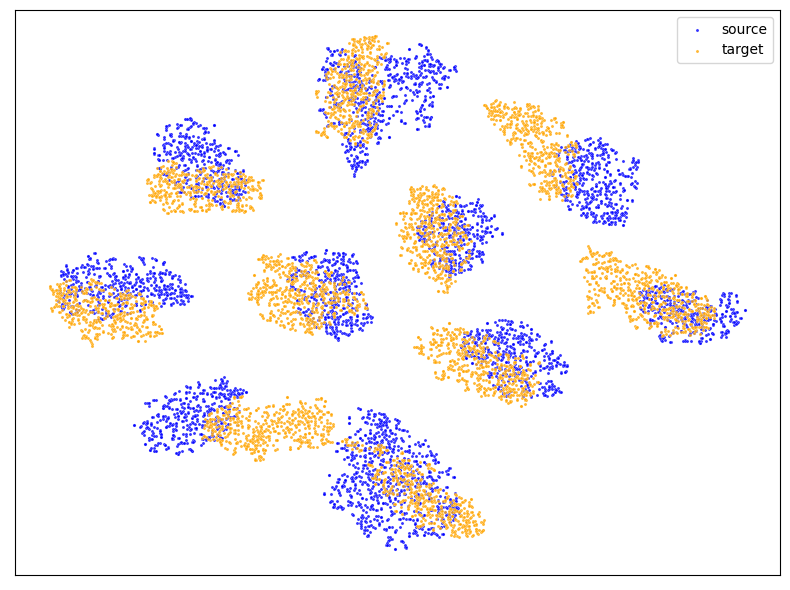}
            \caption{Jeffereys-DD}
            \label{fig:jef-dd-tsne}
    \end{subfigure}
    \caption{Visualization results of representations obtained by using t-SNE. The source domain (blue
points) is \textbf{U} and the target domain (orange points) is \textbf{M}.}
    \label{fig:tsne}
\end{figure}
 For the optimal $\chi^2$-DD (Opt$\chi^2$-DD), we have its analytic form, as shown in Appendix~\ref{sec:other-divergence}:
 \[
 \mathrm{D}_{\chi^2}^{h,\mathcal{H}}(\nu||\mu)=\sup_{h'} \frac{\pr{\ex{\nu}{\ell(h,h')}-\ex{\mu}{\ell(h,h')}}^2}{\mathrm{Var}_{\mu}\pr{\ell(h,h')}}.
 \]

 By replacing $\ell$ by $\hat{\ell}$, we utilize the above as the training objective in our implementation. Table~\ref{tbl:compare-chi} illustrates that the optimal form does not improve performance in two sub-tasks, namely \textbf{A}$\to$\textbf{D} and \textbf{Ar}$\to$\textbf{Cl}, and in fact, may even degrade performance. Consequently, when the surrogate loss itself is unbounded, optimizing over $t$ may not be necessary, at least for $\chi^2$-DD and KL-DD.

\paragraph{Visualization Results}
To visualize model representations (output of $h_{\rm rep}$) trained with $f$-DAL, $\chi^2$-DD, KL-DD, and Jeffereys-DD, we leverage t-SNE \cite{van2008visualizing}. In Figure~\ref{fig:tsne}, we present visualization results using USPS (\textbf{U}) as the source domain and MNIST (\textbf{M}) as the target domain. Notably, while the original $f$-DAL already provides satisfactory results, our $f$-DD achieves further improvements in representation alignment.
\paragraph{More Advance Network Architecture}
While our empirical study primarily aims to compare with \cite{acuna2021f} and improve their algorithms, in this section, we also evaluate our $f$-DD on more complicated models, which can serve as a valuable reference for practitioners.

In particular, we update our original backbone, ResNet-50, with pretrained transformer-based models. Specifically, we use the pretrained Vision transformer (ViT) base model (vit-base-patch16-224) \cite{dosovitskiy2021an} and pretrained Swin-based transformer (swin-base-patch4-window7-224) \cite{liu2021swin}, while keeping all other settings in our algorithms unchanged. The results on Office-31 are presented in Table~\ref{tbl:sota_office31_vit}.

\begin{table}[ht] %
	\addtolength{\tabcolsep}{2pt}
	\centering
	\vspace{-3mm}  
	\centering\caption{Accuracy (\%) 
 on the {Office-31} benchmark.} 
	\label{tbl:sota_office31_vit}
  \vspace{-0.5mm}
	\resizebox{\textwidth}{!}{%
		\begin{tabular}{lccccccc}
			\toprule
			Method                          & A $\rightarrow$ W     & D $\rightarrow$ W     & W $\rightarrow$ D     & A $\rightarrow$ D     & D $\rightarrow$ A     & W $\rightarrow$ A     & Avg           \\
			\midrule
			ViT-based \cite{dosovitskiy2021an}   & 91.2 & 99.2 & 100.0 & 90.4 & 81.1 & 80.6 & 91.1        \\
			SSRT-ViT \cite{sun2022safe}   & 97.7 & 99.2 & 100.0 & 98.6 & 83.5 & 82.2  & 93.5  \\

			 PMTrans-ViT \cite{zhu2023patch} & 99.1 & 99.6   & 100.0   & 99.4   & 85.7   & 86.3   & \textbf{95.0}    \\
       \midrule
       Ours-ViT & 98.0 $\pm$ 0.2  & 99.2 $\pm$ 0.0 & 100 $\pm$ 0.0 & 98.6 $\pm$ 0.2  & 83.5 $\pm$ 0.5  & 83.9 $\pm$ 0.4  & \underline{93.9}    \\
			\midrule
   Swin-based \cite{liu2021swin} & 97.0  & 99.2 & 100.0 & 95.8 & 82.4 & 81.8   &  92.7 \\
 PMTrans-Swin \cite{zhu2023patch} & 99.5 & 99.4 & 100.0 & 99.8 & 86.7 & 86.5  & \textbf{95.3} \\
 \midrule
Ours-Swim & 98.7 $\pm$ 0.2 &  99.2 $\pm$ 0.0 & 100 $\pm$ 0.0 & 99.2 $\pm$ 0.2 & 86.1 $\pm$ 0.1 & 85.6 $\pm$ 0.3 & \underline{94.8} \\
			\bottomrule
		\end{tabular}		
}
\end{table}

Results on models other than ours are taken directly from \cite[Table~2]{zhu2023patch}. Here ViT-based and Swin-based refer to source-only training using ViT and Swin transformers, respectively, and ``Ours'' particularly refers to our weighted-Jeffereys discrepancy. While our method does not outperform the current SOTA, PMTrans, (especially for ViT), but close to it, this should demonstrate the practical utility of the proposed approach. Notably, our approach not only significantly improves the respective backbones but also outperforms the strongest baseline, SSRT-ViT \cite{sun2022safe}, as compared in \cite{zhu2023patch}.

The fact the our approach does not outperform PMTrans should come at no surprise. In particular, PMTrans and all other compared models all include  while our approach not only neglected these ingredients but also simply use hyperparameter based on our earlier ResNet-50 settings without further tuning.

One notable difference between our method and other SOTA approaches is the absence of any pseudo-labeling strategy in our framework. Specifically, our theoretical framework is constructed under the assumption of zero knowledge of target labels. In a strategy that successfully exploits pseudo-labelling, usually explicit or implicit prior knowledge are available (e.g., for the purpose of selecting hyperparameters relating to pseudo labeling). For example, such knowledge may present itself as a labeled validation set in the target domain. Such knowledge,  if available, may change our problem setup and require a refinement of our theory. We anticipate that combining our method with advanced pseudo-labeling strategies, data augmentation techniques like Mixup (which itself inherently relies on pseudo-labels), and label smoothing will improve our results, possibly approaching the current SOTA performance.

\end{appendices}


\end{document}